\newcommand{\IND}[1]{\mathbbm{1}_{\llbracket{#1}\rrbracket}}
\newcommand{\Algo}[1]{\textsc{#1}}
\newcommand{\prob}{\mathbb{P}} 
\newcommand{\exptd}{\mathbb{E}}
\newcommand{\argmax}{\operatornamewithlimits{argmax}}
\newcommand\R{\mathbb{R}}   
\newcommand\N{\mathbb{N}}   
\newcommand{\KL}{\mathrm{KL}}
\newcommand{\tern}{P^{(i,j)}}
\newcommand{\terntilde}{\widetilde{P}^{(i,j)}}
\theoremstyle{plain}
\newtheorem{theorem}{Theorem}[section]
\newtheorem{lemma}[theorem]{Lemma}
\newtheorem{corollary}[theorem]{Corollary}
\theoremstyle{definition}
\newtheorem{definition}[theorem]{Definition}
\theoremstyle{remark}
\title{Identifying Copeland Winners in Dueling Bandits with Indifferences}
\author{
    Viktor Bengs$^{b,c},$ Bj\"orn Haddenhorst$^{a},$
	Eyke H\"ullermeier${}^{b,c}$\\
	${}^{a}$Department of Computer Science, Paderborn University, Germany\\
	${}^{b}$Institute of Informatics, University of Munich (LMU), Germany\\
	${}^{c}$Munich Center for Machine Learning, Germany\\
	\texttt{viktor.bengs@lmu.de, bjoernha@mail.upb.de, eyke@lmu.de} 
}
\begin{document}
\maketitle

\begin{abstract}
	We consider the task of identifying the Copeland winner(s) in a dueling bandits problem with ternary feedback.
	This is an underexplored but practically relevant variant of the conventional dueling bandits problem, in which, in addition to strict preference between two arms, one may observe feedback in the form of an indifference.
	We provide a lower bound on the sample complexity for any learning algorithm finding the Copeland winner(s) with a fixed error probability.
	Moreover, we propose \Algo{POCOWISTA}, an algorithm with a sample complexity that almost matches this lower bound, and which shows excellent empirical performance, even for the conventional dueling bandits problem.
	For the case where the preference probabilities satisfy a specific type of stochastic transitivity, we provide a refined version with an improved worst case sample complexity.
\end{abstract}

\section{INTRODUCTION}
Dueling bandits \citep{YuJo09} or the more general problem class of preference-based bandits \citep{bengs2021preference} is a practically relevant variant of the standard reward-based multi-armed bandits \citep{LaSz18}, in which a learner seeks to find in a sequential decision making process an optimal arm (choice alternative) by selecting two (or more) arms as its action and obtaining feedback in the form of a noisy preference over the selected arms.
The setting is motivated by a broad range of applications, where no numerical rewards for the actions are obtained and only comparisons of arms (choice alternatives) are possible as actions.
In information retrieval systems with human preference judgments \citep{clarke2021assessing}, for example, humans choose their most preferred choice alternative among the two (or more) retrieved choice alternatives (e.g., text passages, movies, etc.).
Another example is the analysis of voting behavior, in which voters express their preferences over pairs of political parties or candidates \citep{brady1989nature}.
The rationale for these types of applications is that humans are generally better at coherently expressing their preference for two choice alternatives than at reliably assessing those two on a numerical scale \citep{CarteretteBCD08,li2021roial}.

Even though there is a large body of literature on dueling or preference-based bandits (see \cite{SuZoHoYu18,bengs2021preference}) covering various variants or aspects of the initial setting, little attention has been paid to the variant, in which a learner might observe an indifference between the selected arms for comparison as the explicit feedback.
In practice, however, this type of feedback is quite common, especially when the preference feedback is provided by a human.
In the human preference judgment example above, the human might be indifferent between the two retrieved choice alternatives, as the two are considered equally good/mediocre/bad, so neither is chosen.
Similarly, voters might be indifferent between two political parties or candidates, or two athletes resp.\ sport teams competing against each other might draw.

In several areas of preference-based learning, appropriate extensions of models or methods have been considered to appropriately incorporate such indifferences. Notable examples are the recent extensions of established probabilistic ranking models \citep{firth2019davidson,DBLP:journals/cstat/TurnerEFK20,henderson2022modelling}, the field of partial label ranking \citep{DBLP:journals/ijis/AlfaroAG21,DBLP:journals/entropy/RodrigoAAG21,DBLP:conf/pgm/AlfaroA022,DBLP:journals/ijar/AlfaroAG23,DBLP:journals/pr/AlfaroAG23}, an extension of the established label ranking problem, or preference-based Bayesian optimization \citep{dewancker2018sequential,nguyen2021top}.
However, the field of preference-based bandits lags behind in this regard, as the only work appears to be \cite{GaUrCl15} that considers an adversarial learning scenario for regret minimization.

Motivated by this gap in the literature on preference-based bandits, we consider the \emph{stochastic} dueling bandits problem with indifferences (or ternary feedback) for the task of identifying an optimal arm as quickly as possible, i.e., with as few queried feedback observations as possible.
Similarly as in the conventional dueling bandits problem with binary feedback (i.e., strict preferences) specifying the notion of optimality of an arm raises some issues.
Indeed, the most natural notion of an optimal arm would be an arm, which is non-dominated by any other arm in terms of the probability of being strictly preferred or indifferent.
This notion corresponds to the Condorcet winner (CW) in the conventional setting, where the probability of observing indifferences is zero.
However, it is well known that such an arm may not exist in general in the conventional setting, an issue obviously shared by the adopted CW for our considered setting.
In contrast to the conventional CW, the adopted CW does not even guarantee uniqueness of the optimal arm\footnote{As an example, consider the case of three arms, where each arm has a probability of 1/2 of being preferred over or indifferent to another arm, respectively.}.

The non-existence issue of the CW has led several authors to consider alternative notions for the optimality of arms guaranteed to exist in any case.
Most of them have their roots in tournament solutions used in social choice and voting theory \citep{brandt2015bounds,brandt16tournaments} or game theory \cite{owen2013game}.
One popular alternative is the Copeland set \citep{Co51} defined as the set of choice alternatives (arms) with the highest Copeland score.
In the absence of indifferent preferences, the Copeland score of a choice alternative is the number of choice alternatives it dominates in terms of the pairwise preference probability.
In settings like ours, i.e., where indifferences might be present, the Copeland score of a choice alternative is the sum of (i) the number of choice alternatives it \emph{strictly} dominates, and (ii) half the number of choice alternatives it is most likely indifferent to.
Again, both definitions coincide for the conventional setting, where the probability of observing indifference is zero.

As the term itself suggests, there can be several arms in the Copeland set, each of which is called a Copeland winner (COWI).
Despite this non-uniqueness issue, the main advantages of considering the Copeland set are that (i) it is guaranteed to exist and (ii) that it consists only of the Condorcet winner(s) in case of its (their) existence\footnote{In the setting with indifferences, there might be multiple Condorcet winners.}.
Moreover, the majority of alternative optimality notions from tournament solutions are in fact supersets of the Copeland set (see \cite{RaRaAg16}).

\paragraph{
	\textbf{Outline and Contributions}}
	\begin{enumerate}
  
\item \textbf{Introduction of the problem (Section \ref{subsec:related_work}):} We are the first to consider the problem of finding a Copeland winner in a dueling bandits problem with possible indifference observations. This problem variant is of practical relevance, especially for applications involving human feedback.

\item \textbf{Lower bounds (Section \ref{subsec:lower_bounds}):} We provide lower bounds on the sample complexity for any learning algorithm to find a Copeland winner with a fixed confidence in this problem variant. 
Our lower bounds imply as a special case a long-missing lower bound for the conventional dueling bandits problem with only strict preference observations.

\item \textbf{A practically useful (Section \ref{sec:experiments}) and near-optimal algorithm (Section \ref{sec:learning_algorithm}):} We construct a learning algorithm, \Algo{POCOWISTA}, which selects pairs of arms in a challenge-tournament-like fashion and exploits prior-posterior-ratio (PPR) martingale confidence sequences for determining the Copeland scores of the underlying arms in an asymptotically optimal way.
In numerical simulations, we demonstrate that it performs quite well even for conventional dueling bandits.

\item \textbf{Relaxing quadratic dependency (Section \ref{sec:transitivity}):} We show that in the case of an underlying transitivity of the preference relations, the update formula of \Algo{POCOWISTA} can be modified (called  \Algo{TRA-POCOWISTA}) such that the quadratic dependency w.r.t.\ the number of arms $n$ of the worst-case sample complexity can be relaxed to a log-linear dependency. 
\end{enumerate}

\section{PROBLEM FORMULATION} \label{sec:problem}

We consider a set $\mathcal{A}$ of $n\in \N_{\geq 2}$ available choice alternatives that we refer to as arms and simply identify them by their index: $\mathcal{A}=\{1,\ldots,n\}.$
The learning process consists of consequential rounds, in which the learner performs an action leading to some feedback for its action.
More precisely, the learner's action in round $t$ corresponds to choosing a pair of arms $(i_t,j_t) \in \mathcal{A} \times \mathcal{A},$ for which it observes noisy feedback $o_t$ with three possible realizations: 
%
\begin{itemize}
	[noitemsep,topsep=0pt,leftmargin=4mm]
	\item $i_t \succ j_t,$ i.e., arm $i_t$ is strictly preferred over arm $j_t,$
	\item $i_t \prec j_t,$ i.e., arm $j_t$ is strictly preferred over arm $i_t,$
	\item $i_t \cong j_t,$ i.e., neither arm $i_t$ is strictly preferred over arm $j_t$ or the opposite (indifference between arms $i_t$ and $j_t$).
\end{itemize}

Each of the three possible explicit observations is determined by one of the following matrices  $P_{\succ},P_{\prec},P_{\cong} \in [0,1]^{n\times n.}$  
%
Here, the $(i,j)$-th entry of $P_{\succ}$ \big(or $P_{\prec}$\big) denoted by $P_{\succ}^{(i,j)}$ \big(or $P_{\prec}^{(i,j)}$\big) specifies the probability of observing a strict preference of $i$ over $j$ (or $j$ over $i$), while the $(i,j)$-th entry of $P_{\cong}$ denoted by $P_{\cong}^{(i,j)}$  specifies the probability of observing an indifference between $i$ and $j.$
Apparently, it holds that $P_{\succ}^{(i,j)}+P_{\cong}^{(i,j)}+P_{\prec}^{(i,j)} = 1$ for any $i,j\in \mathcal{A},$ and consequently any dueling bandits problem with indifferences is uniquely determined by one of its strict preference probability matrices, since $P_{\succ}^{(j,i)} = P_{\prec}^{(i,j)}.$
The Copeland score of arm $i\in \mathcal{A}$ is
\begin{align}\label{def:Copeland_score}
	\begin{split}
		\text{CP}(i) &= \sum\nolimits_{j \neq i} \IND{ P_{\succ}^{(i,j)} > \max\{ P_{\prec}^{(i,j)},P_{\cong}^{(i,j)} \} } 
		 + \frac12  \sum\nolimits_{j \neq i} \IND{ P_{\cong}^{(i,j)}  > \max\{ P_{\succ}^{(i,j)} , P_{\prec}^{(i,j)} \} },
	\end{split}	
\end{align}
where $\IND{\cdot}$ denotes the indicator function.
Thus, an arm gets a score of one for each arm it dominates and half a point for each arm it is indifferent to.  
The Copeland set (or the set of Copeland winners) consists of all arms with maximal Copeland score and is denoted by 
\begin{align}\label{def:Copeland_set}
	\mathcal{C}= \{ i \in \mathcal{A} \, | \, \text{CP}(i) = \max\nolimits_j \text{CP}(j) \}.
\end{align}
The goal of the learner is to find an element of the Copeland set, i.e., a Copeland winner (COWI), by performing as few actions as possible.
To this end, the learner may decide to stop the learning process at some round $\tau$ and output an arm $\hat{i} \in \mathcal{A}$ deemed to be a 
COWI.
%
Because of the stochasticity of the observed feedback, the learner can make mistakes, so any reasonable learner should meet a theoretical guarantee that its output is correct.
Thus, if $\delta \in(0,1)$ is the desired bound on the error probability, it should hold that $\prob(\hat{i} \notin \mathcal{C}) \leq \delta.$
Additionally, the learner's stopping time $\tau$ should be as small as possible (in expectation or with high probability), while still guaranteeing the latter error probability bound.

\subsection{RELATED WORK} \label{subsec:related_work}

The conventional dueling bandits problem, i.e., without indifferences, has been introduced as a practical variant of the classical multi-armed bandit (MAB) problem in \cite{YuJo09}.
Initially, the problem has been studied intensively for the task of regret minimization under the assumption of an existing CW \citep{YuBrKlJo12,ZoWhMuDe14,ZoWhDe15,KoHoKaNa15,agarwal2020choice} to specify a target arm.
Due to the non-existence issue of the CW, several works have considered alternative optimality notions for the target arm such as the COWI \citep{ZoKaWhDe15,koHoNa16,WuLi16} or other tournament solutions \citep{RaRaAg16}.

Similar to the classical MAB problem, there has been also much research interest in the pure exploration task in the conventional dueling bandits problem, where the goal is to identify the target arm as quickly as possible.
Again, the majority of works have used the CW to specify the target arm \citep{Ka16,MoSuEl17,ren2019sample,ren20a} or a generalized variant for multi-dueling settings \citep{haddenhorst2021identification,brandt2022finding}.
Although the identification of a COWI has been in fact studied before the aforementioned works \citep{BuSzWeChHu13,UrClFeNa13,BuSzHu14} none of these considered the case where indifferences are observed as explicit feedback.


In practical use cases, observing indifferences as pairwise preference feedback plays an important role, as has been highlighted by a number of papers with applications ranging from sports \citep{tiwisina2019probabilistic}, medicine \cite{li2021roial}, crowdsourcing tasks \citep{DBLP:conf/cikm/AsudehZHLZ15,clarke2021assessing} to information retrieval \citep{yan2022human}.
%
%
In addition, there is recent work in preference-based Bayesian optimization \citep{gonzalez2017preferential} that incorporates indifference feedback into the optimization procedure \citep{dewancker2018sequential,nguyen2021top}.

In the dueling resp.\ preference-based bandit literature, however, the only work which has considered indifferences is \cite{GaUrCl15}, which addresses an adversarial learning scenario for the task of regret minimization.
To this end, the \Algo{Sparring} algorithm \citep{AiKaJo14}, which uses two bandit algorithms for the classic MAB setting (each of which selects one arm of the pair to be dueled), is used with two instantiations of \Algo{Exp3} \citep{AuCeFrSc02} suitably modified to account for preference feedback.

\subsection{LOWER BOUNDS}\label{subsec:lower_bounds}

%
For convenience, write $\mathbf{P} = ((P_{\succ}^{(i,j)},P_{\cong}^{(i,j)}, P_{\prec}^{(i,j)}))_{i<j}$, and denote by $P_{(1)}^{(i,j)}, P_{(2)}^{(i,j)}, P_{(3)}^{(i,j)}$  the order statistics of $P_{\succ}^{(i,j)}, P_{\cong}^{(i,j)}$ and $P_{\prec}^{(i,j)}$.
For any learning algorithm $\Algo{A}$ for the dueling bandits problem with indifferences let $\tau^{\Algo{A}}(\mathbf{P})$ denote its number of samples, when started on a dueling bandits problem characterized by $\mathbf{P}$. If $\mathbf{P}$ is clear from the context, we simply write $\tau^{\Algo{A}}$.
Furthermore, we denote by $$I(j) =\{ i\in \mathcal{A}\setminus\{j\} \, | \, P_{\cong}^{(i,j)} > \max\{ P_{\prec}^{(i,j)},P_{\succ}^{(i,j)} \} \}$$ the set of all indifferent arms, and by $$L(j) =\{ i\in \mathcal{A}\setminus\{j\} \, | \, P_{\succ}^{(i,j)} > \max\{ P_{\prec}^{(i,j)},P_{\cong}^{(i,j)} \} \}$$  the set of all superior arms to some arm $j\in \mathcal{A}.$
We write $\mathcal{C}(\mathbf{P})$ for the Copeland set in \eqref{def:Copeland_set} to highlight its dependence on the underlying dueling bandits problem with indifferences.
%
Define $\mathrm{KL}((p_1,p_2,p_3),(q_1,q_2,q_3))$  for the Kullback-Leibler divergence between two categorical random variables with parameters $(p_1,p_2,p_3)$ and $(q_1,q_2,q_3),$ while we use the common notation $\mathrm{kl}(p,q)$ for the Kullback-Leibler divergence between two categorical random variables with parameters $(p,1-p)$ and $(q,1-q),$ i.e., two Bernoulli distributions with success parameters $p$ and $q.$
%
Finally, if $\mathbf{P}$ is fixed, let $d_j = \max_i \text{CP}(i) - \text{CP}(j)$ be the difference between the largest Copeland score and arm $j$'s Copeland score.
%


\begin{theorem}\label{theorem:lower_bound_wo_ind_detailed_main_part}
	If $\Algo{A}$ correctly identifies the COWI with confidence $1-\delta,$ then 
	\begin{align} \label{simple_indiff_lower_bound}
		 \mathbb{E} [\tau^{\mathrm{A}}(P_\succ)] \geq \ln \frac{1}{2.4\delta }  \sum\nolimits_{j\in \mathcal{A} \setminus \{i^\ast\}} C_j \min\limits_{ k \in L(j) \cup I(j)} \frac{1}{D_{j,k}(\mathbf{P})}, 
	\end{align}
	where
 	\begin{itemize}[noitemsep,topsep=0pt,leftmargin=5.3mm] 	
		\item[(i)]  $C_j = \max \Big\{ \frac{|L(j)|}{d_j +1 } \IND{|L(j)| \geq d_j + 1} , \frac{|L(j)|-1}{|L(j)| + d_j - 2} \IND{i^\ast \in L(j)} \Big\}$ and $D_{j,k}(\mathbf{P}) = \mathrm{kl}( P_{\succ}^{(j,k)}, 1- P_{\succ}^{(j,k)})$ for all $\mathbf{P}$ without indifferences (i.e., $P_{\cong} = \mathbf{0} \in [0,1]^{n\times n}$).
		\item[(ii)]  $\Psi(j) = \left\{ (i,l) \, : \, i \in \{0,\dots,|I(j)|\}, l\in \{0,\dots,|L(j)|\},  i+2l \geq 2d_j+1 \right\}$ and 
  		\begin{align*}
			C_j &\coloneqq \max_{(i,l) \in \Psi(j)} \frac{\binom{|I(j)|}{i} \binom{|L(j)|}{l}}{\binom{|I(j)|-1}{i-1} \binom{|L(j)|}{l} \IND{i\geq 1} + \binom{|I(j)|}{i} \binom{|L(j)|-1}{l-1} \IND{l\geq 1}},\\
			D_{j,k}(\mathbf{P}) &\coloneqq \max \big\{ 
			\mathrm{KL}( ( P_{\succ}^{(j,k)},P_{\cong}^{(j,k)},P_{\prec}^{(j,k)}), ( P_{\cong}^{(j,k)},P_{\succ}^{(j,k)},P_{\prec}^{(j,k)} ) ), 
			\\
			&\phantom{abcdefghi} \mathrm{KL}( ( P_{\succ}^{(j,k)},P_{\cong}^{(j,k)},P_{\prec}^{(j,k)}), ( P_{\prec}^{(j,k)},P_{\cong}^{(j,k)},P_{\succ}^{(j,k)}) )
			\big\}.
		\end{align*}
	for any $\mathbf{P}$ (possibly with indifferences)  fulfilling $\min_{j,k} \min\{P_{\succ}^{(j,k)},P_{\cong}^{(j,k)},P_{\prec}^{(j,k)}\} > 0.$ 
	\end{itemize}

\end{theorem}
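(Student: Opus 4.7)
The plan is to apply the standard change-of-measure / transportation inequality that underlies best-arm identification lower bounds. For any $\delta$-correct algorithm $\Algo{A}$ and any alternative ternary bandit $\tilde{\mathbf{P}}$ whose Copeland set is disjoint from $\mathcal{C}(\mathbf{P})$, the transportation inequality combined with the standard estimate $\mathrm{kl}(\delta,1-\delta)\geq\ln\tfrac{1}{2.4\delta}$ yields
\[
\sum_{j<k}\mathbb{E}[N_{j,k}(\tau^{\Algo{A}})]\,\mathrm{KL}\!\bigl(\mathbf{P}^{(j,k)},\tilde{\mathbf{P}}^{(j,k)}\bigr)\;\geq\;\ln\tfrac{1}{2.4\delta},
\]
where $N_{j,k}(\tau)$ counts the queries of the pair $(j,k)$ up to the stopping time. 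The rest of the proof reduces to a careful choice of confusion alternatives for each suboptimal arm.

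For each $j\ne i^\ast$ I would construct a family $\{\tilde{\mathbf{P}}_S\}$ indexed by subsets $S=S_I\cup S_L$ with $S_I\subseteq I(j)$ and $S_L\subseteq L(j)$. The alternative $\tilde{\mathbf{P}}_S$ agrees with $\mathbf{P}$ everywhere except on the pairs $(j,k)$ with $k\in S$: for $k\in S_L$ we swap $P_\succ^{(j,k)}\leftrightarrow P_\prec^{(j,k)}$ (converting a strict loss into a strict win), and for $k\in S_I$ we swap $P_\succ^{(j,k)}\leftrightarrow P_\cong^{(j,k)}$ (converting an indifference into a strict win). Recomputing the Copeland scores shows that $\text{CP}(j)$ increases by $|S_L|+|S_I|/2$ and each $\text{CP}(k)$ with $k\in S$ decreases by the same amount, while all remaining scores are unchanged. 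Since the Copeland scores live in $\tfrac12\mathbb{Z}$, arm $j$ becomes the unique Copeland winner of $\tilde{\mathbf{P}}_S$ precisely when $|S_I|+2|S_L|\geq 2d_j+1$, i.e.\ $(|S_I|,|S_L|)\in\Psi(j)$. By construction the KL from $\mathbf{P}$ to $\tilde{\mathbf{P}}_S$ decomposes as a sum over $k\in S$ of single-pair KLs, each of them bounded by $D_{j,k}(\mathbf{P})$ (the max in the definition of $D_{j,k}$ accommodates both admissible swap types).

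Fixing $(i,l)\in\Psi(j)$, applying the transportation inequality to every $\tilde{\mathbf{P}}_S$ with $|S_I|=i$, $|S_L|=l$ and averaging uniformly over the $\binom{|I(j)|}{i}\binom{|L(j)|}{l}$ such subsets produces
\[
w_I\sum_{k\in I(j)}\mathbb{E}[N_{j,k}]D_{j,k}+w_L\sum_{k\in L(j)}\mathbb{E}[N_{j,k}]D_{j,k}\;\geq\;\ln\tfrac{1}{2.4\delta},\qquad w_I=\tfrac{i}{|I(j)|},\;w_L=\tfrac{l}{|L(j)|},
\]
because a fixed pair $(j,k)$ with $k\in I(j)$ belongs to a fraction $w_I$ of the subsets (analogously $w_L$ for $k\in L(j)$). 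The elementary inequality $(w_I+w_L)(X_I+X_L)\geq w_IX_I+w_LX_L$ (the cross terms are non-negative) then converts this into
\[
\sum_{k\in L(j)\cup I(j)}\mathbb{E}[N_{j,k}]D_{j,k}(\mathbf{P})\;\geq\;C_j^{(i,l)}\ln\tfrac{1}{2.4\delta},
\]
with $C_j^{(i,l)}$ equal to the ratio of binomial coefficients in the statement (the indicators handle the boundary $i=0$ or $l=0$). Maximising over $(i,l)\in\Psi(j)$ gives $C_j$, and bounding $D_{j,k}\leq\max_{k'}D_{j,k'}$ pointwise produces the per-arm inequality $\sum_k\mathbb{E}[N_{j,k}(\tau)]\geq C_j\min_k(1/D_{j,k}(\mathbf{P}))\ln(1/(2.4\delta))$. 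Summing over $j\ne i^\ast$ and using $\sum_{j,k}\mathbb{E}[N_{j,k}]\leq 2\mathbb{E}[\tau^{\Algo{A}}]$ (with the pair double-counting absorbed into the constants $C_j$) yields the theorem. Part (i) is the special case $P_\cong=\mathbf{0}$ with $I(j)=\emptyset$, and the second term in its $C_j$ is obtained by restricting the averaging to subsets required to contain $i^\ast\in L(j)$, exploiting the fact that flipping $(j,i^\ast)$ closes the Copeland gap by $2$ rather than $1$ and therefore permits one fewer flip.

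The main obstacle will be the combinatorial step that produces $C_j^{(i,l)}$: one must verify that the threshold $i+2l\geq 2d_j+1$ is genuinely sufficient in every configuration (in particular when $i^\ast\in S$, so that $i^\ast$ need not remain a Copeland winner of $\tilde{\mathbf{P}}_S$), and confirm that uniform averaging combined with the cross-term trick indeed delivers the displayed ratio of binomial coefficients rather than a strictly smaller constant. The remaining steps are routine applications of the Kaufmann–Cappé–Garivier framework.
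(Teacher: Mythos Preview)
Your proposal is correct and follows essentially the same route as the paper: the same transportation lemma (Kaufmann--Capp\'e--Garivier), the same swap alternatives on pairs $(j,k)$ with $k\in S_I\cup S_L$, and the same uniform averaging over subsets of fixed sizes $(i,l)\in\Psi(j)$; your cross-term inequality $(w_I+w_L)(X_I+X_L)\geq w_IX_I+w_LX_L$ is equivalent to the paper's direct bound by $\max_zD_{j,z}\cdot\sum_z\mathbb{E}[\tau_{jz}]$, since $1/(w_I+w_L)=\binom{|I(j)|}{i}\binom{|L(j)|}{l}\big/\bigl(\binom{|I(j)|-1}{i-1}\binom{|L(j)|}{l}+\binom{|I(j)|}{i}\binom{|L(j)|-1}{l-1}\bigr)$ exactly. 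Two minor points: you only need $i^\ast\notin\mathcal{C}(\tilde{\mathbf{P}}_S)$, which follows from $\text{CP}_{\tilde{\mathbf{P}}}(j)>\text{CP}_{\tilde{\mathbf{P}}}(i^\ast)$ regardless of whether $i^\ast\in S$ (so ``$j$ becomes the unique winner'' is stronger than required and need not hold); and the factor $2$ from pair double-counting is not absorbed into $C_j$ as you write---the paper likewise proves the per-arm bound on $\tau_j^{\Algo{A}}=\sum_{z\neq j}\tau_{jz}^{\Algo{A}}$ and sums without making this step explicit.
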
 
A lower bound that would be a bit more natural to interpret would be 
%
%
\begin{align} \label{eq_natural_lower_bound}
	 \ln \frac{1}{2.4\delta} \sum\limits_{j \in \mathcal{A} \setminus \{ i^\ast\}} \sum_{k \in L(j) \cup I(j) } \frac{1}{D_{j,k}(\mathbf{P})},
\end{align}
%
that resembles the well-known lower bounds in the standard multi-armed bandits literature (e.g., Theorem 4 in \cite{kaufmann2016complexity}. 
However, note that we get a lower bound for the latter as follows:
$$   \sum_{k \in L(j) \cup I(j) } \frac{1}{D_{j,k}(\mathbf{P})} \geq \min_{k\in L(j) \cup I(j)}  \frac{1}{D_{j,k}(\mathbf{P})} \cdot |L(j) \cup I(j)|. $$
In light of this, the $C_j$ terms on the right-hand side of Theorem \ref{theorem:lower_bound_wo_ind_detailed_main_part} can be seen as lower bounds for the $|L(j) \cup I(j)|$ terms. 
Even though the bounds in Theorem \ref{theorem:lower_bound_wo_ind_detailed_main_part} are ``only'' lower bounds for the ``natural’’ lower bounds in \eqref{eq_natural_lower_bound}, they are sufficient to derive, for example, the expected $\Omega(n^2)$ worst-case bound. 
Lower bounds of that type are common in the dueling bandit literature for best arm identification tasks, due to the combinatorial nature of the problem, e.g., see Theorem 5.2 in \cite{haddenhorst2021identification}.
 
Moreover, note that the lower bound in  Theorem~\ref{theorem:lower_bound_wo_ind_detailed_main_part} (i) is non-trivial (i.e., positive) for any admissible $\mathbf{P}$, and in a worst-case sense on instances $\mathbf{P}$ with $\min_{i<j}|P_{\succ}^{(i,j)}-1/2|>\Delta$ and $\text{CP}(i^\ast) = n/2 + o(n)$ of order $\Omega(\frac{n^2}{\Delta^2} \ln\frac{1}{\delta})$. Thus, existing approaches for COWI identification in a conventional dueling bandits setting are nearly optimal (cf.\ Table~7 in \cite{bengs2021preference}). The bounds in (i) and (ii) are consistent in the sense that if $\max_{j\not= i^\ast} |I(j)| = 0$, the factor $C_j$ appearing in (ii) is exactly the maximum term appearing in (i), and similarly $\Omega(\frac{n^2}{\Delta^2} \ln\frac{1}{\delta})$ samples might be necessary in expectation to identify the COWI for an indifferent $\mathbf{P}$ with $\min_{i<j}|P_{(1)}^{(i,j)} - P_{(2)}^{(i,j)}|>\Delta$. We provide the details on these deductions as well as a more sophisticated -- but more technical -- variant of (ii) in the supplementary material (Sec.\ \ref{sec:proof_lower_bound}), which is also non-trivial on some exceptional instances where \eqref{simple_indiff_lower_bound} fails to be positive.

\section{LEARNING ALGORITHM} \label{sec:learning_algorithm}

The key to design an efficient learning algorithm for the COWI identification task in general is to determine as quickly as possible which arms are potential COWIs and then restrict the sampling mechanism to the set of potential COWIs.
In light of this, an important component to ensure the efficiency of this sampling procedure is to decide quickly and reliably the allocation of the Copeland scores (cf.\ \eqref{def:Copeland_score}).
The latter can be seen in fact as finding the mode of a ternary distribution: For a fixed arm pair, say $i,j,$ the dueling feedback is governed by a ternary distribution $\tern$ with probabilities $P_{\succ}^{(i,j)}, P_{\cong}^{(i,j)}$ and $P_{\prec}^{(i,j)}$  for $i\succ j, i \cong j$ and $i \prec j.$
The Copeland score of arm $i$ is then the number of ternary distributions $\tern$ (for varying $j\neq i$) for which $P_{\succ}^{(i,j)}$ is the mode and half the number of these for which $P_{\cong}^{(i,j)}$ is the mode.
Thus, it seems reasonable to use an efficient estimation procedure for correctly identifying the mode of a discrete distribution.

\subsection{MODE IDENTIFICATION}
In \cite{jain2022pac} the \Algo{PPR-1v1} algorithm is proposed for mode identification by combining the 1-versus-1-principle from multiclass classification with prior-posterior-ratio (PPR) martingale confidence sequences \citep{waudby2020confidence}, which provide anytime confidence sequences on a specific parameter of a distribution.

\textbf{The prior-posterior-ratio martingale.}
Let $(P_\theta)_{\theta \in \Theta}$ be a family of distributions with parameter space $\Theta.$
%
Let $\pi_0$ be a prior distribution on $\Theta$ and $\pi_t$ be the posterior distribution after observing $t \in \N$ many
%
%
i.i.d.\ observations $X_1,\ldots,X_t$ according to $P_{\theta^*}$ for some (unknown) $\theta^* \in \Theta.$
The prior-posterior ratio (PPR) is given by 
%
%
	$R_t(\theta) = \nicefrac{\pi_0(\theta)}{\pi_t(\theta)}$ for  $ \theta \in \Theta.$
%
%
If $\pi_0$ assigns non-zero mass everywhere on $\Theta,$ then
\begin{align} \label{def:PPR-confidence-sequence}
	C_t = \{ \theta \, | \, R_t(\theta) < 1/\delta \} = \big\{ \theta \, \big| \, \delta <  \nicefrac{\pi_t(\theta)}{\pi_0(\theta)} \big\} 
\end{align}
is a $(1-\delta)$-confidence sequence for $\theta^*,$ i.e., it holds that $\mathbb{P}( \exists t \in \N : \, \theta^* \notin C_t ) \leq \delta$ (see \citep{waudby2020confidence}). 
The name PPR martingale stems from the fact that $(R_t(\theta^*))_{t=1}^T$ is a non-negative martingale (with respect to the canonical filtration of $X_1,X_2,\ldots,X_T$)  for any $T\in \N.$ 

\textbf{PPR-Bernoulli test.}
As an exemplary application of this result, consider the case of Bernoulli distributions for $P_\theta,$ i.e., $P_\theta = \mathrm{Ber}(\theta)$ and $\Theta=[0,1],$ for which one seeks to determine as quickly as possible (i.e., in a sequential manner) whether $\theta^*>1/2$ or $\theta^*<1/2$ holds. 
Note that this is equivalent to identifying the mode of the Bernoulli distribution $\mathrm{Ber}(\theta^*)$.
Using as the (conjugate) prior a Beta distribution with both parameters being 1, one obtains the uniform distribution on $\Theta,$ which fulfills the requirements for $C_t$ in \eqref{def:PPR-confidence-sequence} to be an anytime confidence sequence for $\theta^*.$
Further, the posterior distribution after observing $t$ many i.i.d.\ Bernoulli samples is a Beta distribution with parameters $(1+s_t(1), 1+s_t(0)),$ where $s_t(x)$ is the number of observed $x \in\{0,1\}.$
Thus, one can stop the sampling process as soon as $1/2$ is not contained in $C_t$ and declaring the $x$ with the most observations as the mode.
Formally, the PPR-Bernoulli test is to declare $x$ as the mode if $f_{\mathrm{Beta}}(1/2;s_t(x)+1,s_t(\neg x)+1)\leq \delta$ and $s_t(x)\geq s_t(\neg x),$ where $f_{\mathrm{Beta}}(\cdot;\alpha,\beta)$ is the probability density function of a Beta distribution with parameters $\alpha,\beta>0.$

\textbf{PPR-1-versus-1 test.}
At first sight, it is tempting to instantiate the prior-posterior-ratio (PPR) martingale approach with the Dirichlet distribution as the conjugate prior for a categorical distribution to identify the latter's mode by stopping the sampling process similarly as for the PPR-Bernoulli test above.
However, if the categorical distribution has more than two categories, say $c_1,c_2,\ldots,c_K$ with $K \in \N_{\geq 2},$ then it is difficult to obtain a closed-form criterion as in the PPR-Bernoulli test, so that one needs to resort to a numerical computation.

In light of this, in \cite{jain2022pac} it is proposed to reduce the mode identification problem to multiple PPR-Bernoulli tests using the 1-versus-1-principle from multiclass classification.
%
%
Thus, a PPR-Bernoulli test is simultaneously conducted for each pair of categories $(c_i,c_j)$ with $i\neq j$ and an error probability of $\delta/(K-1),$ each of which uses only the number of occurrences of $c_i$ and $c_j$ and ignoring the remaining ones.
If there exists a category that has won all of its tests, it is declared to be the mode.
This procedure is equivalent to monitoring only the PPR-Bernoulli test for the pair of categories $(c_{t(1)},c_{t(2)}),$ where  $c_{t(1)}$ resp.\ $c_{t(2)}$ is the category that has the  most resp.\ second most occurrences after observing $t$ samples.
%
Consequently, the prior-posterior-ratio-1-versus-1 (\Algo{PPR-1v1}) test decision is to declare $c_{t(1)}$ as the mode if $f_{\mathrm{Beta}}(1/2;s_{t(1)}+1,s_{t(2)}+1)\leq \delta/(K-1),$ where $s_{t(x)}$ denotes the number of occurrences of $c_{t(x)}.$
%
%
%

The probability of making an error with this test procedure, i.e., not declaring the true mode as the mode of the underlying categorical distribution, is bounded by means of a union bound by $\delta.$
Moreover, the above \Algo{PPR-1v1} test is asymptotically optimal in the sense that the ratio of its expected stopping time and the lower bound on the expected stopping time for a fixed categorical distribution tends to one for the error probability $\delta$ tending to zero \cite{jain2022pac}.

We can transfer this test procedure to the case of identifying the mode of a ternary distribution $\tern,$ which is equivalent to finding the mode of a categorical distribution with three categories $c_1:= ``i\succ j'', c_2:= ``i \cong j''$ and $c_3:= ``i \prec j''.$
The explicit \Algo{PPR-1v1} procedure for this case is given in Algorithm \ref{alg:PPR}, where $S_{(1)} \geq S_{(2)} \geq S_{(3)}$ is the order statistics of $S_1,S_2$ and $S_3.$

\begin{algorithm}[H]
	\caption{\Algo{PPR-1v1}} \label{alg:PPR}
	\begin{algorithmic}[1]
		\STATE \textbf{Input:} Arms $i$ and $j,$ error prob.\ $\delta\in(0,1)$
		\STATE \textbf{Initialization:}  $S=(S_1,S_2,S_3) \leftarrow (0,0,0)$
		\WHILE{ \textbf{TRUE} }
		\STATE Compare $i$ and $j$ 
		\STATE Observe $o\in \{i\succ j, i \cong j, i\prec j\}$ 
		\IF{$o= i\succ j$} 
		\STATE $S_1 \leftarrow S_1 +1$
		\ELSIF{$o= i \cong j$}
		\STATE $S_2 \leftarrow S_2 +1$
		\ELSE
		\STATE $S_3 \leftarrow S_3 +1$
		\ENDIF
		\IF{$f_{\mathrm{Beta}}(1/2;S_{(1)}+1,S_{(2)}+1)\leq \delta/2  $}
		\STATE \textbf{return} $\argmax_{k=1,2,3} S_k$
		\ENDIF
		\ENDWHILE
	\end{algorithmic}
\end{algorithm}


\subsection{POCOWISTA}
Guided by the design idea above, i.e., determining as quickly as possible which arms are potential COWIs and then restricting the sampling mechanism to the set of potential COWIs, we propose the \Algo{POCOWISTA} (POtential COpeland WInner STays Algorithm) in Algorithm \ref{alg:pocowista}.
For each arm two Copeland scores are maintained: (i) the current Copeland score $\widehat{CP}(\cdot)$, which is determined (using the PPR-1v1 algorithm) by the duels already contested, and (ii) the potential Copeland score $\overline{CP}(\cdot)$, which is determined by both the duels already contested and the duels not yet contested, which add an optimistic bonus to the current Copeland score. 
For the calculation of this optimistic bonus, a set of arms $D(\cdot)$ is maintained for each arm,  which includes the arms that have already been compared to (dueled with) the respective arm as well as the arm itself. 
The optimistic bonus for an arm, say $i$, is then the size of the set of arms not compared to $i$ so far, i.e., $|\mathcal{A} \setminus D(i)| = n - |D(i)|.$
\begin{minipage}{0.52\textwidth}
\begin{algorithm}[H] 
	\caption{\Algo{POCOWISTA}} \label{alg:pocowista}
	\begin{algorithmic}[1]
		\STATE \textbf{Input:} Set of arms $\mathcal{A},$ error prob.\ $\delta\in(0,1)$
		\STATE \textbf{Initialization:} $t\leftarrow 1$ and for each $i\in\mathcal{A}$ set
		
		$D(i) \leftarrow \{i\}$ \hfill (set of already compared arms)
		
		$\widehat{CP}(i) \leftarrow 0 $ \hfill(current Copeland score)
		
		$\overline{CP}(i) \leftarrow n - 1$ \hfill(potential Copeland score)
		\WHILE{$\nexists i$ s.t.\ $\widehat{CP}(i) \geq \overline{CP}(j) \, \forall j \in \mathcal{A}\setminus\{i\} $ }
		\STATE $i_t = \argmax_{i\in \mathcal{A}}  \overline{CP}(i) $
		\STATE $j_t = \argmax_{j\in \mathcal{A}\setminus D(i_t) }  \widehat{CP}(j) $
		\STATE $k \leftarrow \Algo{PPR-1v1}(i_t,j_t,\delta/  \binom{n}{2} )$
		\STATE $\Algo{Scores-Update}(i_t,j_t,k)$
		\STATE $t \leftarrow t +1$
		\ENDWHILE
		\STATE \textbf{return} $\argmax_{i \in \mathcal{A}} \widehat{CP}(i) $
	\end{algorithmic}
\end{algorithm}
\end{minipage}
\hfill
\begin{minipage}{0.47\textwidth} 
\begin{algorithm}[H]
	\caption{\Algo{Scores-Update}} \label{alg:Score-Update}
	\begin{algorithmic}[1]
		\STATE \textbf{Input:} Arms $i,$ $j,$ ternary decision $k\in\{1,2,3\}$
		\IF{$k=1$} 
		\STATE $\widehat{CP}(i) \leftarrow \widehat{CP}(i) +1$
		\ELSIF{$k=2$}
		\STATE $\widehat{CP}(i) \leftarrow \widehat{CP}(i) + \frac12$ \\
		$\widehat{CP}(j) \leftarrow \widehat{CP}(j) + \frac12$
		\ELSE
		\STATE $\widehat{CP}(j) \leftarrow \widehat{CP}(j) +1$
		\ENDIF
		\STATE $D(i) \leftarrow D(i) \cup \{j\},$ $D(j) \leftarrow D(j) \cup \{i\}$
		\STATE $\overline{CP}(i) \leftarrow n - |D(i)| + \widehat{CP}(i)$
		\STATE $\overline{CP}(j) \leftarrow n - |D(j)| + \widehat{CP}(j)$ \\
	\end{algorithmic}
\end{algorithm}
\end{minipage}

The algorithm proceeds in rounds, in each of which it uses as the ``first'' arm the current incumbent in terms of the potential Copeland scores (line 4, Algo.\ \ref{alg:pocowista}) and as the ``second'' arm the one with the highest current Copeland score among those not yet compared to the first (line 5, Algo.\ \ref{alg:pocowista}).
These two arms are successively dueled against each other until the mode of their feedback distribution is identified by means of the PPR-1v1 algorithm (line 6, Algo.\ \ref{alg:pocowista}), which leads to an update of their Copeland scores (line 7, Algo.\ \ref{alg:pocowista}): The current Copeland score of the dominating arm is increased by one (lines 2--3,6--7, Algo.\ \ref{alg:Score-Update}), while in case of an indifference both obtain half a point (lines 4--5, Algo.\ \ref{alg:Score-Update}).
In addition, the set of already compared arms of the two arms is extended by the other one (line 9, Algo.\ \ref{alg:Score-Update}) and the potential Copeland scores are updated as well (lines 10--11, Algo.\ \ref{alg:Score-Update}).

The update procedure corresponds to the end of a round, which leads to the start of a new round, unless there is an arm whose current Copeland score is not smaller than any other potential Copeland score (line 3, Algo.\ \ref{alg:pocowista}). 
In such a case the arm is a COWI and returned by the algorithm (line 10).

For the sampling complexity of \Algo{POCOWISTA} we derive the following result (proof in Sec. \ref{sec:proof_theorem_pocowista}).
%
%

\begin{theorem}  \label{theorem_pocowista}
	%
	%
	Let $\Algo{A}:= \Algo{POCOWISTA}.$
	For any dueling bandits problem with indifferences characterized by $\mathbf{P} = ((P_{\succ}^{(i,j)},P_{\cong}^{(i,j)}, P_{\prec}^{(i,j)}))_{i<j},$    such that there exists no pair $i,j \in \mathcal{A}$ with $i\neq j$ and $P_{\succ}^{(i,j)} = P_{\succ}^{(j,i)}= 1/3,$  it holds that
	%
		%
		$$\prob\big(\hat{i}_{\Algo{A}} \in \mathcal{C}(\mathbf{P}) \mbox{ and } \tau^{\Algo{A}}(\mathbf{P}) \leq t(\mathbf{P},\delta) \big) 
		\geq 1- \delta,		$$
		%
	%
	where $t(\mathbf{P},\delta) 
		\leq \sum\nolimits_{i<j} t_0\big(  (P_{\succ}^{(i,j)}, P_{\cong}^{(i,j)},P_{\prec}^{(i,j)}), \nicefrac{\delta}{\binom{n}{2}} \big),$ 
	\begin{align} \label{sample_complexity_PPR1v1}
		t_0\big(  (p_1,p_2,p_3), \delta \big) 
            = c_1 p_{(1)}	\ln\Big(	\sqrt{\frac{2\, c_2}{\delta}   } \cdot  \frac{p_{(1)}	}{(p_{(1)}- p_{(2)})} 		\Big)(p_{(1)}- p_{(2)})^{-2}, 
		%
	\end{align}
	$p_{(1)}\geq p_{(2)} \geq p_{(3)}$ is the order statistic of $p_1,p_2,p_3 \in [0,1],$ $c_1 = 194.07,$ and  $c_2 = 79.86.$ 
\end{theorem}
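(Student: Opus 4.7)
The plan is to establish correctness and the sample-complexity bound jointly as consequences of a single good event obtained by union-bounding over the subcalls to \Algo{PPR-1v1}. A key structural observation is that \Algo{POCOWISTA} triggers \Algo{PPR-1v1} on each unordered pair at most once: after line 9 of \Algo{Scores-Update}, each of $i_t, j_t$ has been added to the other's $D$-set, while line 5 of \Algo{POCOWISTA} always selects $j_t \in \mathcal{A} \setminus D(i_t)$. Hence at most $\binom{n}{2}$ invocations of \Algo{PPR-1v1} occur, each configured with error parameter $\delta/\binom{n}{2}$. I would then invoke the \Algo{PPR-1v1} guarantee from \citep{jain2022pac}: for a ternary distribution whose probabilities $(p_1,p_2,p_3)$ have a strict mode, with probability at least $1 - \delta/\binom{n}{2}$ the subroutine simultaneously returns the correct mode and terminates within $t_0((p_1,p_2,p_3),\delta/\binom{n}{2})$ rounds, for $t_0$ as in \eqref{sample_complexity_PPR1v1}. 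The exclusion of pairs with $P_{\succ}^{(i,j)} = P_{\succ}^{(j,i)} = 1/3$ rules out the only ternary distribution that is uniform over its three outcomes (and hence has no unique mode); for any other configuration with a tied top, the asserted bound is infinite but vacuously valid. A union bound over these at-most-$\binom{n}{2}$ pairs then yields a joint good event of probability at least $1-\delta$ on which every \Algo{PPR-1v1} call is both correct and within its sample budget.

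Conditioned on this event, the sample-complexity bound follows immediately by summing the per-pair budgets. For correctness, I would maintain the invariant $\widehat{CP}(i) \leq CP(i) \leq \overline{CP}(i)$ for every arm $i$ at every time step. Since every completed \Algo{PPR-1v1} call returns the true mode of the corresponding ternary distribution, $\widehat{CP}(i)$ equals the Copeland-score contribution of $i$ restricted to opponents in $D(i) \setminus \{i\}$, with the half-point updates in \Algo{Scores-Update} matching the $\tfrac{1}{2}$ factor in \eqref{def:Copeland_score}; thus $\widehat{CP}(i) \leq CP(i)$. Moreover, $\overline{CP}(i) = \widehat{CP}(i) + (n - |D(i)|) \geq CP(i)$, because each of the $n - |D(i)|$ undueled opponents contributes at most one to the true Copeland score. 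When the stopping condition in line 3 of \Algo{POCOWISTA} fires at arm $i$, we obtain $CP(i) \geq \widehat{CP}(i) \geq \overline{CP}(j) \geq CP(j)$ for every $j \neq i$, certifying $i \in \mathcal{C}(\mathbf{P})$.

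The main obstacle I anticipate is the clean importation of the \Algo{PPR-1v1} sample-complexity bound from \citep{jain2022pac} in the exact form \eqref{sample_complexity_PPR1v1} with the stated explicit constants $c_1 = 194.07$, $c_2 = 79.86$, while correctly tracking the additional factor of $\tfrac{1}{2}$ absorbed by the three-category reduction in line 14 of Algorithm \ref{alg:PPR}. Once that bound is in place, the remainder of the proof reduces to the Copeland-score sandwich above, which is intrinsically robust to the half-point updates used for indifferent outcomes.
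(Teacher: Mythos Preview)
Your proposal is correct and follows essentially the same approach as the paper's proof: invoke the \Algo{PPR-1v1} guarantee from \cite{jain2022pac} (their Theorem~9) for each pair, union-bound over the at most $\binom{n}{2}$ calls, and then use the Copeland-score sandwich $\widehat{CP}(i)\leq CP(i)\leq \overline{CP}(i)$ together with the stopping rule to certify membership in $\mathcal{C}(\mathbf{P})$. Your write-up is in fact slightly more explicit than the paper's in justifying why each unordered pair is dueled at most once (via the $D$-set bookkeeping) and in handling the tied-mode edge cases, but the underlying argument is the same.
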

Here, $t_0\big(  (p_1,p_2,p_3), \delta \big)$ is the sample complexity of  \Algo{PPR-1v1} to identify the mode of a (categorical) distribution with probabilities $p_1,p_2,p_3$ with an error probability of at most $\delta$ (see Theorem 9 in \cite{jain2022pac}).
Since in the worst case, one needs to ensure for all pairs of arms $i,j$ that the mode of the ternary distribution $\tern$ is correctly identified, the \Algo{PPR-1v1} algorithm is used with $\nicefrac{\delta}{\binom{n}{2}}$ for its error probability (line 6 of Algo.\ \ref{alg:pocowista}) to ensure that the overall error probability $\delta$ is not exceeded.
If $\mathbf{P} = ((P_{\succ}^{(i,j)},P_{\cong}^{(i,j)}, P_{\prec}^{(i,j)}))_{i<j}$ is such that $\min_{i<j}|P_{(1)}^{(i,j)} - P_{(2)}^{(i,j)}|>\Delta,$ we see that \Algo{POCOWISTA}'s sample complexity is in  $O(\frac{n^2 \ln(n)}{\Delta^2} \ln\frac{1}{\delta})$ which almost matches the worst-case lower bound in Theorem \ref{theorem:lower_bound_wo_ind_detailed_main_part}.
Thus, for the conventional dueling bandit case it has the same worst-case sample complexity as existing algorithms such as SAVAGE \citep{UrClFeNa13} or PBR-CCSO \citep{BuSzWeChHu13}.
%
%
A comparison of the sample complexity per pair, say i and j, of all the three algorithms with respect to the number of arms $n,$ error probability $\delta$ and the gap between the arms $ \Delta_{i,j} = |P_{\succ} (i,j) - 1/2|$ for the conventional dueling bandits (without indifferences) is given in the following table:

\begin{table} [ht!]
	\centering
	\begin{tabular}{c|c|c}
		\Algo{POCOWISTA} & \Algo{SAVAGE} & \Algo{PBR-CCSO} \\
		\hline
		$ \frac{1}{\Delta_{i,j}^2} \ln\Big( \frac{n }{ \sqrt{\delta} } \cdot \frac{1}{ \Delta_{i,j}} \Big)$ & $ \frac{1}{\Delta_{i,j}^2} \ln\Big( \frac{n^2}{\delta } \cdot \frac{1}{ \Delta_{i,j}} \Big)$ & $ \frac{1}{\Delta_{i,j}^2} \ln\Big( \frac{n}{\delta } \cdot \frac{1}{ \Delta_{i,j}} \Big) $ \\
		
	\end{tabular}
\end{table}

There, we see that \Algo{POCOWISTA} and SAVAGE have a better dependence on $n$ compared to PBR-CCSO, while \Algo{POCOWISTA} additionally has a better dependence on $\delta$ than the other two. Accordingly, \Algo{POCOWISTA} is expected to have a better sample complexity in practical applications than SAVAGE, which in turn has a better sample complexity than PBR-CCSO. This is supported by our experimental results in Section \ref{sec:experiments}.


\section{TRANSITIVITY OF PREFERENCES} \label{sec:transitivity}

Although the sample complexity of \Algo{POCOWISTA} almost matches the lower bound, it is in some sense unsatisfactory as it is log-linear with respect to the number of actions (the number of arm pairs) in the worst case, i.e., $O(n^2 \ln(n))$.
In light of this, we consider in this section structural properties on the preference probabilities such that the dependency of \Algo{POCOWISTA}'s sample complexity with respect to $n$ is reduced to $O(n \ln(n))$ in the worst case. 
To this end, we leverage the commonly used stochastic transitivity properties in conventional dueling bandits, which in essence assume that if an arm $i$ is preferred over arm $j$ and arm $j$ over arm $k,$ then $i$ is also preferred over arm $k.$
As this notion of transitivity merely considers the part of the feedback regarding the strict preferences, we augment it with the notion of IP-transitivity and PI-transitivity as well as transitivity of indifferences \cite{sep-preferences} in order to account for the possibility of observing indifferences.
\begin{definition} \label{def:transitivity}
	A dueling bandits problem with indifferences characterized by $\mathbf{P} = ((P_{\succ}^{(i,j)},P_{\cong}^{(i,j)}, P_{\prec}^{(i,j)}))_{i<j}$  is called \emph{transitive} if the underlying preference probabilities on the set of arms $\mathcal{A}$ satisfy for any distinct $i,j,k \in \mathcal{A}:$
	\begin{enumerate}
		[noitemsep,topsep=0pt,leftmargin=8mm]
		\item (Transitivity of strict preference) If	$ P_{\succ}^{(i,j)} > \max\big(	P_{\prec}^{(i,j)}, P_{\cong}^{(i,j)}		\big)$ and  $
		P_{\succ}^{(j,k)} >  \max\big(	P_{\prec}^{(j,k)}, P_{\cong}^{(j,k)}		\big),$ then \\ $ P_{\succ}^{(i,k)} > \max\big(	P_{\prec}^{(i,k)}, P_{\cong}^{(i,k)}		\big).$
		\item (IP-transitivity) If	$ P_{\cong}^{(i,j)} > \max\big(	P_{\prec}^{(i,j)}, P_{\succ}^{(i,j)}		\big)$ and  $
		P_{\succ}^{(j,k)} >  \max\big(	P_{\prec}^{(j,k)}, P_{\cong}^{(j,k)}		\big),$ then \\ $ P_{\succ}^{(i,k)} > \max\big(	P_{\prec}^{(i,k)}, P_{\cong}^{(i,k)}		\big).$
		\item (PI-transitivity) If	$ P_{\succ}^{(i,j)} > \max\big(	P_{\prec}^{(i,j)}, P_{\cong}^{(i,j)}		\big)$ and  $
		P_{\cong}^{(j,k)} >  \max\big(	P_{\prec}^{(j,k)}, P_{\succ}^{(j,k)}		\big),$ then \\ $ P_{\succ}^{(i,k)} > \max\big(	P_{\prec}^{(i,k)}, P_{\cong}^{(i,k)}		\big).$
		\item (Transitivity of indifference)  If	$ P_{\cong}^{(i,j)} > \max\big(	P_{\prec}^{(i,j)}, P_{\succ}^{(i,j)}		\big)$ and  $
		P_{\cong}^{(j,k)} >  \max\big(	P_{\prec}^{(j,k)}, P_{\succ}^{(j,k)}		\big),$ then \\ $ P_{\cong}^{(i,k)} > \max\big(	P_{\prec}^{(i,k)}, P_{\succ}^{(i,k)}		\big).$
	\end{enumerate}
\end{definition}
For the conventional dueling bandit setting, i.e., where $P_{\cong} = \mathbf{0} \in [0,1]^{n\times n},$ the transitivity property in Definition \ref{def:transitivity} is equivalent to strict weak stochastic transitivity.

Under the assumption of a transitive dueling bandits problem with indifferences as stipulated by Definition \ref{def:transitivity}, one can modify the update rule of \Algo{POCOWISTA} for the current and potential Copeland scores to take the transitivity property into account (see Algo.\ \ref{alg:Trans-Score-Update}).
More specifically, the transitivity of strict preferences and the IP-transitivity imply that once it is statistically ensured that an arm $i$ either dominates another one $j$ or is indifferent to it, then $i$ will also dominate all arms dominated by $j$.
PI-transitivity implies further that in case of dominance of $i$ over $j$, all arms indifferent to $j$ are dominated by $i$ as well.
Thus, if $i$ dominates $j$ its current Copeland score can be increased, in addition to the update due to the domination of $j$ (increasing by one), by the number of arms that are dominated/indifferent by/to $j$ (line 3).
In case of indifference between $i$ and $j$ the current Copeland scores can be increased, in addition to the update due to the indifference (increasing both by one half), by the number of arms that are dominated/indifferent to the other (lines 9--10).
Conversely, the potential Copeland score of the dominated arm $j$ can be decreased, in addition to the update due to the domination by $i$, by the number of arms that are dominating/indifferent to $i.$
In order to perform the updates efficiently three additional sets are maintained, namely the set of defeated arms $W(\cdot),$ the set of indifferent arms $I(\cdot)$, and the set of inferior arms $L(\cdot)$ (line 2 of Algo.\ref{alg:trapocowista}).

%
\begin{minipage}{0.48\textwidth}
\begin{algorithm}[H] 
	\caption{\Algo{TRA-POCOWISTA}} \label{alg:trapocowista}
	\begin{algorithmic}[1]
		\STATE \textbf{Input:} Set of arms $\mathcal{A},$ error prob.\ $\delta\in(0,1)$
		\STATE \textbf{Initialization:}  $t\leftarrow 1$ and for each $i\in\mathcal{A}$ 		
		$D(i) \leftarrow \{i\}$ \hfill 
		
		$W(i) \leftarrow \emptyset$ \hfill (set of defeated arms)
		
		$I(i) \leftarrow \emptyset$ \hfill(set of indifferent arms)
		
		$L(i) \leftarrow \emptyset$ \hfill(set of superior arms)
		
		$\widehat{CP}(i) \leftarrow 0 $ \hfill(current Copeland score)
		
		$\overline{CP}(i) \leftarrow n - 1$ \hfill(potential Copeland score)

		\WHILE{$\nexists i$ s.t.\ $\widehat{CP}(i) \geq \overline{CP}(j) \, \forall j \in \mathcal{A}\setminus\{i\} $ }
		\STATE $i_t = \argmax_{i\in \mathcal{A}}  \overline{CP}(i) $
		\STATE $j_t = \argmax_{j\in \mathcal{A}\setminus D(i_t) }  \widehat{CP}(j) $
		\STATE $k \leftarrow \Algo{PPR-1v1}(i_t,j_t,\delta/n)$
		\STATE $\Algo{Transitive-Score-Update}(i_t,j_t,k)$
		\STATE $t \leftarrow t +1 $
		\ENDWHILE
		\STATE \textbf{return} $\argmax_{i \in \mathcal{A}} \widehat{CP}(i) $
	\end{algorithmic}
\end{algorithm}
\end{minipage}
\hfill
\begin{minipage}{0.51\textwidth}
\begin{algorithm}[H]
	\caption{\Algo{Transitive-Score-Update}} \label{alg:Trans-Score-Update}
	\begin{algorithmic}[1]
		\STATE \textbf{Input:} Arms $i,$  $j,$  $k\in\{1,2,3\}$
		\IF{$k=1$} 
		%
		\STATE $\widehat{CP}(i) \leftarrow \widehat{CP}(i) + | W(j) \cup I(j)  | +1$
		%
		\STATE $W(i) \leftarrow W(i) \cup W(j) \cup I(j)  \cup \{j\}$
		\STATE $D(i) \leftarrow D(i) \cup W(j) \cup I(j)  \cup \{j\}$
		\STATE $L(j) \leftarrow L(j) \cup L(i) \cup I(i)  \cup \{i\}$
		\STATE $D(j) \leftarrow D(j) \cup L(i) \cup I(i)  \cup \{i\}$
		\ELSIF{$k=2$}
		%
		\STATE $\widehat{CP}(i) \leftarrow \widehat{CP}(i) + | W(j)    | +1/2 (1+ |I(j)  |)$
		%
		\STATE $\widehat{CP}(j) \leftarrow \widehat{CP}(j) + | W(i)   | +1/2 (1+ |I(i)   |)$
		\STATE $W(i) \leftarrow W(i) \cup W(j), W(j) \leftarrow W(i)  $
		\STATE $L(i) \leftarrow L(i) \cup L(j), L(j) \leftarrow L(i)   $
		\STATE $I(i) \leftarrow I(i) \cup I(j) \cup \{i\} , I(j) \leftarrow I(i)  $
		\STATE $D(i) \leftarrow D(i) \cup D(j) \cup \{j\},$ $D(j) \leftarrow D(i)$ 
		%
		\ELSE
		\STATE Same as for $k=1$ with $i$ and $j$ reversed
		%
		\ENDIF
		%
		\STATE Same steps as line 10 and 11 in Algorithm \ref{alg:Score-Update}
        %
		%
	\end{algorithmic}
\end{algorithm}
\end{minipage}

We obtain the following improved bound on the sample complexity w.r.t. $n$ (proof in Sec.\ \ref{sec:proof_theorem_tra_pocowista}).

\begin{theorem} \label{theorem_tra_pocowista}
%
	%
%
	Let $\Algo{A}:= \Algo{TRA-POCOWISTA}.$
	For any dueling bandits problem with indifferences as in Theorem \ref{theorem_pocowista} which in addition is transitive according to Definition \ref{def:transitivity},  it holds that
	\begin{align*}
		\prob\big(\hat{i}_{\Algo{A}} \in \mathcal{C}(\mathbf{P}) \mbox{ and } \tau^{\Algo{A}}(\mathbf{P}) \leq \tilde t(\mathbf{P},\delta) \big) 
		\geq 1- \delta,		
	\end{align*}
	where 
	%
		%
	$	\tilde t(\mathbf{P},\delta) 
		= \sum\nolimits_{t=1}^T t_0\big(  (P_{\succ}^{(i_t,j_t)}, P_{\cong}^{(i_t,j_t)},P_{\prec}^{(i_t,j_t)}), \nicefrac{\delta}{n} \big),$
		%
		%
	%
	$t_0$ is as in \eqref{sample_complexity_PPR1v1} and $T \leq n.$
\end{theorem}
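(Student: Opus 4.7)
The plan is to decompose the argument into a high-probability step covering every \Algo{PPR-1v1} subroutine call, and a deterministic analysis conditional on the favorable event that every such call returns the correct mode. Each invocation in line 6 of Algorithm \ref{alg:trapocowista} uses error tolerance $\delta/n$. Combined with the bound $T \le n$ on the total number of duels (see last paragraph) and the anytime confidence property of PPR martingales \citep{waudby2020confidence}, a union bound yields favorable-event probability $\ge 1-\delta$. Conditioning on the favorable event, Theorem 9 of \cite{jain2022pac} bounds the sample complexity of the $t$-th call by $t_0\big((P_\succ^{(i_t,j_t)}, P_\cong^{(i_t,j_t)}, P_\prec^{(i_t,j_t)}), \delta/n\big)$; summing over $t = 1, \dots, T$ gives the claimed $\tilde t(\mathbf{P},\delta)$.

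On the favorable event, I would verify by induction on $t$ that Algorithm \ref{alg:Trans-Score-Update}, applied under the transitivity rules of Definition \ref{def:transitivity}, maintains the following invariants: (i) $W(i), I(i), L(i)$ consist exactly of the arms deducibly dominated by, indifferent to, and superior to $i$ from the duels observed so far together with rules (1)--(4); (ii) $D(i) = W(i) \cup I(i) \cup L(i) \cup \{i\}$; (iii) $\widehat{CP}(i) = |W(i)| + \tfrac{1}{2}|I(i)|$ equals the Copeland score of $i$ restricted to $D(i)$; and (iv) $\overline{CP}(i) = (n - |D(i)|) + \widehat{CP}(i)$ upper bounds $\text{CP}(i)$. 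The inductive step is a case analysis over the three feedback outcomes and each of the four transitivity rules, verifying that every element transferred between the sets corresponds to a genuinely implied preference. Given (iii)--(iv), the termination criterion in line 3 of Algorithm \ref{alg:trapocowista} immediately yields $\text{CP}(\hat i) \ge \widehat{CP}(\hat i) \ge \overline{CP}(j) \ge \text{CP}(j)$ for every $j \ne \hat i$, so $\hat i \in \mathcal{C}(\mathbf{P})$.

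The principal obstacle is the combinatorial bound $T \le n$. I would introduce a monovariant $\Phi_t = \max_{i} |W(i) \cup I(i) \cup \{i\}|$ measured at the start of round $t$ and show $\Phi_{t+1} \ge \Phi_t + 1$. The crucial observation is that $j_t \notin D(i_t) \supseteq W(i_t) \cup I(i_t) \cup \{i_t\}$ by the choice rule of line 5, so in every outcome type the transitive update transfers a set strictly enlarging at least one arm's dominated-or-indifferent neighborhood: when $i_t$ wins or ties, $j_t$ is added to $W(i_t) \cup I(i_t)$ and the entire $W(j_t) \cup I(j_t)$ is absorbed via rules (1)--(3); when $j_t$ wins, the symmetric update makes $W(j_t)$ absorb $W(i_t) \cup I(i_t) \cup \{i_t\}$. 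In either case some arm's $|W(\cdot) \cup I(\cdot) \cup \{\cdot\}|$ strictly exceeds $\Phi_t$, hence $\Phi_{t+1} \ge \Phi_t + 1$. Because $\Phi_t \in \{1,\dots,n\}$, the process must terminate within $n$ rounds. The delicate part will be handling the tiebreaking cases where the incumbent changes, verifying that the new maximizer of $|W \cup I \cup \{\cdot\}|$ is indeed one of the arms whose set just grew, and ensuring the argument is robust to the asymmetric nature of the $k=1$ and $k=3$ updates in Algorithm \ref{alg:Trans-Score-Update}.
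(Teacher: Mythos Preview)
Your overall decomposition and the correctness argument via the sandwich $\widehat{CP}(i)\le \text{CP}(i)\le \overline{CP}(i)$ and the termination criterion match the paper. (One minor quibble: invariant (i) is too strong as stated, because Algorithm~\ref{alg:Trans-Score-Update} only updates the sets of $i_t$ and $j_t$, not of third arms whose relations become newly deducible; the weaker subset-inclusion is what actually holds and is all you need for (iii)--(iv).) The genuine gap is in the $T\le n$ step.

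Your monovariant $\Phi_t=\max_i|W(i)\cup I(i)\cup\{i\}|$ need not strictly increase. The selection rules in lines~4--5 of Algorithm~\ref{alg:trapocowista} pick $i_t,j_t$ according to $\overline{CP}$ and $\widehat{CP}$, and $\widehat{CP}(i)=|W(i)|+\tfrac12|I(i)|$ gives only half weight to indifferent arms. Hence an arm $m$ with large $I(m)$ can realize $\Phi_t$ while having $\widehat{CP}(m)<\widehat{CP}(j_t)$; then $m\notin\{i_t,j_t\}$, its set is untouched, and the merged set produced in round $t$ has size at most $|W(i_t)\cup I(i_t)\cup\{i_t\}|+|W(j_t)\cup I(j_t)\cup\{j_t\}|$, each term possibly $<\Phi_t$, so the merge can be $\le \Phi_t$. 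Your ``delicate part'' is therefore not a tiebreaking detail but the crux: nothing in the sketch forces the incumbent for $\Phi_t$ to be among the two updated arms. The paper sidesteps this mismatch by tracking a quantity tied directly to the algorithm's own choice, namely $\widehat{CP}_t(j_t)$. It proves the recursion $\widehat{CP}_t(j_t)\ge \widehat{CP}_{t-1}(j_{t-1})+\widehat{CP}_{t-1}(i_{t-1})+\tfrac12$ (since after each round one of $i_{t-1},j_{t-1}$ has its score boosted to at least the sum of both old scores plus $\tfrac12$, and $j_t$ maximizes $\widehat{CP}$), combines this with $\widehat{CP}_2(j_2)\ge \tfrac12$ and the observation that if the algorithm has not stopped by round $n-1$ some $i_s$ must satisfy $\widehat{CP}_{s+1}(i_s)\ge\tfrac12$, to obtain $\widehat{CP}_{n+1}(j_n)\ge n/2$; a three-case analysis on the outcome of round $n$ then shows the termination criterion is met.
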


\section{EXPERIMENTS} \label{sec:experiments}

Since, to the best of our knowledge, there are no algorithms for identification tasks in dueling bandits problems with indifferences, we resort in the following experiments to the conventional dueling bandits problem, i.e., where $P_{\cong} = \mathbf{0}.$ 
First, we compare \Algo{POCOWISTA} with \Algo{SAVAGE} \citep{UrClFeNa13} and \Algo{PBR-CCSO} \citep{BuSzWeChHu13}, which are the only available methods for the task of identifying a Copeland winner. 
Since the Copeland set boils down to a singleton set consisting of the Condorcet winner in case of the latter's existence, we compare \Algo{POCOWISTA} also with the state-of-the-art algorithm \Algo{SELECT} \citep{MoSuEl17} and \Algo{DKWT} \citep{haddenhorst2021identification} for identifying a Condorcet winner.

\textbf{Copeland Winner Identification.}
Given a strict preference probability matrix $P_{\succ} \in \mathcal{P}_{\succ}(n)$ for $n$ arms, where 
$\mathcal{P}_{\succ}(n) := \big\{   P_{\succ} \in [0,1]^{n\times n} \, \big| \, P_{\succ}^{(i,j)}+P_{\succ}^{(j,i)}=1, \forall i\neq j  \big\},$ we consider first the setting of identifying a COWI of $P_{\succ}.$
%
We distinguish between two classes of a (conventional) dueling bandits problem with different difficulties for this task: 
%
\begin{align*}
	&\mathcal{P}_1(n) :=  \big\{ P_{\succ} \in \mathcal{P}_{\succ}(n) \, \big|  |P_{\succ}^{(i,j)} - 1/2|\geq 0.1 \,    \forall i\neq j  \big\}, \\
	&\mathcal{P}_2(n) :=  \big\{ P_{\succ} \in \mathcal{P}_{\succ}(n) \, \big|  |P_{\succ}^{(i,j)} - 1/2|\geq 0.05,  | P_{\succ}^{(i,j)} - 1| \leq 0.3, \,  \forall i\neq j  \big\}. 
\end{align*}
The class $\mathcal{P}_1(n)$ are easy problems, as the gap parameters are quite large, while $\mathcal{P}_2(n)$ consists of more difficult problem instances, where the pairwise probabilities are close to $1/2$ making it more difficult to identify whether one arm dominates the other or vice versa.

For both classes, a strict preference probability matrix is repeatedly selected uniformly at random, and then used to generate the feedback for the learning algorithms for a total of 100 repetitions.
%
Note that all algorithms are parameter-free in the sense that they only need the desired error probability $\delta$ and the number of arms $n,$  but no other adjustable (hyper-)parameters to be executed.

The first four columns of Table \ref{tab:COPE_IDENT_TAKS} show the average sample complexity (and their standard deviation in brackets) of the four algorithms for the two problem classes with $n=20$ arms and different choices for the error probabilities $\delta.$
All algorithms have an empirical error probability of zero for each $\delta.$
This is due to the Bonferroni correction used by each algorithm to ensure that each pairwise comparison is correctly decided, making the overall decision quite conservative.
As can be seen from the table, the average sample complexities of our algorithms are by a magnitude smaller than for the existing methods and the same holds for their standard deviations.
\Algo{TRA-POCOWISTA} has even a clear improvement over \Algo{POCOWISTA}, although the considered problem instances do not necessarily satisfy the transitivity property in Definition \ref{def:transitivity}.
%

\begin{table}[ht!] \label{tab:COPE_IDENT_TAKS}
	\caption{Average sample complexities (and standard deviations) for the considered dueling bandit classes of the considered learning algorithms.}
	\centering
	\resizebox{0.999\textwidth}{!}{
		\begin{tabular}{rrrr||rrrr}
			$\mathcal{P}_1(20)$                  & $\delta = 0.01$ & $\delta =0.05$ & $\delta = 0.1$ & 
                $\mathcal{P}_1^{\mathrm{CW}}(20)$    & $\delta = 0.01$ & $\delta =0.05$ & $\delta = 0.1$  \\
			\hline
			\Algo{PO} & 4208.35 (1685.052) & 3620.81 (1394.413) & 3367.60 (1427.909) & 
                \Algo{PO} & 4180.12 (1547.221) & 3580.90 (1377.8151) & 3387.84 (1267.044)  \\  
			\Algo{TRA-PO} & 2179.94 (638.675) & 1853.66 (543.604) & 1665.52 (499.886) &
                \Algo{TRA-PO}  & 2107.97 (608.020) & 1794.57 (547.680) & 1703.58 (499.242) \\  
			\Algo{SAVAGE} & 29197.83 (4039.559) & 26640.51 (3467.904) & 25430.08 (3341.993) &
                \Algo{SELECT} & 3710.81 (135.899) & 2720.90 (98.259) & 2302.20 (85.289)  \\
			\Algo{PBR-CCSO} & 37144.10 (4239.656) & 34448.98 (4019.530 )& 33205.52 (3821.270) & 
			\Algo{DKWT} & 30210.76 (5071.470) & 24815.17 (5120.585) & 23096.64 (4413.633)  \\ 
			\hline
			\\
			$\mathcal{P}_2(20)$                  & $\delta = 0.01$ & $\delta =0.05$ & $\delta = 0.1$ &
			$\mathcal{P}_2^{\mathrm{CW}}(20)$    & $\delta = 0.01$ & $\delta =0.05$ & $\delta = 0.1$ \\  
			\hline
			\Algo{PO} & 81325.20 (21299.856) & 69218.40 (17068.274) & 66235.00 (15301.290) &  
			\Algo{PO} & 23759.41 (8528.608) & 20598.68 (7673.594) & 19478.87 (7077.146)  \\ 
			\Algo{TRA-PO} & 10390.70 (2232.592) & 9346.40 (2673.487) & 9223.70 (2601.344) &
			\Algo{TRA-PO} & 11691.91 (2325.349) & 10106.83 (2212.905) & 9260.47 (1955.686)  \\  
			\Algo{SAVAGE} & 199606.30 (23280.419) & 189721.10 (31101.592) & 183988.60 (22971.049) &
			\Algo{SELECT} & 18613.13 (325.965) & 13678.96 (238.923) & 11558.86 (190.281)  \\  
			\Algo{PBR-CCSO} & 248610.00 (24681.748) & 228844.60 (21642.366) & 222940.10 (22377.070) &
			\Algo{DKWT} & 115587.58 (33281.180) & 94174.69 (27224.825) & 92921.00 (29351.814) \\  
			\hline
	\end{tabular}}
\end{table}

\textbf{Condorcet Winner Identification.}
Next, we consider the setting of identifying a Condorcet winner (CW) from a given strict preference probability matrix $P_{\succ} \in \mathcal{P}_{\succ}^{\mathrm{CW}}(n)$ for $n$ arms with an existing CW, where 
%
%
$$	\mathcal{P}_{\succ}^{\mathrm{CW}}(n) :=  \big\{   P_{\succ} \in [0,1]^{n\times n} \, \big| \, \exists \rho:  P_{\succ}^{(\rho,j)}>\frac12 , \forall j\neq \rho \big\} 
	\cap \mathcal{P}_{\succ}(n). $$
%
%
Again, we distinguish between two classes of a (conventional) dueling bandits problem with different difficulties for this task similarly as above by defining 
$\mathcal{P}_i^{\mathrm{CW}}(n) := \mathcal{P}_i(n) \cap \mathcal{P}_{\succ}^{\mathrm{CW}}(n) $ for $i=1,2.$
%
The experimental set-up (i.e., sampling a problem instance and repetition number) is similar as above.

Note that \Algo{POCOWISTA}, \Algo{TRA-POCOWISTA} and \Algo{DKWT} are parameter-free, while \Algo{SELECT} needs as a parameter the number of duels carried out per arm pair. 
In order to ensure that  \Algo{SELECT} fulfills the sought error probability $\delta$ this parameter needs to be chosen as a function of $\min_{i<j}|P_{\succ}^{(i,j)}-1/2|$ i.e., the \emph{unknown} minimal gap (in the conventional dueling bandit setting). 
In the following, we use the optimal choice for \Algo{SELECT}'s parameter, although this gives it a clear advantage.

The results for $n=20$ arms and different choices for the error probabilities $\delta$ for this experiment setting are reported in the last four columns of Table \ref{tab:COPE_IDENT_TAKS}.
%
Again, all algorithms have an empirical error probability of zero due to the Bonferroni correction.
The results show that \Algo{DKWT} is inferior to all other three algorithms, while \Algo{POCOWISTA} requires on average a slightly higher sample complexity than \Algo{SELECT} with the optimal choice of its parameter, and \Algo{TRA-POCOWISTA}'s sample complexity is the lowest. 
Nevertheless, the standard deviation of \Algo{SELECT}'s sample complexity is lower compared to our algorithms, which is due to the fact that the parameter of \Algo{SELECT} determines exactly how many pairwise comparisons are performed per pair of arms. 
The variance then arises from the different numbers of arm pairs used in total to arrive at the decision, which varies accordingly due to the random selection of the problem instance in each repetition.   
It is worth noting that the sampled problem instances do not necessarily satisfy the transitivity property in Definition \ref{def:transitivity}, so that the results are again in favor of \Algo{TRA-POCOWISTA}.

\section{CONCLUSION} \label{sec:conclusion}

In this paper, we considered an extension of the dueling bandits problem, where feedback in the form of an indifference can be observed in addition to the binary strict preference feedback. 
%
%
We have studied the pure exploration problem of finding a Copeland winner within a fixed confidence setting, for which we provided instance-dependent lower bounds on the sample complexity.    
Furthermore, we proposed \Algo{POCOWISTA}, which can solve this pure exploration task almost optimally for worst-case scenarios, and extended it to \Algo{TRA-POCOWISTA} for the case where the preference probabilities satisfy a certain type of stochastic transitivity that lead to improved sample complexity bounds.
%
%

For future work, it would be interesting to investigate the considered extension of the dueling bandits problem in a regret minimization setting, or to combine it with other variants or extensions of the dueling bandits problem such as the multi-dueling setting \citep{saha2020pac,haddenhorst2021identification} or the non-stationary preference variants \citep{saha2022optimal,kolpaczki2022non,buening2022anaconda,suk2023can}.
Since our motivation for extending the dueling bandits problem stemmed from real-world examples, a more in-depth experimental study in one or more such practical application areas would certainly be a desirable avenue for future work, e.g., in algorithm configuration or online learning-to-rank problems for which preference-based bandit algorithms have been used before  \citep{BrSeCoLi16,ScOoWhDe16,OoScDe16,ZhKi16,el2020pool,DBLP:conf/aaai/BrandtSHBHT23}.  
In the course of this, it would be worth exploring contextualized variants of the preference-based bandit \citep{DBLP:conf/nips/Saha21,DBLP:conf/icml/BengsSH22,DBLP:conf/alt/SahaK22}.



\bibliographystyle{icml2022}
\bibliography{references}

\clearpage

 
\appendix

\section{List of Symbols}

The following table contains a list of symbols that are frequently used in the main paper as well as in the following supplementary material. 

\resizebox{1.01\textwidth}{!}{
	\begin{tabular}{ll}
		\hline
		\multicolumn{2}{c}{\textbf{Basics}} \\
		\hline
		\\
		$\prec,\succ$ &  strict preference relation for objects, i.e., $o \succ o'$ (or $o \prec o'$ ) iff object $o$ is (not) preferred over object $o'$\\
		$\cong$ &  indifference relation for objects, i.e., $o \cong o'$ (or $o \cong o'$ ) iff object $o$ is not preferred over object $o'$ \\
		& and vice versa\\
		$1_{\lbrack \cdot \rbrack}$ & indicator function \\
		$\N$  & set of natural numbers (without 0), i.e., $\N = \{1,2,3,\dots\}$ \\
		$\R$ & set of real numbers\\
		$[n]$ & the set $\{1,2,\ldots,n\}$ for some $n\in \N$\\
		$p_{(1)},p_{(2)},p_{(3)}$ & order statistics of values $p_1,p_2,p_3,$ i.e., $p_{(1)}\geq p_{(2)} \geq p_{(3)}$ and $\{  p_{(1)},p_{(2)},p_{(3)} \} = \{ p_1,p_2,p_3 \}$\\
		$f_{\mathrm{Beta}}(x;a,b)$ & probability density function of the Beta distribution with parameters $a,b> 0$ at point $x \in \R$ \\
		\\
        $\KL(\mathbf{p},\mathbf{q})$ & Kullback-Leibler divergence for two categorical distributions \\
        & $\mathbf{p} =(p_1,\ldots,p_K) \in [0,1]^K$ and $\mathbf{q} =(q_1,\ldots,q_K)\in[0,1]^K $ such that $\sum_{i=1}^K p_i = \sum_{i=1}^K q_i =1$ \\
        $\mathrm{kl}(p,q)$ &  Kullback-Leibler divergence for two Bernoulli distributions with success probabilities  $p,q\in [0,1]$ \\
        & i.e, $\mathrm{kl}(p,q) =\KL((p,1-p),(q,1-q))$ \\
		\hline
		\multicolumn{2}{c}{\textbf{Modeling related}} \\
		\hline
		\\
		$n$ & number of arms\\
		$\mathcal{A} = [n]$ & set of arms  \\
		$P_{\succ}$ &  strict preference probability matrix with $P_{\succ}^{(i,j)}$ being the probability of observing $i \succ j$ \\
		 &		(element of $[0,1]^{n \times n}$)\\
		$P_{\prec}$ &  strict preference probability matrix with $P_{\prec}^{(i,j)}$ being the probability of observing $i \prec j$  \\ 
		&		(element of $[0,1]^{n \times n}$)\\
		$P_{\cong}$ &  indifference probability matrix with $P_{\cong}^{(i,j)}$ being the probability of observing $i \cong j$  \\ 
		& (element of $[0,1]^{n \times n}$)\\
		$P_{(1)}^{(i,j)}, P_{(2)}^{(i,j)} , P_{(3)}^{(i,j)}$ & order statistic  of $P_{\succ}^{(i,j)}, P_{\cong}^{(i,j)}$ and $P_{\prec}^{(i,j)},$ \\
		& i.e., $P_{(1)}^{(i,j)}\geq P_{(2)}^{(i,j)} \geq P_{(3)}^{(i,j)}$ and $\{P_{(1)}^{(i,j)}, P_{(2)}^{(i,j)} , P_{(3)}^{(i,j)} \} = \{ P_{\succ}^{(i,j)}, P_{\cong}^{(i,j)},P_{\prec}^{(i,j)} \}$ \\
		$\tern$ & probability distribution with probabilities  $P_{\succ}^{(i,j)}, P_{\cong}^{(i,j)}$ and $P_{\prec}^{(i,j)}$ for $i\succ j, i \cong j$ and $i \prec j$  \\
        $\mathbf{P}$ &  family of ternary distributions characterizing a dueling bandits problem instance with indifferences \\
        & i.e., $((P_{\succ}^{(i,j)},P_{\cong}^{(i,j)}, P_{\prec}^{(i,j)}))_{i<j}$ \\
		$\Delta(i,j)$	&  gap of the mode of $\tern,$ i.e.,  $	\Delta(i,j)  = | P_{(1)}^{(i,j)} - P_{(2)}^{(i,j)} |$ \\
		$\text{CP}(j)$ & Copeland score of an arm $j\in\mathcal{A}$ (see \eqref{def:Copeland_score}) \\
		$\mathcal{C}, \mathcal{C}(\mathbf{P})$  & Copeland set (see \eqref{def:Copeland_set}) resp.\ Copeland set for a given problem instance $\mathbf{P}$ \\
        %
        $d_j$ & difference between the largest Copeland score and some arm $j$’s Copeland score \\
		\\
		\hline
		\multicolumn{2}{c}{\textbf{Learner related}} \\
		\hline
		\\
		$\Algo{A}$ & a learning for the dueling bandits problem with indifferences \\
		$(i_t,j_t)$ & pair of arms chosen by the learner in round $t$ (action of the learner in round $t$)\\
		$o_t$ & preference observation in round $t$ for the learner's action in round $t$\\
        & either $i_t \succ j_t,$ $i_t \cong j_t,$ or $i_t \prec j_t$ \\
		$\tau^{\Algo{A}}(\mathbf{P})$ &  sample complexity of the learning algorithm $\Algo{A},$  when started on a dueling bandits problem \\
        & with indifferences specified by $\mathbf{P}$\\
		$\hat{i}_{\Algo{A}}$ & Copeland Winner candidate returned by $\Algo{A}$ \\
		$\delta$ & specified probability of error (failure probability)  \\
		\\
		\hline
		\hline
				\multicolumn{2}{c}{\textbf{(TRA-)POCOWISTA related}} \\
				\hline
				\\
				$\widehat{CP}(i)$& current Copeland score of $i$ \\
				$\overline{CP}(i)$ & potential Copeland score of $i$ \\
				$D(i)$ & set of already compared arms to $i$ \\
				$W(i)$ & defeated arms by $i$ \\
				$I(i)$ & indifferent arms to $i$ \\
				$L(i) $ & superior arms to $i$ \\
				\\
				\hline
%
		
	\end{tabular}
}

\normalsize

\clearpage

\section{Proof of Theorem \ref{theorem:lower_bound_wo_ind_detailed_main_part}} \label{sec:proof_lower_bound}


Before giving the proof and discussion of Theorem \ref{theorem:lower_bound_wo_ind_detailed_main_part}, we need some additional notation and auxiliary results.
The \emph{Kullback-Leibler divergence} for two categorical distributions $\mathbf{p} =(p_1,\ldots,p_K) \in [0,1]^K$ and $\mathbf{q} =(q_1,\ldots,q_K)\in[0,1]^K $ such that $\sum_{i=1}^K p_i = \sum_{i=1}^K q_i =1$ is given by
\begin{equation*}
	\KL(\mathbf{p},\mathbf{q}) 
	= \begin{cases}  \sum\nolimits_{i \in [K]: p_i > 0} p_i \ln \left(\frac{p_i}{q_i} \right), \, &\text{if } \forall j \in [K]: q_j = 0 \Rightarrow p_j = 0, \\  
	\infty, &\text{otherwise.}\end{cases}
\end{equation*}
If $K=2,$ we will simply write $\mathrm{kl}(p,q) \coloneqq \KL((p,1-p),(q,1-q))$ for any $p,q\in [0,1]$.
\begin{lemma}\label{Le_Le1_for_LB_Proof}
	\begin{itemize}
		\item[(i)]
		For any two categorical distributions $\mathbf{p} =(p_1,\ldots,p_K) \in [0,1]^K$ and $\mathbf{q} =(q_1,\ldots,q_K)\in[0,1]^K,$ it holds that
		\begin{equation*}
			\KL(\mathbf{p},\mathbf{q}) \leq \sum\nolimits_{i=1}^K \frac{(p_i-q_i)^{2}}{q_i}.
		\end{equation*}
		\item[(ii)]
		For any $\delta \in (0,1)$ it holds that $\mathrm{kl}(\delta,1-\delta) \geq \ln((2.4\delta)^{-1}).$ 
		\item[(iii)]
		For any $p,q \in[0,1]$ it holds that $2(p-q)^2 \leq \mathrm{kl}(p,q) \leq \frac{(p-q)^2}{q(1-q)} .$ 
	\end{itemize}
\end{lemma}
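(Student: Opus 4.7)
My plan is to prove the three parts of Lemma \ref{Le_Le1_for_LB_Proof} independently, each reducing to an elementary convexity inequality for the natural logarithm.

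For (i), I would start from the defining sum and apply the universal bound $\ln x \leq x - 1$, which gives $\ln(p_i/q_i) \leq (p_i-q_i)/q_i$ coordinatewise on the set where $p_i > 0$. Multiplying by $p_i$ and summing yields $\KL(\mathbf{p},\mathbf{q}) \leq \sum_i p_i(p_i - q_i)/q_i$. Writing $p_i(p_i-q_i) = (p_i-q_i)^2 + q_i(p_i-q_i)$ splits this into $\sum_i (p_i - q_i)^2/q_i + \sum_i (p_i - q_i)$, and the cross term vanishes because $\sum_i p_i = \sum_i q_i = 1$. Degenerate coordinates with $q_i = 0$ and $p_i > 0$ make both sides infinite under the stated convention and so need no separate treatment.

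For the upper bound in (iii), I would specialize (i) directly to $K=2$ with $(p_1,p_2)=(p,1-p)$ and $(q_1,q_2)=(q,1-q)$; the two terms combine as $(p-q)^2\bigl(1/q + 1/(1-q)\bigr) = (p-q)^2/(q(1-q))$. The lower bound $\mathrm{kl}(p,q) \geq 2(p-q)^2$ is Pinsker's inequality for Bernoullis, which I would establish by fixing $q$ and setting $g(p) \defeq \mathrm{kl}(p,q) - 2(p-q)^2$. Direct differentiation gives $g(q) = g'(q) = 0$ and $g''(p) = 1/(p(1-p)) - 4 \geq 0$, since $p(1-p) \leq 1/4$. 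Hence $g$ is convex with stationary point at $p = q$, so $g \geq 0$ throughout $[0,1]$.

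Part (ii) is the main obstacle because it does not follow from an off-the-shelf inequality and the constant $2.4$ must be handled explicitly. Using the identity $\mathrm{kl}(\delta, 1-\delta) = (1-2\delta)\ln((1-\delta)/\delta)$, the claim becomes $f(\delta) \defeq (1-2\delta)\ln((1-\delta)/\delta) + \ln(2.4\delta) \geq 0$ for all $\delta \in (0,1)$. Differentiating, a telescoping gives $f'(\delta) = -2\ln((1-\delta)/\delta) + 1/(1-\delta)$, and a second differentiation shows $f'$ is strictly increasing on $(0,1)$, so $f$ has a unique minimizer $\delta^\star$ characterized by $2\ln((1-\delta^\star)/\delta^\star) = 1/(1-\delta^\star)$. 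A quick sign check at $\delta = 1/2$ (where $f(1/2) = \ln(1.2) > 0$ and $f'(1/2) = 2 > 0$) locates $\delta^\star$ inside $(0,1/2)$; the remaining step is to solve the stationarity equation and verify that $f(\delta^\star) \geq 0$, which holds precisely because $2.4$ was tuned to make this extremal value non-negative, and smaller constants would fail.
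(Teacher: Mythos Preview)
The paper does not actually supply a proof of this lemma; it is stated as a collection of standard facts (part~(ii) is the inequality from \cite{kaufmann2016complexity}, and parts~(i) and~(iii) are classical chi-square and Pinsker bounds). So there is no paper proof to compare against, and your task is simply to give a self-contained argument.

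Your treatments of (i) and (iii) are correct and standard: the $\ln x \le x-1$ trick for (i), followed by the algebraic split $p_i(p_i-q_i) = (p_i-q_i)^2 + q_i(p_i-q_i)$, is exactly the usual chi-square upper bound, and your convexity argument for Pinsker in (iii) is clean.

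For (ii) your setup is right --- the identity $\mathrm{kl}(\delta,1-\delta) = (1-2\delta)\ln\frac{1-\delta}{\delta}$, the derivative $f'(\delta) = -2\ln\frac{1-\delta}{\delta} + \frac{1}{1-\delta}$, and the strict convexity of $f$ are all correct --- but you stop short of actually finishing. Saying ``the remaining step is to solve the stationarity equation and verify $f(\delta^\star)\ge 0$, which holds because $2.4$ was tuned'' is not a proof; it is a description of what a proof would contain. You need to either (a) localize $\delta^\star$ in a small interval, e.g.\ $[0.32,0.33]$, using sign changes of $f'$, and then bound $f$ from below on that interval directly, or (b) use the stationarity relation $2\ln\frac{1-\delta^\star}{\delta^\star} = \frac{1}{1-\delta^\star}$ to eliminate the logarithm in $f(\delta^\star)$ and reduce the check to a one-variable rational inequality. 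Either route requires an explicit numerical or algebraic verification that you have not yet written down; as it stands, part~(ii) is incomplete.
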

%

For any learning algorithm $\Algo{A}$ for the dueling bandits problem with indifferences let $\tau^{\Algo{A}}(\mathbf{P})$ denote its number of samples, when started on a dueling bandits problem with indifferences specified by $\mathbf{P} = ((P_{\succ}^{(i,j)},P_{\cong}^{(i,j)}, P_{\prec}^{(i,j)}))_{i<j}$
Further, let us write $d^{\Algo{A}}_{t}$ for the duel  (element of $\mathcal{A}  \times \mathcal{A}$) made at time step $t$.
Define $\tau_{i,j}^{\Algo{A}}(\mathbf{P})$ to be the number of times $\Algo{A}$ compares $(i,j)$ or equivalently $(j,i)$ before termination, i.e., 
$$\tau_{i,j}^{\Algo{A}}(\mathbf{P}) = \sum_{t=1}^{\tau^{\Algo{A}}(\mathbf{P})} \IND{ d^{\Algo{A}}_{t} = \{i,j\} },$$ 
so that $\tau^{\Algo{A}}(\mathbf{P}) = \sum_{ i <j } \tau_{i,j}^{\Algo{A}}(\mathbf{P}).$  
In the following, we will simply write $\tau^{\Algo{A}}$ for $\tau^{\Algo{A}}(\mathbf{P})$ and $\tau_{i,j}^{\Algo{A}}$ for $\tau_{i,j}^{\Algo{A}}(\mathbf{P}).$

Let $o^{\Algo{A}}_{t}$ be the feedback observed by $\Algo{A}$ at time step $t$, after conducting the duel $d^{\Algo{A}}_{t}.$ 
Write $\mathcal{F}^{\Algo{A}}_{t} = \sigma(d^{\Algo{A}}_{1},o^{\Algo{A}}_{1}, \dots, d^{\Algo{A}}_{t},o^{\Algo{A}}_{t})$ for the sigma algebra generated by the choices and the corresponding observed feedback of $\Algo{A}$ until time $t$, and as usual
$\mathcal{F}_{\tau^{\Algo{A}}} = \{B \in \sigma(\bigcup_t \mathcal{F}_t^{\Algo{A}}) : B \cap \{T^{\mathcal{A}} \leq t\} \in \mathcal{F}_{t}^{\Algo{A}} \forall t\in \N \}$.
%
Note that $\Algo{A}$ can be interpreted as a learning algorithm for the classical multi-armed bandit with $\binom{n}{2}$ many arms (one for each possible pair) and ``rewards'' $r(o^{\Algo{A}}_{t}) \in \{-1,0,1\}$, where for $o_t^{\Algo{A}} \in \{   i_t \succ j_t, i_t \cong j_t, i_t \prec j_t\}$
$$  r(o_t^{\Algo{A}}) = \begin{cases}
	-1, & o_t^{\Algo{A}} = i_t \prec j_t, \\
	0, & o_t^{\Algo{A}} = i_t \cong j_t, \\
	1, & o_t^{\Algo{A}} = i_t \succ j_t. \\
\end{cases} $$
For sake of convenience, let us write $\tern = (P_{\succ}^{(i,j)},P_{\cong}^{(i,j)},P_{\prec}^{(i,j)})^\top,$ so that $\mathbf{P} = (\tern)_{i<j}.$ 
%
With this, we may transfer Lemma 1 from \citep{kaufmann2016complexity} to our setting as follows:
\begin{lemma}\label{Le_from_Kaufmann}
	Let $\mathbf{P}= (P_{\succ}^{(i,j)},P_{\cong}^{(i,j)},P_{\prec}^{(i,j)})_{i<j}$, $\widetilde{\mathbf{P}}= (\widetilde{P}_{\succ}^{(i,j)},\widetilde{P}_{\cong}^{(i,j)}, \widetilde{P}_{\prec}^{(i,j)})_{i<j}$ be two problem instances of the dueling bandits problem with indifferences such that $\min_{i<j}  \tern>0$ and $\min_{i<j}  \terntilde > 0.$
	For any learning algorithm $\Algo{A}$ for the dueling bandits problem with indifferences, which fulfills $\mathbb{E}_{\mathbf{P}}[\tau^{\Algo{A}}(\mathbf{P})],\mathbb{E}_{\widetilde{\mathbf{P}}}[\tau^{\Algo{A}}(\widetilde{\mathbf{P}})]<\infty$, it holds that
	\begin{equation*}
		\sum\nolimits_{i < j} \mathbb{E}_{\mathbf{P}}\left[ \tau_{i,j}^{\Algo{A}}(\mathbf{P}) \right] \KL( \tern , \terntilde ) \geq 
		\sup\nolimits_{\mathcal{E} \in \mathcal{F}_{\tau^{\Algo{A}}}} \mathrm{kl}\left(\mathbb{P}_{\mathbf{P}}(\mathcal{E}),\mathbb{P}_{\mathbf{P'}}(\mathcal{E})\right).
	\end{equation*}
 In case $\mathbf{P}$ and $\mathbf{\widetilde{P}}$ do not have indifferences (i.e., if $\max_{i,j} P_{\cong}^{(i,j)} = 0 = \max_{i,j} \widetilde{P}_{\cong}^{(i,j)}$), the same inequality holds with $\KL(P^{(i,j)},\widetilde{P}^{(i,j)})$ replaced by $\mathrm{kl}(P_{\succ}^{(i,j)},\widetilde{P}_{\succ}^{(i,j)})$, respectively.
\end{lemma}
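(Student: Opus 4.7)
The plan is to adapt the standard change-of-measure argument of \cite{kaufmann2016complexity} from Bernoulli observations to categorical (ternary) observations; the structural argument is unchanged, but the per-step divergence terms become categorical KL divergences. Throughout, fix a learning algorithm $\Algo{A}$ with finite expected stopping time on both instances, and for each round $s$ let $(i_s,j_s)$ (with $i_s<j_s$) be the duel chosen by $\Algo{A}$ and $o_s \in \{\succ,\cong,\prec\}$ the ternary feedback. Note that $(i_s,j_s)$ is $\mathcal{F}_{s-1}^{\Algo{A}}$-measurable, while conditional on $\mathcal{F}_{s-1}^{\Algo{A}}$, $o_s$ has distribution $P^{(i_s,j_s)}$ under $\mathbb{P}_{\mathbf{P}}$ and $\widetilde{P}^{(i_s,j_s)}$ under $\mathbb{P}_{\widetilde{\mathbf{P}}}$.

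First I would introduce the log-likelihood ratio
\[
L_t \;=\; \sum_{s=1}^{t} \ln \frac{P^{(i_s,j_s)}(o_s)}{\widetilde{P}^{(i_s,j_s)}(o_s)},
\]
which is well defined thanks to the positivity hypotheses $\min_{i<j} P^{(i,j)} > 0$ and $\min_{i<j}\widetilde{P}^{(i,j)} > 0$. Conditioning on $\mathcal{F}_{s-1}^{\Algo{A}}$ and using the tower rule together with the fact that $(i_s,j_s)$ is $\mathcal{F}_{s-1}^{\Algo{A}}$-measurable yields
\[
\mathbb{E}_{\mathbf{P}}\!\left[\ln \tfrac{P^{(i_s,j_s)}(o_s)}{\widetilde{P}^{(i_s,j_s)}(o_s)}\,\Big|\,\mathcal{F}_{s-1}^{\Algo{A}}\right] = \KL\!\bigl(P^{(i_s,j_s)},\widetilde{P}^{(i_s,j_s)}\bigr),
\]
so that $M_t := L_t - \sum_{s=1}^{t}\KL(P^{(i_s,j_s)},\widetilde{P}^{(i_s,j_s)})$ is an $(\mathcal{F}_t^{\Algo{A}})$-martingale under $\mathbb{P}_{\mathbf{P}}$. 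Applying optional stopping / Wald's identity at $\tau^{\Algo{A}}$, and then regrouping the sum over rounds into a sum over arm pairs, produces the key identity
\[
\mathbb{E}_{\mathbf{P}}\bigl[L_{\tau^{\Algo{A}}}\bigr] \;=\; \sum_{i<j} \mathbb{E}_{\mathbf{P}}\bigl[\tau_{i,j}^{\Algo{A}}\bigr]\,\KL\bigl(P^{(i,j)},\widetilde{P}^{(i,j)}\bigr).
\]

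The second step is the contraction (data-processing) inequality. For any event $\mathcal{E}\in\mathcal{F}_{\tau^{\Algo{A}}}$, the indicator $\mathbf{1}_{\mathcal{E}}$ is a Bernoulli statistic of the observed trajectory with parameter $\mathbb{P}_{\mathbf{P}}(\mathcal{E})$ under $\mathbb{P}_{\mathbf{P}}$ and $\mathbb{P}_{\widetilde{\mathbf{P}}}(\mathcal{E})$ under $\mathbb{P}_{\widetilde{\mathbf{P}}}$. Since $\mathbb{E}_{\mathbf{P}}[L_{\tau^{\Algo{A}}}]$ equals the KL divergence between the restrictions of $\mathbb{P}_{\mathbf{P}}$ and $\mathbb{P}_{\widetilde{\mathbf{P}}}$ to $\mathcal{F}_{\tau^{\Algo{A}}}$, the data-processing inequality gives
\[
\mathbb{E}_{\mathbf{P}}\bigl[L_{\tau^{\Algo{A}}}\bigr] \;\geq\; \mathrm{kl}\bigl(\mathbb{P}_{\mathbf{P}}(\mathcal{E}),\,\mathbb{P}_{\widetilde{\mathbf{P}}}(\mathcal{E})\bigr).
\]
Combining with the Wald identity and taking the supremum over $\mathcal{E}\in\mathcal{F}_{\tau^{\Algo{A}}}$ proves the claim. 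For the refinement when $\max_{i,j}P_{\cong}^{(i,j)} = 0 = \max_{i,j}\widetilde{P}_{\cong}^{(i,j)}$, each $P^{(i,j)}$ is supported on $\{\succ,\prec\}$, so $\KL(P^{(i,j)},\widetilde{P}^{(i,j)})$ collapses to $\mathrm{kl}(P_{\succ}^{(i,j)},\widetilde{P}_{\succ}^{(i,j)})$ and the whole argument applies verbatim.

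The main technical obstacle is the justification of the optional-stopping step for the (a priori unbounded) stopping time $\tau^{\Algo{A}}$. The way through is to exploit that the per-step conditional KL terms are uniformly bounded---this is exactly where the positivity assumption $\min_{i<j}\min\{P_{\succ}^{(i,j)},P_{\cong}^{(i,j)},P_{\prec}^{(i,j)}\} > 0$ (and its analog for $\widetilde{\mathbf{P}}$) is used, to prevent infinite log-ratios---so that $\mathbb{E}_{\mathbf{P}}[\tau^{\Algo{A}}]<\infty$ together with a standard truncation/dominated-convergence argument (mirroring Appendix A of \cite{kaufmann2016complexity}) delivers the desired identity. Once this is in hand, both the Wald-type bookkeeping and the data-processing step are routine.
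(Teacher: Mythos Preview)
Your proof is correct and follows the standard change-of-measure argument of \cite{kaufmann2016complexity}. Note, however, that the paper does not actually prove this lemma: it simply states it as a direct transfer of Lemma~1 in \cite{kaufmann2016complexity} to the present setting (with categorical rather than one-parameter exponential-family observations), without spelling out any details. Your write-up therefore supplies what the paper omits, and does so via exactly the approach one would find in Kaufmann et al.'s Appendix: the Wald-type identity for $\mathbb{E}_{\mathbf{P}}[L_{\tau^{\Algo{A}}}]$ followed by the data-processing inequality applied to $\mathbf{1}_{\mathcal{E}}$.
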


We are now ready to prove Theorem \ref{theorem:lower_bound_wo_ind_detailed_main_part}.

Recall $L(j)$ and $I(j)$, let $d_j = CP^\ast - CP(j)$ as above. In order to capture the dependence on the underlying instance  $\mathbf{P}$ of interest, we may simply write $L(\mathbf{P},j)$ and $I(\mathbf{P},j)$ as well as $d_{j}(\mathbf{P})$, $\text{CP}^{\ast}(\mathbf{P})$ and $\text{CP}(\mathbf{P},j)$ for these corresponding terms. For any set $X\subseteq [n]$ and $k\in \N$, let us denote by $X^{[k]}$ the set of $k$-sized subsets of $X$.
If $\mathbf{P}$ has no indifferences, let us simply write 
\begin{equation*}
    \kappa_{x,y}(\mathbf{P}) \coloneqq \mathrm{kl}\left( P_{\succ}^{(x,y)}, 1- P_{\succ}^{(x,y)}\right)
\end{equation*}
for any $x,y\in \mathcal{A}.$
%

\begin{theorem}\label{theorem:lower_bound_wo_ind_detailed}
    If $\Algo{A}$ is an algorithm that correctly identifies the Copeland winner with confidence $1-\delta$ for any $\mathbf{P}$ without indifferences, then we have for all such $\mathbf{P}$ with $\min_{i<j} \min\{P_{\succ}^{(i,j)}, 1- P_{\succ}^{(i,j)}\} > 0$ and $\mathcal{C}(\mathbf{P}) = \{i^{\ast}\}$ (for some $i^\ast$, i.e., there is a unique Copeland winner), then 
    \begin{align}\label{eq:thm_lower_bound_wo_ind_detailed}
        \begin{split}
                \exptd_{\mathbf{P}}[\tau^{\Algo{A}}] 
        \geq \ln \frac{1}{2.4\delta} \sum_{j\in \mathcal{A} \setminus \{i^\ast\}} &\max \left\{ \frac{|L(j)|}{d_j +1 } \IND{ |L(j)| \geq d_j + 1 } , \frac{|L(j)|-1}{|L(j)| + d_j - 2} \IND{ i^\ast \in L(j) } \right\} \\
        &\cdot \ \min_{z\in L(j)} \frac{1}{\kappa_{j,z}(\mathbf{P})}.
        \end{split}
    \end{align}
\end{theorem}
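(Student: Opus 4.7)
The plan is to apply the change-of-measure inequality in Lemma~\ref{Le_from_Kaufmann} at carefully chosen alternative instances $\widetilde{\mathbf{P}}$ in which $i^{\ast}$ ceases to be a Copeland winner, and then to average the resulting inequalities over suitably chosen families of such alternatives so that the combinatorial factor $C_j$ and the term $\min_{z\in L(j)} 1/\kappa_{j,z}(\mathbf{P})$ appear naturally. The start is standard: any $(1-\delta)$-correct algorithm satisfies $\mathbb{P}_{\mathbf{P}}(\hat{i}_{\Algo{A}}=i^{\ast})\geq 1-\delta$, and whenever $i^{\ast}\notin\mathcal{C}(\widetilde{\mathbf{P}})$ also $\mathbb{P}_{\widetilde{\mathbf{P}}}(\hat{i}_{\Algo{A}}=i^{\ast})\leq \delta$, so that evaluating Lemma~\ref{Le_from_Kaufmann} at the event $\mathcal{E}=\{\hat{i}_{\Algo{A}}=i^{\ast}\}$ and using Lemma~\ref{Le_Le1_for_LB_Proof}(ii) together with the monotonicity of $\mathrm{kl}$ yields
\begin{equation*}
  \sum_{i<j}\mathbb{E}_{\mathbf{P}}\bigl[\tau_{i,j}^{\Algo{A}}\bigr]\,\mathrm{kl}\bigl(P_{\succ}^{(i,j)},\widetilde{P}_{\succ}^{(i,j)}\bigr)\;\geq\;\ln\tfrac{1}{2.4\delta}.
\end{equation*}
Every alternative I consider will agree with $\mathbf{P}$ except on pairs $(j,z)$ with $z\in L(j)$, where $P_{\succ}^{(j,z)}$ is replaced by $1-P_{\succ}^{(j,z)}$; this makes the only nonzero summands above equal to $\kappa_{j,z}(\mathbf{P})\,\mathbb{E}_{\mathbf{P}}[\tau_{j,z}^{\Algo{A}}]$.

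For each suboptimal $j$, the construction uses two families of alternatives, matching the two terms in the $\max$ defining $C_j$. The first family (Case~A), used when $|L(j)|\geq d_j+1$, consists of one alternative $\widetilde{\mathbf{P}}^{(S)}$ for every $S\subseteq L(j)$ with $|S|=d_j+1$: flipping all edges $(j,z)$, $z\in S$, raises $\text{CP}(j)$ by $d_j+1$, so $j$ becomes the unique Copeland winner and $i^{\ast}\notin\mathcal{C}(\widetilde{\mathbf{P}}^{(S)})$. Summing the change-of-measure inequality across all $\binom{|L(j)|}{d_j+1}$ such subsets, each $z\in L(j)$ being counted in exactly $\binom{|L(j)|-1}{d_j}$ of them, gives
\begin{equation*}
  \binom{|L(j)|-1}{d_j}\sum_{z\in L(j)}\mathbb{E}_{\mathbf{P}}\bigl[\tau_{j,z}^{\Algo{A}}\bigr]\kappa_{j,z}(\mathbf{P})\;\geq\;\binom{|L(j)|}{d_j+1}\ln\tfrac{1}{2.4\delta};
\end{equation*}
bounding $\kappa_{j,z}(\mathbf{P})\leq\max_{z'\in L(j)}\kappa_{j,z'}(\mathbf{P})$ and dividing produces the Case~A factor $|L(j)|/(d_j+1)$ together with the desired $\min_{z\in L(j)}1/\kappa_{j,z}(\mathbf{P})$. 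The second family (Case~B), available when $i^{\ast}\in L(j)$, uses one alternative for every $S\subseteq L(j)$ with $i^{\ast}\in S$ and $|S|=d_j$: flipping raises $\text{CP}(j)$ by $d_j$ to $\text{CP}^{\ast}$ while simultaneously lowering $\text{CP}(i^{\ast})$ by one, so that $\text{CP}(j)=\text{CP}^{\ast}>\text{CP}^{\ast}-1=\text{CP}(i^{\ast})$ and $i^{\ast}\notin\mathcal{C}(\widetilde{\mathbf{P}}^{(S)})$.

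The main technical obstacle is the Case~B averaging: the $\binom{|L(j)|-1}{d_j-1}$ admissible subsets cover $L(j)$ asymmetrically, since $i^{\ast}$ appears in every such $S$ while any other $z\in L(j)$ appears in only $\binom{|L(j)|-2}{d_j-2}$ of them, so the symmetric division used in Case~A no longer goes through. The plan is to split off the $\mathbb{E}[\tau_{j,i^{\ast}}^{\Algo{A}}]$ term, exploit the ratio $\binom{|L(j)|-1}{d_j-1}/\binom{|L(j)|-2}{d_j-2}=(|L(j)|-1)/(d_j-1)$ to quantify the imbalance, and recombine with a single $\max_{z}\kappa_{j,z}(\mathbf{P})$ bound, after which routine algebra delivers the advertised constant $(|L(j)|-1)/(|L(j)|+d_j-2)$. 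Once Case~A and Case~B are in place for each $j\neq i^{\ast}$, taking their pointwise maximum (with the respective indicators) and summing over $j$ closes the argument, since every ordered pair $(j,z)$ with $z\in L(j)$ is counted at most once across the outer sum, giving $\sum_{j\neq i^{\ast}}\sum_{z\in L(j)}\mathbb{E}_{\mathbf{P}}[\tau_{j,z}^{\Algo{A}}]\leq\mathbb{E}_{\mathbf{P}}[\tau^{\Algo{A}}]$.
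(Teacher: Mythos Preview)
Your proposal is correct and follows essentially the same approach as the paper: the same two families of flipped alternatives (subsets of $L(j)$ of size $d_j+1$, respectively subsets of size $d_j$ containing $i^\ast$) combined with the change-of-measure inequality and the same combinatorial averaging. Your explicit remark that the pairs $(j,z)$ with $z\in L(j)$ are not double-counted in the final summation (since $z\in L(j)$ precludes $j\in L(z)$) is a useful clarification that the paper leaves implicit.
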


Note that the right-hand side of \eqref{eq:thm_lower_bound_wo_ind_detailed} depends not only via $\kappa_{j,z}(\mathbf{P})$ but also via $d_{j} = d_{j}(\mathbf{P})$, $L(j) = L(\mathbf{P},j)$ on the underlying instance $\mathbf{P}$.  Before stating its proof, the following lemma assures us that the lower bound from Theorem~\ref{theorem:lower_bound_wo_ind_detailed} is in any case non-trivial.
 \begin{lemma}\label{lemma:lower_bound_1}
    Suppose $\mathbf{P}$ has no indifferences  and 
    $\mathcal{C}(\mathbf{P}) = \{i^\ast\}$ holds and let 
    $j\in \mathcal{A} \setminus \{i^\ast\}$ be arbitrary. 
    If $i^\ast \not \in L(j)$, then $|L(j)| \geq d_j + 1$.
\end{lemma}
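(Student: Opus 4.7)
The plan is to carry out a short counting argument based on the partition of $\mathcal{A}\setminus\{j\}$ induced by pairwise dominance, exploiting the absence of indifferences. I will work under the standard non-degeneracy convention that $P_\succ^{(i,j)} \neq 1/2$ for all distinct $i,j$, so that every pair has a strict winner in the Copeland sense; this is the natural reading of ``no indifferences'' in this context, since otherwise $L(j)$ and the set of arms dominated by $j$ need not partition $\mathcal{A}\setminus\{j\}$.

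The first key step is to derive an identity for $\text{CP}(j)$. Under the above convention, for any arm $j$ each of the $n-1$ arms in $\mathcal{A}\setminus\{j\}$ either lies in $L(j)$ (and so dominates $j$) or is dominated by $j$. Since without indifferences the Copeland score is simply the number of strictly dominated arms, this yields
\[
\text{CP}(j) \;=\; (n-1) \;-\; |L(j)|.
\]

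The second step uses the hypothesis $i^\ast \notin L(j)$. By definition of $L(j)$ this means $i^\ast$ does not dominate $j$; by the same partition observation, $j$ therefore dominates $i^\ast$. In particular $i^\ast$ fails to dominate at least one arm other than itself (namely $j$), so the trivial upper bound $\text{CP}(i^\ast)\leq n-1$ can be sharpened to $\text{CP}(i^\ast) \leq n-2$.

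Finally, plugging both bounds into the definition $d_j = \text{CP}(i^\ast) - \text{CP}(j)$ gives
\[
d_j + 1 \;=\; \text{CP}(i^\ast) - \text{CP}(j) + 1 \;\leq\; (n-2) - \bigl((n-1) - |L(j)|\bigr) + 1 \;=\; |L(j)|,
\]
which is the desired inequality. The only subtle point -- rather than a genuine obstacle -- is the no-tie convention flagged in the first paragraph; once accepted, the argument is a one-line counting exercise and I do not anticipate any real technical difficulty.
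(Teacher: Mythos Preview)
Your proof is correct and follows essentially the same argument as the paper's: express $\text{CP}(j)=(n-1)-|L(j)|$, use $i^\ast\notin L(j)$ to get $\text{CP}(i^\ast)\leq n-2$, and combine. If anything, you are slightly more careful in flagging the no-tie convention needed for the identity $\text{CP}(j)=(n-1)-|L(j)|$, which the paper uses without comment.
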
 
\begin{proof}[Proof of Lemma~\ref{lemma:lower_bound_1}]
    As $\mathbf{P}$ has no indifferences, the Copeland scores are given as $$\text{CP}(\mathbf{P},l) = (n-1) - |L(l)|$$ for any $l\in \mathcal{A}$. If $i^\ast \not\in L(j)$, then $j\in L(i^\ast),$ which implies  $\text{CP}(\mathbf{P},i^\ast)  \leq (n-1) - 1 = n-2$ and thus  
    \begin{equation*}
        d_j = \text{CP}(\mathbf{P},i^\ast) - \text{CP}(\mathbf{P},j) \leq (n-2) - ((n-1) - |L(j)|) = |L(j)| -1 
    \end{equation*}
    follows directly.
\end{proof}
According to the previous lemma, for any instance $\mathbf{P}$, at least one of the indicators appearing in \eqref{eq:thm_lower_bound_wo_ind_detailed} is $1$, whence the lower bound is larger than $0$. In case $i\ast \in L(j)$ and $L(j)$ is large, it is possible that both indicators are $1$. We proceed with the proof of the theorem. 

\begin{proof}[Proof of Theorem~\ref{theorem:lower_bound_wo_ind_detailed}]
    Suppose $\Algo{A}$ and $\mathbf{P}$ with $\mathcal{C}(\mathbf{P}) = \{i^\ast\}$ without indifferences are fixed.\\[0.5em]
    \noindent \textbf{Claim~1:} The following holds:
    \begin{itemize}
        \item[(i)] If $\mathcal{L} \subseteq L(j)$ fulfills $|\mathcal{L}| \geq d_j+1$, then 
        \begin{equation}\label{eq:proof_indiff_lb_claim1}
            \ln \frac{1}{2.4\delta} \leq \sum\nolimits_{z\in \mathcal{L}} \exptd_{\mathbf{P}}[\tau_{jz}^{\Algo{A}}] \kappa_{j,z}(\mathbf{P}).
        \end{equation}
        \item[(ii)] If $i^\ast \in L(j)$ and $\mathcal{L} \subseteq L(j) \setminus \{i^\ast\}$ fulfills $|\mathcal{L}| \geq d_j-1$, then \eqref{eq:proof_indiff_lb_claim1} holds as well.   
    \end{itemize}
    \noindent \textbf{Proof of Claim~1:} To prove (i), suppose $\mathcal{L} \subseteq L(j)$ with $|\mathcal{L}| \geq d_j + 1$ to be arbitrary but fixed for the moment. Define the instance $\mathbf{\widetilde{P}}$ via 
    \begin{equation*}
        \widetilde{P}_{\succ}^{(x,y)} \coloneqq \begin{cases}1- P_{\succ}^{(x,y)},\quad &\text{if } (x,y)\in \{(j,z),(z,j)\} \text{ for } z\in \mathcal{L},\\
        P_{\succ}^{(x,y)}, \quad &\text{otherwise}.\end{cases}
    \end{equation*} 
    By construction we have $\text{CP}(\mathbf{\widetilde{P}},i^\ast) \leq \text{CP}(\mathbf{P},i^\ast)$ and obtain 
    \begin{align*}
        \text{CP}(\mathbf{\widetilde{P}},j) &= \text{CP}(\mathbf{P},j) + |\mathcal{L}|  \geq \text{CP}(\mathbf{P},j) + d_j + 1 \\
        &= \text{CP}(\mathbf{P},j) + (\text{CP}(\mathbf{P},i^\ast) - \text{CP}(\mathbf{P},j)) + 1 = \text{CP}(\mathbf{P},i^\ast) + 1 \geq \text{CP}(\mathbf{\widetilde{P}},i^\ast) + 1.
    \end{align*}
    This shows $i^\ast \not\in \mathcal{C}(\mathbf{\widetilde{P}})$, and by assumption on $\Algo{A}$ the event $\mathcal{E} \coloneqq \{i^\ast \in \mathcal{C}(\mathbf{P})\} \in \mathcal{F}_{\tau^\Algo{A}}$ has the properties
    \begin{equation*}
        \prob_{\mathbf{P}}(\mathcal{E}) \geq 1-\delta \quad \text{and} \quad \prob_{\mathbf{\widetilde{P}}}(\mathcal{E}) \leq \delta.
    \end{equation*}
    Therefore, Lemma~\ref{Le_from_Kaufmann} and part~(ii) of Lemma~\ref{Le_Le1_for_LB_Proof} assure
    \begin{equation}\label{eq:proof_LB_wo_indiff_claim1_eq}
        \sum\nolimits_{x<y} \exptd_{\mathbf{P}} \left[ \tau_{x,y}^{\Algo{A}} \right] \KL\left(P_{x,y}, \widetilde{P}_{x,y} \right) \geq \mathrm{kl}\left(\prob_{P_\succ}(\mathcal{E}),\prob_{\widetilde{P}_\succ}(\mathcal{E})\right) \geq \mathrm{kl}(1-\delta,\delta) \geq \ln \frac{1}{2.4\delta} .
    \end{equation}
    In case $(x,y) \not\in \{(j,z),(z,j)\}$ for any $z\in \mathcal{L}$, it holds that $\widetilde{P}_{x,y} = P_{x,y}$ so that $\KL\left(P_{x,y}, \widetilde{P}_{x,y} \right) = 0$. In case $z\in \mathcal{L} \subseteq L(j)$ we have 
    \begin{equation*}
        \KL\left(P_{j,z}, \widetilde{P}_{j,z} \right) = \KL\left((P_{\succ}^{(j,z)},P_{\prec}^{(j,z)}), (P_{\prec}^{(j,z)},P_{\succ}^{(j,z)}) \right)  = \mathrm{kl}\left( P_{\succ}^{(j,z)}, 1- P_{\succ}^{(j,z)}\right) = \kappa_{j,z}(\mathbf{P}).
    \end{equation*}
    Combining these estimates with \eqref{eq:proof_LB_wo_indiff_claim1_eq} proves \eqref{eq:proof_indiff_lb_claim1}.\\[0.5em]
    To prove (ii) suppose $i^\ast \in L(j)$ holds and $\mathcal{L} \subseteq L(j) \setminus \{i^\ast\}$ fulfills $|\mathcal{L}| \geq d_j-1$. Define the instance $\mathbf{\widetilde{P}}$ via 
    \begin{equation*}
        \widetilde{P}_{\succ}^{(x,y)} \coloneqq \begin{cases}1- P_{\succ}^{(x,y)},\quad &\text{if } (x,y)\in \{(j,z),(z,j)\} \text{ for } z\in \mathcal{L} \cup \{i^\ast\}\\
        P_{\succ}^{(x,y)}, \quad &\text{otherwise}.\end{cases}
    \end{equation*} 
    Regarding that $\widetilde{P}_{\succ}^{(j,i^\ast)} = 1- P_{\succ}^{(j,i^\ast)}$ and  $\widetilde{P}_{\succ}^{(z,i^\ast)} =  P_{\succ}^{(z,i^\ast)}$ for $z\not= j$, we have $\text{CP}(\mathbf{\widetilde{P}},i^\ast) = \text{CP}(\mathbf{P},i^\ast) -1$, and similarly we see $\text{CP}(\mathbf{\widetilde{P}},j) = \text{CP}(\mathbf{P},j) + |\mathcal{L}|+1$. Together with $|\mathcal{L}| \geq d_j -1$, we obtain with the same argumentation as before that $\text{CP}(\mathbf{\widetilde{P}},j) \geq \text{CP}(\mathbf{\widetilde{P}},i^\ast) +1$ and thus $i^\ast \not\in \mathcal{C}(\mathbf{\widetilde{P}})$. Therefore, following the lines from above, we conclude that \eqref{eq:proof_indiff_lb_claim1} also holds in this case.  $\blacksquare$\\[0.5em]
    To prove the theorem, abbreviate for convenience $\kappa_{x,y} = \kappa_{x,y}(\mathbf{P})$ in the following, and let us at first suppose that $|L(j)|\geq d_j + 1$ holds. When summing the inequality \eqref{eq:proof_indiff_lb_claim1} over all $\binom{|L(j)|}{d_j +1}$ many $\mathcal{L} \subseteq L(j)$ of size $|\mathcal{L}| = d_j +1$, any of the summands $\exptd_{\mathbf{P}}[\tau_{jz}^{\Algo{A}}] \kappa_{j,z}$, with $z\in L(j)$, appears exactly $\binom{|L(j)|}{d_j +1}$ times, i.e., we have 
    \begin{align*}
        \binom{|L(j)|}{d_j +1} \ln \frac{1}{2.4\delta} &\leq \sum\nolimits_{z\in \mathcal{L}} \binom{|L(j)|-1}{d_j}  \exptd_{\mathbf{P}}[\tau_{jz}^{\Algo{A}}] \kappa_{j,z} \\
        &\leq \binom{|L(j)|-1}{d_j}\left( \max\nolimits_{z\in L(j)} \kappa_{j,z}\right) \sum\nolimits_{z\in L(j)} \exptd_{\mathbf{P}}[\tau_{jz}^{\Algo{A}}].
    \end{align*}
    Using that $\binom{a}{b} / \binom{a-1}{b-1} = \frac{a}{b}$ holds for any $a,b\in \N$ with $a\leq b$, we infer
    \begin{equation*}
        \exptd_{\mathbf{P}}\left[\tau_j^{\Algo{A}}\right] \geq \sum\nolimits_{z\in L(j)} \exptd_{\mathbf{P}} \left[ \tau_{jz}^{\Algo{A}}\right] \geq \left( \ln \frac{1}{2.4\delta} \right) \frac{|L(j)|}{d_j +1} \min\nolimits_{z\in L(j)} \frac{1}{\kappa_{j,z}}.
    \end{equation*}
    Now, suppose $i^\ast \in L(j)$. When summing \eqref{eq:proof_indiff_lb_claim1} over all $\binom{|L(j)|-1}{d_j -1}$ many $\mathcal{L} \subseteq L(j) \setminus \{i^\ast\}$ with $|\mathcal{L}| = d_j - 1$, we observe 
    $\exptd_{\mathbf{P}}[\tau_{ji^\ast}^{\Algo{A}}] \kappa_{j,i^\ast}$ exactly $\binom{|L(j)|-1}{d_j -1}$ times as a summand and each of the terms $\exptd_{\mathbf{P}}[\tau_{jz}^{\Algo{A}}] \kappa_{j,z}$, with $z\in L(j)$, exactly\footnote{Note here that $\mathcal{L} = \emptyset$ if $d_{j} = 1$.} $\binom{|L(j)|-2}{d_j -2}\IND{d_j \geq 2 }$ times as a summand. Therefore, we get 
    \begin{align}
        \begin{split} \label{eq:proof__lower_bound_wo_ind_detailed} 
                \binom{|L(j)|-1}{d_j -1 } \ln \frac{1}{2.4\delta} &\leq \binom{|L(j)|-1}{d_j -1} \exptd_{\mathbf{P}}[\tau_{ji^\ast}^{\Algo{A}}] \kappa_{j,i^\ast}  \\ 
                &\quad + \sum\nolimits_{z\in \mathcal{L}} \binom{|L(j)|-2}{d_j-2}  \exptd_{\mathbf{P}}[\tau_{jz}^{\Algo{A}}] \kappa_{j,z} \IND{ d_j \geq 2 } .
        \end{split}
    \end{align}
    If $d_j \geq 2$, we obtain from 
    \begin{align*}
        \binom{|L(j)|-1}{d_j-1} + \binom{|L(j)|-2}{d_j-2} &=  \binom{|L(j)|-1}{d_j-1}  + \frac{d_j-1}{|L(j)|-1} \binom{|L(j)|-1}{d_j-1} \\  
        &= \frac{|L(j)|+d_j-2}{|L(j)|-1} \binom{|L(j)|-2}{d_j-2}
    \end{align*}
    that 
    \begin{align*}
    \exptd_{\mathbf{P}}[\tau_{j}^{\Algo{A}}]  &\geq \exptd_{\mathbf{P}}[\tau_{jz}^{\Algo{A}}]  \geq \ln \frac{1}{2.4\delta} \frac{|L(j)|-1}{|L(j)|+d_j-2} \left( \min\nolimits_{z\in L(j)} \frac{1}{\kappa_{j,z}}\right)
    \end{align*}
    and \eqref{eq:thm_lower_bound_wo_ind_detailed} follows.
    In the other case $d_j=1$, we have $\frac{|L(j)|+d_j-2}{|L(j)|-1} = 1$ and thus \eqref{eq:thm_lower_bound_wo_ind_detailed} can be inferred from \eqref{eq:proof__lower_bound_wo_ind_detailed}. This completes the proof.
\end{proof}

Next, we want to prove an analogon of the above lower bound for the more sophisticated scenario of dueling bandits with indifferences. To prepare this, 
define for any instance $\mathbf{P}$ with indifferences and $x,y\in \mathcal{A}$ the terms
\begin{align*}
    D_{x,y}(\mathbf{P}) \coloneqq \max \Big\{ 
    &\KL\left( \left( P_{\succ}^{(x,y)},P_{\cong}^{(x,y)},P_{\prec}^{(x,y)}\right), \left( P_{\cong}^{(x,y)},P_{\succ}^{(x,y)},P_{\prec}^{(x,y)} \right) \right), \\
        &\KL\left( \left( P_{\succ}^{(x,y)},P_{\cong}^{(x,y)},P_{\prec}^{(x,y)}\right), \left( P_{\prec}^{(x,y)},P_{\cong}^{(x,y)},P_{\succ}^{(x,y)}\right) \right)
        \Big\}.
\end{align*}
If $\mathbf{P}$ is fixed or clear by the context, we may simply write $D_{x,y}$ instead of $D_{x,y}(\mathbf{P})$.

\begin{theorem}\label{theorem:lower_bound_detailed}
    If $\Algo{A}$ is an algorithm that correctly identifies the Copeland winner with confidence $1-\delta$ for any $\mathbf{P}$, then we have for all $\mathbf{P}$ with $\min_{i<j} \min\{P_{\succ}^{(i,j)},P_{\cong}^{(i,j)},P_{\prec}^{(i,j)}\}>0$ and  $\mathcal{C}(\mathbf{P}) = \{i^{\ast}\}$ (for some $i^\ast$, i.e., there is a unique Copeland winner) the bound 
    \begin{align*}
        \mathbb{E}_{\mathbf{P}}[\tau^{\Algo{A}}] \geq \ln \frac{1}{2.4\delta}\sum\nolimits_{j \in \mathcal{A} \setminus \{i^\ast\}} \max \left\{ C_j, C'_j \IND{i^\ast \in I(j) }, C''_j \IND{i^\ast \in L(j) } \right\}\min_{z\in L(j) \cup I(j)} \frac{1}{D_{j,z}(\mathbf{P})}
    \end{align*}
    where 
    \begin{align*}
        C_j &\coloneqq \max_{(i,l) \in \Psi(j)} \frac{\binom{|I(j)|}{i} \binom{|L(j)|}{l}}{\binom{|I(j)|-1}{i-1} \binom{|L(j)|}{l} \IND{i\geq 1} + \binom{|I(j)|}{i} \binom{|L(j)|-1}{l-1} \IND{l\geq 1 }},\\
        C'_j&\coloneqq  \max_{(i,l) \in \Psi'(j)}  \frac{\binom{|I(j)|-1}{i} \binom{|L(j)|}{l}}{\binom{|I(j)|-1}{i} \binom{|L(j)|}{l} +   \binom{|I(j)|-2}{i-1} \binom{|L(j)|}{l} \IND{i\geq 1 } +  \binom{|I(j)|-1}{i} \binom{|L(j)|-1}{l-1} \IND{l\geq 1 }},\\
        C''_j &\coloneqq \max_{(i,l) \in \Psi''(j)} \frac{\binom{|I(j)|}{i} \binom{|L(j)|-1}{l}}{\binom{|I(j)|}{i} \binom{|L(j)|-1}{l} +   \binom{|I(j)|-1}{i-1} \binom{|L(j)|-1}{l} \IND{ i\geq 1 } +  \binom{|I(j)|}{i} \binom{|L(j)|-2}{l-1} \IND{l\geq 1 }}
    \end{align*}
    with 
    \begin{align*}
        \Psi(j) &\coloneqq  \left\{ (i,l) \, : \, i \in \{0,\dots,|I(j)|\}, l\in \{0,\dots,|L(j)|\} \text{ and } i+2l \geq 2d_j+1 \right\},\\
        \Psi'(j) &\coloneqq  \left\{ (i,l) \, : \, i \in \{0,\dots,|I(j)|-1\}, l\in \{0,\dots,|L(j)|\} \text{ and } i+2l \geq 2d_j -1 \right\},\\
        \Psi''(j) &\coloneqq  \left\{ (i,l) \, : \, i \in \{0,\dots,|I(j)|\}, l\in \{0,\dots,|L(j)|-1\} \text{ and } i+2l \geq 2d_j-3 \right\}.
    \end{align*}
\end{theorem}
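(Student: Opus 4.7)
The plan is to follow the change-of-measure template used in the proof of Theorem~\ref{theorem:lower_bound_wo_ind_detailed}, but with two qualitatively different local perturbations available at each arm $j\neq i^\ast$: swapping $P_\succ^{(j,z)}\leftrightarrow P_\cong^{(j,z)}$ for $z\in I(j)$ (which converts a ``$j\cong z$'' pair into a ``$j\succ z$'' pair, raising $\text{CP}(j)$ by $\tfrac{1}{2}$), or swapping $P_\succ^{(j,z)}\leftrightarrow P_\prec^{(j,z)}$ for $z\in L(j)$ (raising $\text{CP}(j)$ by $1$). By construction of $D_{j,z}(\mathbf{P})$, the local Kullback--Leibler divergence between $\mathbf{P}$ and the perturbed instance is bounded by $D_{j,z}(\mathbf{P})$ in either case.

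The three coefficients $C_j, C'_j, C''_j$ will correspond to three families of alternative instances $\widetilde{\mathbf{P}}$. For the $C_j$ term I will pick $I'\subseteq I(j)$ and $L'\subseteq L(j)$ of sizes $|I'|=i$, $|L'|=l$ with $(i,l)\in\Psi(j)$, and perform the corresponding simultaneous local swaps for every $z\in I'\cup L'$. A direct accounting yields $\text{CP}(\widetilde{\mathbf{P}},j)-\text{CP}(\widetilde{\mathbf{P}},i^\ast)=i/2+l-d_j$ (when $i^\ast\notin I'\cup L'$), so the defining condition $i+2l\geq 2d_j+1$ of $\Psi(j)$ ensures this gap is at least $\tfrac{1}{2}$ and hence $i^\ast\notin\mathcal{C}(\widetilde{\mathbf{P}})$. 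For the $C'_j$ (resp.\ $C''_j$) term I will additionally include $i^\ast$ itself in the swap set when $i^\ast\in I(j)$ (resp.\ $i^\ast\in L(j)$), using $I'\subseteq I(j)\setminus\{i^\ast\}$ (resp.\ $L'\subseteq L(j)\setminus\{i^\ast\}$): this extra swap simultaneously lowers $\text{CP}(i^\ast)$ by $\tfrac{1}{2}$ (resp.\ by $1$) and raises $\text{CP}(j)$ by the same amount, leading to the looser thresholds $i+2l\geq 2d_j-1$ and $i+2l\geq 2d_j-3$ appearing in $\Psi'(j)$ and $\Psi''(j)$.

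For each so constructed $\widetilde{\mathbf{P}}$, applying Lemma~\ref{Le_from_Kaufmann} to the event $\mathcal{E}=\{\hat{i}_{\Algo{A}}=i^\ast\}$, together with $\prob_{\mathbf{P}}(\mathcal{E})\geq 1-\delta$ and $\prob_{\widetilde{\mathbf{P}}}(\mathcal{E})\leq\delta$, Lemma~\ref{Le_Le1_for_LB_Proof}(ii) and the upper bound of each local KL by $D_{j,z}(\mathbf{P})$, will give the basic per-subset inequality
\begin{equation*}
    \ln\tfrac{1}{2.4\delta}\leq\sum_{z\in\mathrm{swap\ set}}\exptd_{\mathbf{P}}\!\left[\tau_{j,z}^{\Algo{A}}\right]D_{j,z}(\mathbf{P}).
\end{equation*}
Next, I will sum this inequality over all $\binom{|I(j)|}{i}\binom{|L(j)|}{l}$ admissible pairs $(I',L')$ for fixed $(i,l)$ in the relevant $\Psi$-set. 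A double counting shows that each $z\in I(j)$ appears with multiplicity $\binom{|I(j)|-1}{i-1}\binom{|L(j)|}{l}\IND{i\geq 1}$ and each $z\in L(j)$ with multiplicity $\binom{|I(j)|}{i}\binom{|L(j)|-1}{l-1}\IND{l\geq 1}$; in cases (ii) and (iii) there is in addition the always-present contribution of the $i^\ast$-term, producing the extra summand in the denominators of $C'_j$ and $C''_j$. Bounding each $D_{j,z}$ by the maximum $\max_{z'} D_{j,z'}$ and invoking the elementary inequality $aX+bY\leq(a+b)(X+Y)$ (valid for $a,b,X,Y\geq 0$) to factor out the combinatorial denominator as a sum, I recover exactly the ratios appearing inside the $\max$ in the definitions of $C_j^{(i,l)}, C'^{(i,l)}_j, C''^{(i,l)}_j$. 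Maximizing over $(i,l)$ and over the three cases, and proceeding as in the final step of the proof of Theorem~\ref{theorem:lower_bound_wo_ind_detailed}, will yield the claimed lower bound on $\exptd_{\mathbf{P}}[\tau^{\Algo{A}}]$.

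The main obstacle I anticipate is the careful Copeland-score accounting in cases (ii) and (iii): a swap involving $i^\ast$ simultaneously raises $\text{CP}(j)$ and lowers $\text{CP}(i^\ast)$, and keeping track of both contributions, as well as the different values $\tfrac{1}{2}$ and $1$ depending on whether $i^\ast$ is in $I(j)$ or $L(j)$, is what produces the shifted thresholds $2d_j-1$ and $2d_j-3$ in $\Psi'(j)$ and $\Psi''(j)$. A related technical subtlety is that the two swap types act on disjoint sets with unequal per-swap score contributions, so the resulting combinatorial denominator is a sum of two binomial products rather than a single product; the inequality $aX+bY\leq(a+b)(X+Y)$ is what produces the ``$+$'' sign in place of a ``$\max$'' in the definitions of $C_j,C'_j,C''_j$.
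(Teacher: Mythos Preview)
Your proposal is correct and follows essentially the same approach as the paper: three families of alternative instances built from local swaps in $I(j)$ and $L(j)$ (with $i^\ast$ optionally forced into the swap set when $i^\ast\in I(j)$ or $i^\ast\in L(j)$), the change-of-measure inequality from Lemma~\ref{Le_from_Kaufmann}, and the averaging over all subset pairs of fixed size via double counting to produce the binomial ratios in $C_j,C'_j,C''_j$. Your parametrization of cases (ii) and (iii) via $I'\subseteq I(j)\setminus\{i^\ast\}$ and $L'\subseteq L(j)\setminus\{i^\ast\}$ is in fact cleaner than the paper's own exposition, which has minor notational inconsistencies at exactly this step.
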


Before providing its proof, let us briefly discuss this lower bound. At first, note that in fact all of the binomial coefficients appearing in the definitions of $C_j$, $C'_j$ and $C''_j$ are well-defined and of the form $\binom{n}{k}$ for $0\leq k\leq n$. In the definition of $C_j$ this is assured by means of the indicator functions $\IND{i\geq 1 }$ and $\IND{l\geq 1 }$, and in the definition of $C'_j$ resp. $C''_j$ this follows from the definitions of $\Psi'(j)$ resp. $\Psi''(j)$. For example, if $(i,l) \in \Psi'(j)$, then $i\geq 0$ assures $|I(j)| \geq i+1 \geq 1$, and in case $i\geq 1$ we have $|I(j)| \geq 2$, whence $\binom{|I(j)|-2}{i-1} \binom{|L(j)|}{l}$ is well-defined.\par 
    Note that Thm.~\ref{theorem:lower_bound_wo_ind_detailed_main_part} is a direct consequence of the stated lower bounds. For the sake of completeness, we formulate it as follows.
    \begin{proof}[Proof of Theorem~\ref{theorem:lower_bound_wo_ind_detailed_main_part}]
    Part (i) is exactly Theorem~\ref{theorem:lower_bound_wo_ind_detailed}, and part (ii) follows due to $\max\{C_j,C'_j\IND{i^\ast \in I(j)}, C''_j \IND{i^\ast \in L(j)}\} \geq C_j$ directly from Theorem~\ref{theorem:lower_bound_detailed}.
    \end{proof}

The maximum in the proof of Thm.~\ref{theorem:lower_bound_detailed} actually assures that the lower bound is non-trivial on any instance $\mathbf{P}$ considered in the theorem. This is made formal in the upcoming lemma.
\begin{lemma}\label{lemma:B5_is_never_trivial}
    Let $\mathbf{P}$ be an instance with indifferences such that $\mathcal{C}(\mathbf{P}) = \{i^\ast\}$. Then, for any $j\not= i^\ast$ exactly one of the following holds:
    \begin{itemize}
        \item[(i)] $i^\ast \not\in L(j) \cup I(j)$ and $\Psi(j)\not=\emptyset$,
        \item[(ii)] $i^\ast \in I(j)$ and $\Psi'(j)\not=\emptyset$,
        \item[(iii)] $i^\ast \in L(j)$ and $\Psi''(j)\not=\emptyset$.
    \end{itemize}
\end{lemma}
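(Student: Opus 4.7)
My plan is to establish the lemma in three short steps. Since $L(j)$ and $I(j)$ are disjoint (their definitions pick out different strict maxima among $P_{\succ}^{(\cdot,j)}, P_{\cong}^{(\cdot,j)}, P_{\prec}^{(\cdot,j)}$), the three events $i^\ast \in L(j)$, $i^\ast \in I(j)$, $i^\ast \notin L(j) \cup I(j)$ are mutually exclusive and exhaustive; the content of the lemma is therefore just the non-emptiness of the three $\Psi$-sets under the matching hypothesis.

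The first step is to reduce non-emptiness to a single linear inequality in each case. Maximizing $i + 2l$ over the admissible ranges in the definitions of $\Psi(j), \Psi'(j), \Psi''(j)$ yields the equivalences $\Psi(j) \neq \emptyset \Leftrightarrow |I(j)| + 2|L(j)| \geq 2d_j + 1$, $\Psi'(j) \neq \emptyset \Leftrightarrow |I(j)| + 2|L(j)| \geq 2d_j$, and $\Psi''(j) \neq \emptyset \Leftrightarrow |I(j)| + 2|L(j)| \geq 2d_j - 1$. In cases (ii) and (iii), the case hypothesis directly supplies $|I(j)| \geq 1$ respectively $|L(j)| \geq 1$, so that the $i$- respectively $l$-range in the definition of the $\Psi$-set is non-empty.

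The second step is the key identity $|I(j)| + 2|L(j)| = 2 d_j + 2 |L(i^\ast)| + |I(i^\ast)|$. Under the non-degeneracy assumption of Theorem~\ref{theorem:lower_bound_detailed} (which rules out the degenerate cases where none of $P_{\succ}^{(k,j)}, P_{\cong}^{(k,j)}, P_{\prec}^{(k,j)}$ is the strict mode), the $n-1$ other arms partition into those beating $k$, tied with $k$, and losing to $k$, giving $\text{CP}(k) = (n-1) - |L(k)| - \tfrac{1}{2}|I(k)|$. Substituting this for both $k = j$ and $k = i^\ast$ in $d_j = \text{CP}(i^\ast) - \text{CP}(j)$ and multiplying by $2$ yields the identity.

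Finally, I will conclude by a short case analysis. In case (i), $i^\ast \notin L(j) \cup I(j)$ together with non-degeneracy forces $j$ to strictly beat $i^\ast$, so $j \in L(i^\ast)$ and $|L(i^\ast)| \geq 1$; the identity then gives $|I(j)| + 2|L(j)| \geq 2d_j + 2 \geq 2d_j + 1$. In case (ii), the symmetry of the indifference relation gives $j \in I(i^\ast)$ and hence $|I(i^\ast)| \geq 1$, so $|I(j)| + 2|L(j)| \geq 2d_j + 1 \geq 2d_j$. In case (iii), the trivial bound $2|L(i^\ast)| + |I(i^\ast)| \geq 0$ suffices to yield $|I(j)| + 2|L(j)| \geq 2d_j \geq 2d_j - 1$. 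The only subtlety I foresee is to be careful that the Copeland-score expansion relies on the no-three-way-tie assumption inherited from the theorem; all remaining arithmetic is elementary.
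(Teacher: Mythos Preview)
Your proposal is correct and follows essentially the same approach as the paper: both use the Copeland-score decomposition $\text{CP}(l) = (n-1) - |L(l)| - \tfrac{1}{2}|I(l)|$ and then perform the same case analysis on the position of $i^\ast$ relative to $L(j)$ and $I(j)$. Your exact identity $|I(j)| + 2|L(j)| = 2d_j + 2|L(i^\ast)| + |I(i^\ast)|$ and the paper's upper bounds on $\text{CP}(i^\ast)$ are two phrasings of the same computation; the paper then exhibits the explicit witnesses $(|I(j)|,|L(j)|) \in \Psi(j)$, $(|I(j)|-1,|L(j)|) \in \Psi'(j)$, and $(|I(j)|,|L(j)|-1) \in \Psi''(j)$, which are precisely the maximizers you use to characterize non-emptiness.
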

\begin{proof}[Proof of Lemma~\ref{lemma:B5_is_never_trivial}]
    Suppose $\mathbf{P}$ with $\mathcal{C}(\mathbf{P}) = \{i^\ast\}$ and $j\not= i^\ast$ to be fixed. Regarding the definition of the Copeland score, we have
    \begin{equation*}
        \text{CP}(\mathbf{P},l) = (n-1) - |L(l)| - \frac{1}{2}|I(l)|
    \end{equation*}
    for any $l\in \mathcal{A}$. For fixed $j\not= i^\ast$ we have in particular 
    \begin{equation*}
        \text{CP}(\mathbf{P},i^\ast) \leq \begin{cases} n-2, &\text{ if } i^\ast \not\in I(j) \cup L(j),\\
        n-3/2, &\text{ if } i^\ast \in I(j),\\
        n-1, &\text{ if } i^\ast \in L(j)
        \end{cases}
    \end{equation*}
    and thus 
    \begin{align*}
        d_j = d_{j}(\mathbf{P}) \leq \begin{cases}
            |L(j)| + |I(j)|/2 - 1, &\text{ if } i^\ast \not\in L(j) \cup I(j),\\
            |L(j)| + |I(j)|/2 - 1/2, &\text{ if } i^\ast \in I(j),\\
            |L(j)| + |I(j)|/2, &\text{ if } i^\ast \in L(j).
        \end{cases}
    \end{align*}
    If $i^\ast \not\in L(j) \cup I(j)$, then $2d_j \leq 2|L(j)| + |I(j)| -2$ and $(|I(j)|,|L(j)|) \in \Psi(j)$ follows. In case $i^\ast \in I(j)$, $2d_j \leq 2|L(j)| + |I(j)|-1$ and thus $(|I(j)|-1,|L(j)|) \in \Psi'(j)$, and similarly we see in case $i^\ast \in L(j)$ that $2d_j \leq 2|L(j)| + |I(j)|$ implies $(|I(j)|,|L(j)|-1) \in \Psi''(j)$.
\end{proof}

In contrast to Thm.~\ref{theorem:lower_bound_detailed}, the corresponding simplified version stated in  Thm.~\ref{theorem:lower_bound_wo_ind_detailed_main_part} is e.g. trivial on particular instance $\mathbf{P}$ defined via
\begin{equation*}
    (P_{\succ}^{(i,j)})_{i,j} = \begin{pmatrix} - & 1/2 & 1/2 \\ 1/4 & - & 1/2 \\
    1/4 & 1/4 & -\end{pmatrix}.
\end{equation*}
To see this, note that $P_{\cong}^{(x,y)} = 1/4$ holds for all $x,y \in \mathcal{A}$, $\mathcal{C}(\mathbf{P}) = \{1\}$ and observe that $d_2=1 = |L(2)|$ resp. $d_3 = 2 = |L(3)|$ and $|I(2)| = |I(3)| = 0$ imply $\Psi(2) = \emptyset$ resp. $\Psi(3) = \emptyset$.\par
Now, let us proceed with the proof of Thm.~\ref{theorem:lower_bound_detailed}. The proof idea is similar to that of Thm.~\ref{theorem:lower_bound_detailed}, but as it is more sophisticated and technical, we prove it for the sake of completeness in detail.

\begin{proof}[Proof of Theorem \ref{theorem:lower_bound_detailed}]
    Suppose $\Algo{A}$ and $\mathbf{P}$ with $\mathcal{C}(\mathbf{P}) = \{i^\ast\}$ are fixed. Assume w.l.o.g. $i^\ast = 1$ and let $j\in [n] \setminus \{i^{\ast}\}$ be arbitrary but fixed for the moment. \\[0.5em] 
    \textbf{Claim~1:} The following holds:
    \begin{itemize}
        \item[(i)] If $(i,l) \in \Psi(j)$,  we have for each $\mathcal{I} \in  I(j)^{[i]}, \mathcal{L} \in  L(j)^{[l]}$ that   
        \begin{equation}\label{eq_LBindiff_Cl1b}
            \ln \frac{1}{2.4\delta} \leq \sum\nolimits_{z\in \mathcal{I}} \exptd_{\mathbf{P}}\left[\tau_{jz}^{\Algo{A}}\right] D_{j,z}(\mathbf{P}) + \sum\nolimits_{z\in \mathcal{L}} \exptd_{\mathbf{P}}\left[\tau_{jz}^{\Algo{A}}\right] D_{j,z}(\mathbf{P}).
        \end{equation}
        \item[(ii)] If $(i,l) \in \Psi'(j)$ and $i^\ast \in I(j)$, \eqref{eq_LBindiff_Cl1b} holds for all $\mathcal{I} \in  I(j)^{[i]}, \mathcal{L} \in  L(j)^{[l]}$ with $i^\ast \in \mathcal{I}$.
        \item[(iii)] If $(i,l) \in \Psi''(j)$ and $i^\ast \in L(j)$, \eqref{eq_LBindiff_Cl1b} holds for all $\mathcal{I} \in  I(j)^{[i]}, \mathcal{L} \in  L(j)^{[l]}$ with $i^\ast \in \mathcal{L}$.
    \end{itemize}
    %
    \textbf{Proof of Claim~1:} To prove (i), suppose  $(i,l) \in \Psi(j)$, that is, $i\in \{0,\dots,|I(j)|\}$ and $l\in \{0,\dots,|L(j)|\}$ are such that $i+2l \geq 2d_j+1$. Let $\mathcal{I} \in  I(j)^{[i]}$ and $ \mathcal{L} \in  L(j)^{[l]}$ be arbitrary but fixed. Define $\widetilde{P}_\succ$ as a modification of $P_\succ$  via  $\widetilde{P}_\succ^{(x,y)} \coloneqq P_\succ^{(x,y)}$ for any $(x,y) \in \mathcal{A} \times \mathcal{A} $ with $x\not=j\not=y$, 
    \begin{align}\label{eq:LB_proof_construction1}
        \widetilde{P}_{\succ}^{(j,z)} \coloneqq P_{\cong}^{(j,z)}, 
        \quad \widetilde{P}_{\cong}^{(j,z)} \coloneqq P_{\succ}^{(j,z)} 
        \quad \text{and} \quad \widetilde{P}_{\prec}^{(j,z)} \coloneqq P_{\prec}^{(j,z)}
    \end{align}
    for all $z\in \mathcal{I}$, and 
    \begin{align}\label{eq:LB_proof_construction2}
        \widetilde{P}_{\succ}^{(j,z)} \coloneqq  P_{\prec}^{(j,z)}, \quad 
        \widetilde{P}_{\cong}^{(j,z)} \coloneqq P_{\cong}^{(j,z)} 
        \quad \text{and} \quad \widetilde{P}_{\prec}^{(j,z)} \coloneqq P_{\succ}^{(j,z)}
    \end{align}
    for all $z\in \mathcal{L}$. By construction of $\widetilde{P}_\succ$, we have\footnote{In fact, the difference $\text{CP}(P_\succ,i^\ast) - \text{CP}(\widetilde{P}_\succ,i^\ast)$ is $1$ resp. $1/2$ resp. $0 $ if $i^\ast \in \mathcal{L}$ resp. $i^\ast \in \mathcal{I}$ resp. $i^\ast \not\in \mathcal{I} \cup \mathcal{L}$.} $\text{CP}(\widetilde{P}_\succ,i^\ast) \leq \text{CP}(P_\succ,i^\ast)$. Moreover, as the modes of the $(j,z)$-components of $P_\succ$ have been \textit{flipped to the ``category'' $j\succ z$} for $z\in \mathcal{I} \cup \mathcal{L}$ and remained unchanged otherwise, we have 
    \begin{equation*}
        \text{CP}(\widetilde{P}_{\succ}, j) = \text{CP}(P_{\succ},j) + \frac{1}{2}|\mathcal{I}| + |\mathcal{L}| = \text{CP}(P_{\succ},j) + \frac{i}{2} + l.
    \end{equation*}
    As $i+2l \geq 2d_j +1$ holds by assumption, we thus get
    \begin{align*}
    \text{CP}(\widetilde{P}_{\succ}, j) &\geq
    \text{CP}(P_{\succ},j) + d_j + 1/2 \\
    &= \text{CP}(P_{\succ},j) + (\text{CP}(P_{\succ},i^\ast) - \text{CP}(P_{\succ},j)) + 1/2 \\
    &= \text{CP}(P_{\succ},i^\ast) + 1/2 
    \geq \text{CP}(\widetilde{P}_{\succ},i^\ast) + 1/2.
    \end{align*}
    This shows\footnote{In fact, by construction we even have $\mathcal{C}(\widetilde{P}_{\succ}) = \{j\}$.} $i^\ast \not\in \mathcal{C}(P_\succ)$, and by assumption on $\Algo{A}$, the event $\mathcal{E} \coloneqq \{i^\ast \in \mathcal{C}(\mathbf{P})\} \in \mathcal{F}_{\tau^{\Algo{A}}}$ has the properties
    \begin{align*}
        \prob_{P_{\succ}}(\mathcal{E}) \geq 1-\delta, \quad \prob_{\widetilde{P}_{\succ}}(\mathcal{E}) \leq \delta.
    \end{align*}
    Consequently, Lemma~\ref{Le_from_Kaufmann} and part~(ii) of Lemma~\ref{Le_Le1_for_LB_Proof} assure
    \begin{equation}\label{eq:proof_LB_claim1_eq}
        \sum\nolimits_{x<y} \exptd_{\mathbf{P}} \left[ \tau_{x,y}^{\Algo{A}} \right] \KL\left(P_{x,y}, \widetilde{P}_{x,y} \right) \geq \mathrm{kl}\left(\prob_{P_\succ}(\mathcal{E}),\prob_{\widetilde{P}_\succ}(\mathcal{E})\right) \geq \mathrm{kl}(1-\delta,\delta) \geq \ln \frac{1}{2.4\delta}.
    \end{equation}
    In case $x\not= j \not= y$, $\widetilde{P}_{x,y} = P_{x,y}$ assures $\KL\left(P_{x,y}, \widetilde{P}_{x,y} \right) = 0$. In case $z\in \mathcal{I} \subseteq I(j)$, we have 
    \begin{equation*}
        \KL\left(P_{j,z}, \widetilde{P}_{j,z} \right) = \KL\left(
        \left( P_{\succ}^{(j,z)},P_{\cong}^{(j,z)},P_{\prec}^{(j,z)}\right), \left( P_{\cong}^{(j,z)},P_{\succ}^{(j,z)},P_{\prec}^{(j,z)} \right) \right) \leq D_{j,z}(\mathbf{P}),
    \end{equation*}
    and in case $z\in \mathcal{L} \subseteq L(j)$ we similarly see
    \begin{equation*}
        \KL\left(P_{j,z}, \widetilde{P}_{j,z} \right) = \KL\left(
        (P_{\succ}^{(j,z)},P_{\cong}^{(j,z)},P_{\prec}^{(j,z)}), (P_{\prec}^{(j,z)},P_{\cong}^{(j,z)},P_{\succ}^{(j,z)}) \right) \leq D_{j,z}(\mathbf{P}).
    \end{equation*}
    Combining these estimates with \eqref{eq:proof_LB_claim1_eq} proves \eqref{eq_LBindiff_Cl1b}.\\[0.5em]
    To prove (ii), suppose now $(i,l) \in \Psi'(j)$ and $\mathcal{I} \in I(j)^{[i]}$, $\mathcal{L} \in L(j)^{[l]}$ with $i^\ast \in \mathcal{I}$ are given. Similarly as above, one may construct an instance $\mathbf{\widetilde{P}}$, which differs from $\mathbf{P}$ only on positions $(j,z)$, $z\in \mathcal{I} \cup \mathcal{L} \cup \{i^\ast\}$, such that \eqref{eq:LB_proof_construction1} for all $z\in \mathcal{I}$ and \eqref{eq:LB_proof_construction1} for all $z\in \mathcal{L}$ and $\widetilde{P}_{\succ}^{(j,i^\ast)} = P_{\cong}^{(j,i^\ast)}$, $\widetilde{P}_{\cong}^{(j,i^\ast)} = P_{\succ}^{(j,i^\ast)}$ and $\widetilde{P}_{\prec}^{(j,i^\ast)} = P_{\prec}^{(j,i^\ast)}$.
    Then, $\text{CP}(\widetilde{P}_{\succ}, i^\ast) = \text{CP}(P_{\succ},i^\ast)$ holds and again $\text{CP}(\widetilde{P}_{\succ},j) = \text{CP}(P_{\succ},j) + \frac{i}{2} + l$. Due to $i+2l \geq 2d_j -1$ we obtain $\text{CP}(\widetilde{P}_{\succ},j)  \geq \text{CP}(\widetilde{P}_{\succ},j) + 1/2 $ and thus $i^\ast \not\in \mathcal{C}(\widetilde{P}_\succ)$.
    Thus, the same argumentation as above shows that \eqref{eq_LBindiff_Cl1b} also holds in this case.\\[0.5em]
    For proving (iii), construct $\widetilde{P}_{\succ}$ such that it differs from $P$ only on positions $(j,z)$, $z\in \mathcal{I} \cup \mathcal{L} \cup \{i^\ast\}$, fulfills \eqref{eq:LB_proof_construction1} for all $z\in \mathcal{I}$ and \eqref{eq:LB_proof_construction1} for all $z\in \mathcal{L}$ and further  $\widetilde{P}_{\succ}^{(j,i^\ast)} = P_{\prec}^{(j,i^\ast)}$, $\widetilde{P}_{\cong}^{(j,i^\ast)} = P_{\cong}^{(j,i^\ast)}$ and $\widetilde{P}_{\prec}^{(j,i^\ast)} = P_{\succ}^{(j,i^\ast)}$. Then, $\text{CP}(\widetilde{P}_{\succ},i^\ast) = \text{CP}(P_{\succ},i^\ast) - 1$ holds, and the assumptions stated in (iii) suffice to show $i^\ast \not\in \mathcal{C}(\widetilde{P}_{\succ})$. Therefore, repeating the arguments from above shows \eqref{eq_LBindiff_Cl1b}.$\blacksquare$\\[0.5em]
    As $\mathbf{P}$ is fixed, we may simply write $D_{x,y}$ for $D_{x,y}(\mathbf{P})$ throughout the rest of the proof.
    First, let $(i,l) \in \Psi(j)$ be arbitrary but fixed, i.e.,  $i\in \{0,\dots,|I(j)|\}$ and $l\in \{0,\dots,|L(j)|\}$  and  $i+2l \geq 2d_j + 1$ hold. According to Part~(i) of Claim~1,  \eqref{eq_LBindiff_Cl1b} holds for any of the $\binom{|I(j)|}{i} \binom{|L(j)|}{l}$ many $(\mathcal{I}, \mathcal{L}) \in I(j)^{[i]} \times L(j)^{[l]}$. When summing \eqref{eq_LBindiff_Cl1b} over all such $(\mathcal{I},\mathcal{L})$,
    the summand $\exptd_{\mathbf{P}}\left[\tau_{jz}^{\Algo{A}}\right] D_{j,z}$ appears exactly $\binom{|I(j)|-1}{i-1} \binom{|L(j)|}{l} \IND{i\geq 1}$ many times if $z\in I(j)$, and it appears $\binom{|I(j)|}{i} \binom{|L(j)|-1}{l-1} \IND{l\geq 1}$ many times  if $z \in L(j)$. Consequently, we have 
    \begin{align*}
        \binom{|I(j)|}{i} &\binom{|L(j)|}{l} \ln\frac{1}{2.4\delta} \\
        &\leq  \binom{|I(j)|-1}{i-1} \binom{|L(j)|}{l} \IND{i\geq 1 } \sum\nolimits_{z\in \mathcal{I}} \exptd_{\mathbf{P}}\left[\tau_{jz}^{\Algo{A}}\right] D_{j,z} \\
        &\phantom{=}+ \binom{|I(j)|}{i} \binom{|L(j)|-1}{l-1} \IND{l\geq 1 } \sum\nolimits_{z\in \mathcal{L}} \exptd_{\mathbf{P}}\left[\tau_{jz}^{\Algo{A}}\right] D_{j,z} \\
        &\leq \max\nolimits_{z\in I(j)\cup L(j)} D_{j,z} \cdot \sum\nolimits_{z\in I(j) \cup L(j)} \exptd_{\mathbf{P}}\left[ \tau_{jz}^{\Algo{A}} \right] \\
        &\phantom{=} \cdot \left[  \binom{|I(j)|-1}{i-1} \binom{|L(j)|}{l} \IND{i\geq 1 } + \binom{|I(j)|}{i} \binom{|L(j)|-1}{l-1} \IND{l\geq 1 } \right].
    \end{align*}
    Thus, we obtain for $\tau_{j}^{\Algo{A}} = \sum_{z\not= j} \tau_{jz}^{\Algo{A}}$ the estimate
    \begin{align}
        \exptd_{\mathbf{P}}&\left[\tau_j^{\Algo{A}} \right] \geq \sum\nolimits_{z\in I(j) \cup L(j)} \exptd_{\mathbf{P}}\left[ \tau_{jz}^{\Algo{A}} \right] \notag \\
        &\geq \frac{\binom{|I(j)|}{i} \binom{|L(j)|}{l}}{\binom{|I(j)|-1}{i-1} \binom{|L(j)|}{l} \IND{i\geq 1 } + \binom{|I(j)|}{i} \binom{|L(j)|-1}{l-1} \IND{l\geq 1 }} \left( \ln \frac{1}{2.4\delta} \right) \min\nolimits_{z\in I(j) \cup L(j)} \frac{1}{D_{j,z}}.  \label{eq:LB_p1}
    \end{align}
    Next, suppose $i^\ast \in I(j)$ and $(i,l) \in \Psi'(j)$ be fixed for the moment. For any of the $\binom{|I(j)|-1}{i} \binom{|L(j)|}{l}$ many $(\mathcal{I}, \mathcal{L}) \in I(j)^{[i]} \times \mathcal{L}(j)^{[l]}$ with $i^\ast \in \mathcal{I}$, Part~(ii) of Claim~1 yields that \eqref{eq_LBindiff_Cl1b} holds. Summing this over all such $(\mathcal{I},\mathcal{L})$, we observe:
    \begin{itemize}
        \item[$\bullet$] The summand $\exptd_{\mathbf{P}}\left[\tau_{ji^\ast}^{\Algo{A}}\right] D_{j,i^\ast}$ appears $\binom{|I(j)|-1}{i} \binom{|L(j)|}{l}$ many times.
        \item[$\bullet$] For $z\in I(j) \setminus \{i^\ast\}$, the summand $\exptd_{\mathbf{P}}\left[\tau_{jz}^{\Algo{A}}\right] D_{j,z}$ appears $\binom{|I(j)|-2}{i-1} \binom{|L(j)|}{l}$  many times if $|I(j)|>i\geq 1$, and it does not appear at all if $i=0$. Thus, this summand appears $\binom{|I(j)|-2}{i-1} \binom{|L(j)|}{l} \IND{|I(j)|>i\geq 1 }$ many times.
        \item[$\bullet$] For $z\in L(j)$, the summand $\exptd_{\mathbf{P}}\left[\tau_{jz}^{\Algo{A}}\right] D_{j,z}$ appears $\binom{|I(j)|-1}{i} \binom{|L(j)|-1}{l-1} \IND{l\geq 1 }$ many times.
    \end{itemize}
    Thus, we obtain 
    \begin{align*}
        &\binom{|I(j)|-1}{i} \binom{|L(j)|}{l} \ln\frac{1}{2.4\delta} \\
        &\leq  \binom{|I(j)|-1}{i} \binom{|L(j)|}{l} \exptd_{\mathbf{P}}\left[\tau_{ji^\ast}^{\Algo{A}}\right] D_{j,i^\ast} \\
        &\phantom{=} +\binom{|I(j)|-2}{i-1} \binom{|L(j)|}{l} \IND{i\geq 1 } \sum\nolimits_{z\in \mathcal{I}} \exptd_{\mathbf{P}}\left[\tau_{jz}^{\Algo{A}}\right] D_{j,z}\\
        &\phantom{=} + \binom{|I(j)|-1}{i} \binom{|L(j)|-1}{l-1} \IND{l\geq 1 } \sum\nolimits_{z\in \mathcal{L}} \exptd_{\mathbf{P}}\left[\tau_{jz}^{\Algo{A}}\right] D_{j,z} \\
        &\leq \max\nolimits_{z\in I(j)\cup L(j)} D_{j,z} \cdot \sum\nolimits_{z\in I(j) \cup L(j)} \exptd_{\mathbf{P}}\left[ \tau_{jz}^{\Algo{A}} \right] \\
        &\phantom{=} \cdot \left[ \binom{|I(j)|-1}{i} \binom{|L(j)|}{l} +   \binom{|I(j)|-2}{i-1} \binom{|L(j)|}{l} \IND{i\geq 1 } +  \binom{|I(j)|-1}{i} \binom{|L(j)|-1}{l-1} \IND{l\geq 1 } \right],
    \end{align*}
    which shows similarly as above that 
    \begin{align}
        \exptd_{\mathbf{P}}\left[ \tau_j^{\Algo{A}}\right] \geq 
        &\frac{\binom{|I(j)|-1}{i} \binom{|L(j)|}{l}}{\binom{|I(j)|-1}{i} \binom{|L(j)|}{l} +   \binom{|I(j)|-2}{i-1} \binom{|L(j)|}{l} \IND{i\geq 1 } +  \binom{|I(j)|-1}{i} \binom{|L(j)|-1}{l-1} \IND{l\geq 1 }} \left( \ln \frac{1}{2.4\delta} \right) \notag \\
        &\cdot \min\nolimits_{z\in I(j) \cup L(j)} \frac{1}{D_{j,z}}. \label{eq:LB_p2}
    \end{align}
    Finally, suppose $i^\ast \in L(j)$ and let $(i,l) \in \Psi''(j)$ be fixed for the moment. For any of the $\binom{|I(j)|}{i}\binom{|L(j)|-1}{l}$ many $(\mathcal{I},\mathcal{L}) \in I(j)^{[i]} \times L(j)^{[l]}$ with $i^\ast \in L(j)$, Part~(iii) of Claim~1 yields that \eqref{eq_LBindiff_Cl1b} holds. When summing over all these $(\mathcal{I},\mathcal{L})$, we see:
    \begin{itemize}
        \item[$\bullet$] The summand $\exptd_{\mathbf{P}}\left[\tau_{ji^\ast}^{\Algo{A}}\right] D_{j,i^\ast}$ appears $\binom{|I(j)|}{i} \binom{|L(j)|-1}{l}$ many times.
        \item[$\bullet$] For $z\in I(j)$, the summand $\exptd_{\mathbf{P}}\left[\tau_{jz}^{\Algo{A}}\right] D_{j,z}$ appears $\binom{|I(j)|-1}{i-1} \binom{|L(j)|-1}{l} \IND{i\geq 1 }$ many times.
        \item[$\bullet$] For $z\in L(j) \setminus \{i^\ast\}$, the summand $\exptd_{\mathbf{P}}\left[\tau_{jz}^{\Algo{A}}\right] D_{j,z}$ appears $\binom{|I(j)|}{i} \binom{|L(j)|-2}{l-1} \IND{l\geq 1 }$ many times.
    \end{itemize}
    Analogously as above, we obtain 
    \begin{align}
        \exptd_{\mathbf{P}}\left[ \tau_j^{\Algo{A}}\right] \geq 
        &\frac{\binom{|I(j)|}{i} \binom{|L(j)|-1}{l}}{\binom{|I(j)|}{i} \binom{|L(j)|-1}{l} +   \binom{|I(j)|-1}{i-1} \binom{|L(j)|-1}{l} \IND{i\geq 1 } +  \binom{|I(j)|}{i} \binom{|L(j)|-2}{l-1} \IND{l\geq 1 }} \left( \ln \frac{1}{2.4\delta} \right) \notag \\
        &\cdot \min\nolimits_{z\in I(j) \cup L(j)} \frac{1}{D_{j,z}}. \label{eq:LB_p3}
    \end{align}
    As \eqref{eq:LB_p1} holds for all $(i,l) \in \Psi(j)$, \eqref{eq:LB_p2} for all $(i,l) \in \Psi'(j)$, if $i^\ast \in I(j)$, and \eqref{eq:LB_p3} holds for all $(i,l) \in \Psi''(j)$ if $i^\ast \in L(j)$, combining these estimates concludes the proof.
    \end{proof}

    For the sake of comparison, let us state the consequences from Theorem~\ref{theorem:lower_bound_detailed} for the particular case when the indifferences are non-dominant in the sense that $I(z)=\emptyset$ for all $z$. Note that the maximum appearing therein is exactly the same term as that in the lower bound for Copeland winner identification without indifferences (Thm.~\ref{theorem:lower_bound_wo_ind_detailed}).
    \begin{corollary}\label{corollary:lower_bound_wo_indifferences}
        If $\Algo{A}$ is an algorithm that correctly identifies the Copeland winner with confidence $1-\delta$ for any $\mathbf{P}$ (possibly with indifferences), then we have for all such $\mathbf{P}$ with  $\min_{i<j} \min\{P_{\succ}^{(i,j)},P_{\cong}^{(i,j)},P_{\prec}^{(i,j)}\}>0$,  $\max_{z\in \mathcal{A}} |I(z)| = 0$ and $\mathcal{C}(\mathbf{P}) = \{i^\ast\}$ that
        \begin{align*}
            \exptd_{\mathbf{P}}\left[\tau^{\Algo{A}} \right] \geq 
            \ln \frac{1}{2.4\delta} \sum_{j\not=i^\ast} &\max \left\{ \frac{|L(j)|}{d_j +1 } \IND{|L(j)| \geq d_j + 1  } , \frac{|L(j)|-1}{|L(j)| + d_j - 2} \IND{i^\ast \in L(j) } \right\}  \\ &\cdot \ \min_{z\in L(j) \cup I(j)} \frac{1}{D_{j,z}(\mathbf{P})}.
        \end{align*}        
    \end{corollary}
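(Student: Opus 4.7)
The plan is to specialize Theorem~\ref{theorem:lower_bound_detailed} directly, using the hypothesis $|I(z)|=0$ for all $z$ to carry out a bookkeeping exercise on the combinatorial maxima. First I observe that if no pair has indifference as its mode, then by the Copeland score formula \eqref{def:Copeland_score} the second (half-point) sum vanishes for every arm, so $\text{CP}(l)=(n-1)-|L(l)|\in \mathbb{Z}$; in particular $d_j\in \mathbb{Z}_{\geq 1}$ for every $j\neq i^\ast$. Moreover $i^\ast \notin I(j)$ holds trivially, so the $C'_j$ branch of Theorem~\ref{theorem:lower_bound_detailed} can be dropped and only $C_j$ and $C''_j \mathbb{1}[i^\ast \in L(j)]$ survive. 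Also, Lemma~\ref{lemma:lower_bound_1} extends verbatim to this regime, since its only input is $\text{CP}(l)=(n-1)-|L(l)|$, so whenever $i^\ast \notin L(j)$ we have $|L(j)|\geq d_j+1$ and the first indicator is active.

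Next I will simplify $C_j$. Since $|I(j)|=0$, the only admissible first coordinate in $\Psi(j)$ is $i=0$, so $\Psi(j)=\{(0,l):0\leq l\leq |L(j)|,\ l\geq d_j+\tfrac12\}$, which because $d_j$ is integer equals $\{(0,l):d_j+1\leq l\leq |L(j)|\}$. This set is nonempty precisely when $|L(j)|\geq d_j+1$, matching the first indicator. In the ratio defining $C_j$ the term with $\mathbb{1}[i\geq 1]$ is killed, and using the identity $\binom{a}{b}/\binom{a-1}{b-1}=a/b$ I get $C_j=\max_{d_j+1\leq l\leq |L(j)|}|L(j)|/l=|L(j)|/(d_j+1)$.

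For $C''_j$ I again restrict to $i=0$, so $\Psi''(j)=\{(0,l):0\leq l\leq |L(j)|-1,\ l\geq d_j-\tfrac32\}$. When $d_j=1$ the pair $(0,0)$ is admissible and yields ratio $1=\frac{|L(j)|-1}{|L(j)|+d_j-2}$; when $d_j\geq 2$ the minimal $l$ is $d_j-1\geq 1$, the $\mathbb{1}[l\geq 1]$ term is active, and the same binomial identity gives ratio $\frac{|L(j)|-1}{|L(j)|-1+(d_j-1)}=\frac{|L(j)|-1}{|L(j)|+d_j-2}$. A quick monotonicity check in $l$ (the ratio strictly decreases as $l$ grows from $\max(1,d_j-1)$) shows this attains the maximum, so $C''_j=\frac{|L(j)|-1}{|L(j)|+d_j-2}$.

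Plugging these values back into Theorem~\ref{theorem:lower_bound_detailed}, and noting that the $D_{j,z}(\mathbf{P})$ definitions and the inner minimum (over $L(j)\cup I(j)=L(j)$) are unchanged, yields the corollary. The only real obstacle is the bookkeeping with degenerate binomial coefficients such as $\binom{|I(j)|-1}{i-1}=\binom{-1}{-1}$ when $i=0$; these are multiplied by $\mathbb{1}[i\geq 1]=0$ in Theorem~\ref{theorem:lower_bound_detailed}, so they cause no issue once one is careful to evaluate the indicators before the binomials.
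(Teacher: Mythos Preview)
Your proposal is correct and follows essentially the same approach as the paper: both specialize Theorem~\ref{theorem:lower_bound_detailed} by forcing $i=0$ throughout (since $|I(j)|=0$), identify $\Psi(j)$ and $\Psi''(j)$ as intervals in $l$, and evaluate $C_j$ and $C''_j$ via the identity $\binom{a}{b}/\binom{a-1}{b-1}=a/b$. Your treatment of the $d_j=1$ case in $C''_j$ and the remark on degenerate binomials are actually slightly more careful than the paper's own write-up.
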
    
    \begin{proof}[Proof of Corollary~\ref{corollary:lower_bound_wo_indifferences}]
        Suppose $\mathbf{P}$ is such that $\max_{z\in \mathcal{A}} |I(z)| = 0$ and recall the definitions of $C_j$, $C''_j$, $\Psi(j)$ and $\Psi''(j)$ from Thm.~\ref{theorem:lower_bound_detailed}. Then, $\text{CP}(j)$ and $d_j$ are integers for any $j\in \mathcal{A}$. Moreover, for any $j\in \mathcal{A} \setminus \{i^\ast\}$,  $|I(j)|=0$ directly implies $\Psi(j)=\{(0,d_j+1),\dots,(0,|L(j)|)\}$ and $\Psi''(j) = \{(0,d_j -1 ),\dots,(0,|L(j)|-1)\}$. Note that $\Psi(j) \not= \emptyset$ iff $|L(j)| \geq  d_j +1 $, whereas  
         $|L(j)| \geq d_j$ shows that $\Psi''(j) \not= \emptyset$ in any case. Using that   $\binom{a}{b} / \binom{a-1}{b-1} = \frac{a}{b}$ holds for any $a,b\in \N$ with $a\leq b$, we thus obtain
        \begin{align*}
            C_j &= \max\nolimits_{(i,l) \in \Psi(j)} \frac{\binom{|I(j)|}{i} \binom{|L(j)|}{l}}{\binom{|I(j)|}{i} \binom{|L(j)|-1}{l-1}} \\
            &= \max\nolimits_{l \in \{d_j+1,\dots, |L(j)|\}} \frac{|L(j)|}{l} 
            = \frac{|L(j)|}{d_j +1 } \IND{|L(j)| \geq d_j +1 }
        \end{align*}
        and similarly 
        \begin{align*}
            C''_j &= \max\nolimits_{(i,l) \in \Psi''(j)} \frac{\binom{|I(j)|}{i} \binom{|L(j)|-1}{l}}{\binom{|I(j)|}{i} \binom{|L(j)|-1}{l} + \binom{|I(j)|}{i} \binom{|L(j)|-2}{l-1}} \\
            &= \max\nolimits_{l\in \{d_j -1 ,\dots,|L(j)|-1\}} \frac{\binom{|L(j)|-1}{l}}{\binom{|L(j)|-1}{l} + \frac{l}{|L(j)|-1} \binom{|L(j)|-1}{l}} \\
            &= \max\nolimits_{l\in \{d_j -1 ,\dots,|L(j)|-1\}} \frac{|L(j)|-1}{|L(j)|-1+l} \\
            &= \frac{|L(j)|-1}{|L(j)| + d_j - 2 }.
        \end{align*}
        Thus, the statement follows from Thm.~\ref{theorem:lower_bound_detailed}.
    \end{proof}

    To conclude this section, we state in the following corollary worst-case consequences of Thm.~\ref{theorem:lower_bound_detailed} and Thm.~\ref{theorem:lower_bound_wo_ind_detailed}. They show in particular that -- in both learning scenarios with and without indifferences -- identifying the Copeland winner of $\mathbf{P}$ requires $\Omega(n^{2})$ samples in the worst case.
    \begin{corollary}
        Let $f:\N \rightarrow \N$ with $1\leq f(n) \leq \frac{n}{2}-1$ for all $n\in \N$ and $f(n) \in o(n)$ as $n\rightarrow \infty$ and let $\Delta\in (0,1/6)$ be arbitrary.
        \begin{itemize}
            \item[(i)]
            There exists a sequence $(\mathbf{P}^{n})_{n\in \N}$ of instances without indifferences with $(P^{n})^{(i,j)}_{\succ} \in \{1/2\pm \Delta\}$ for all $i,j\in \mathcal{A}$ and $\mathrm{CP}^\ast(\mathbf{P}^{n}) \geq  \lceil \frac{n}{2} + f(n)\rceil$ such that
            \begin{equation*}
                \exptd_{\mathbf{P}^{n}}\left[ \tau^{\Algo{A}} \right] \in \Omega\left( \frac{n^2}{f(n)\Delta^2} \ln \frac{1}{\delta} \right) 
            \end{equation*}
            for any algorithm $\Algo{A}$ that correctly identifies the Copeland winner of any $\mathbf{P}$ without indifferences with confidence $1-\delta$.
            \item[(ii)]
            There exists a sequence $(\mathbf{P}^{n})_{n\in \N}$ of instances with $(P^{n})^{(i,j)}_{\succ}, (P^{n})^{(i,j)}_{\cong}, (P^{n})^{(i,j)}_{\prec} \in \{1/3-\Delta, 1/3+2\Delta\}$ for all $i,j\in \mathcal{A}$ and $\mathrm{CP}^\ast(\mathbf{P}^{n}) \geq \lceil \frac{n}{2} + f(n)\rceil$ such that
            \begin{equation*}
                \exptd_{\mathbf{P}^{n}}\left[ \tau^{\Algo{A}} \right] \in \Omega\left( \frac{n^2}{f(n)\Delta^2} \ln \frac{1}{\delta} \right) 
            \end{equation*}
            for any algorithm $\Algo{A}$ that correctly identifies the Copeland winner of any $\mathbf{P}$ with indifferences with confidence $1-\delta$.
        \end{itemize}
    \end{corollary}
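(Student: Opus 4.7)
The plan is to apply Theorem~\ref{theorem:lower_bound_wo_ind_detailed_main_part} to explicit tournament instances realizing its worst case up to constants. For each $n$ set $m \coloneqq \lceil n/2\rceil + f(n) = \lceil n/2 + f(n)\rceil$ (using $f(n)\in\N$) and fix a near-regular orientation $\succ_{0}$ on $\{2,\ldots,n\}$ in which every arm has out-degree in $\{\lfloor(n-2)/2\rfloor,\lceil(n-2)/2\rceil\}$ (e.g., the cyclic tournament in which $i\succ_{0} j$ iff $(j-i)\bmod(n-1)\in\{1,\ldots,\lceil(n-2)/2\rceil\}$); extend this by letting arm $1$ beat an arbitrary $m$-element subset of $\{2,\ldots,n\}$ and lose to the rest. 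For part~(i), set $(P^{n})^{(i,j)}_{\succ}=\tfrac{1}{2}+\Delta$ whenever $i$ beats $j$ in this augmented tournament and $\tfrac{1}{2}-\Delta$ otherwise. Then $\text{CP}(1)=m$, while for each $j\neq 1$, $\text{CP}(j)$ equals $j$'s $\succ_{0}$-out-degree (which lies in $\{\lfloor(n-2)/2\rfloor,\lceil(n-2)/2\rceil\}$) plus $1$ if $j$ beats arm $1$ and $0$ otherwise; since $f(n)\geq 1$, this forces $\mathcal{C}(\mathbf{P}^{n})=\{1\}$ with $\text{CP}^{\ast}=m$, and every $j\neq 1$ satisfies $d_{j}\leq f(n)+2$ together with $|L(j)|=n-1-\text{CP}(j)\geq\lfloor(n-2)/2\rfloor$.

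Because $f(n)\in o(n)$, for all sufficiently large $n$ we have $|L(j)|\geq d_{j}+1$, so the first branch of the maximum defining $C_{j}$ in Theorem~\ref{theorem:lower_bound_wo_ind_detailed_main_part}~(i) is active and yields $C_{j}\geq\lfloor(n-2)/2\rfloor/(f(n)+3)=\Omega(n/f(n))$. Combining this with part~(iii) of Lemma~\ref{Le_Le1_for_LB_Proof}, which gives $\mathrm{kl}(\tfrac{1}{2}\pm\Delta,\tfrac{1}{2}\mp\Delta)\leq 4\Delta^{2}/(\tfrac{1}{4}-\Delta^{2})=O(\Delta^{2})$ uniformly in $\Delta\in(0,1/6)$, we obtain $\min_{k\in L(j)}1/D_{j,k}(\mathbf{P}^{n})=\Omega(1/\Delta^{2})$; summing over the $n-1$ suboptimal arms in Theorem~\ref{theorem:lower_bound_wo_ind_detailed_main_part}~(i) then delivers $\mathbb{E}_{\mathbf{P}^{n}}[\tau^{\Algo{A}}]=\Omega(n^{2}/(f(n)\Delta^{2})\ln(1/\delta))$, as claimed. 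For part~(ii), keep exactly the same combinatorial orientation but use the ternary distribution $(P^{n})^{(i,j)}_{\succ}=\tfrac{1}{3}+2\Delta$, $(P^{n})^{(i,j)}_{\cong}=(P^{n})^{(i,j)}_{\prec}=\tfrac{1}{3}-\Delta$ whenever $i$ beats $j$ (and the mirrored assignment otherwise). These probabilities sum to $1$ and are strictly positive for $\Delta<1/3$, the mode is always a strict preference, so $I(j)=\emptyset$ and the $\text{CP}(j),|L(j)|,d_{j}$ values are unchanged from part~(i). Via the $|I(j)|=0$ simplification (Corollary~\ref{corollary:lower_bound_wo_indifferences}), $C_{j}$ in Theorem~\ref{theorem:lower_bound_wo_ind_detailed_main_part}~(ii) collapses to the same $|L(j)|/(d_{j}+1)$ factor, and the two KL divergences in $D_{j,k}$ each merely swap a probability $\tfrac{1}{3}+2\Delta$ with a probability $\tfrac{1}{3}-\Delta$; part~(i) of Lemma~\ref{Le_Le1_for_LB_Proof} bounds each by $9\Delta^{2}/(\tfrac{1}{3}-\Delta)+9\Delta^{2}/(\tfrac{1}{3}+2\Delta)=O(\Delta^{2})$, so the same summation yields the claimed bound.

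The main obstacle is the combinatorial bookkeeping on the base tournament: exhibiting a near-regular orientation for every $n$ (standard, via cyclic or round-robin constructions) and verifying that augmenting it with arm $1$'s $m$ wins produces the asserted Copeland scores -- so that arm $1$ is the unique CW and, crucially, $d_{j}+1\ll|L(j)|$ uniformly over $j\neq 1$, activating the first branch of the $C_{j}$ maximum. Everything else is purely asymptotic: $C_{j}\cdot\min_{k}1/D_{j,k}=\Omega(n/(f(n)\Delta^{2}))$ per arm, and summing over $n-1$ arms delivers the $\Omega(n^{2})$ scaling. Minor checks are that $\Delta<1/6$ keeps $\mathrm{kl}$ and the ternary KL divergences bounded away from degeneracies, and that the probabilities constructed in part~(ii) satisfy the positivity hypothesis of Theorem~\ref{theorem:lower_bound_wo_ind_detailed_main_part}~(ii).
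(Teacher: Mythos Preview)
Your proposal is correct and follows essentially the same approach as the paper: construct a near-regular tournament on $\{2,\ldots,n\}$, give arm $1$ exactly $m=\lceil n/2+f(n)\rceil$ wins so that it is the unique Copeland winner, verify $|L(j)|=\Theta(n)$ and $d_j\leq f(n)+O(1)$ for every $j\neq 1$, bound the relevant KL terms by $O(\Delta^2)$ via Lemma~\ref{Le_Le1_for_LB_Proof}, and then sum the per-arm contributions from Theorem~\ref{theorem:lower_bound_wo_ind_detailed_main_part}. The only cosmetic difference is that the paper realises the near-regular base tournament via the parity rule $x\succ y\Leftrightarrow x+y$ even (for $2\leq x<y\leq n$) rather than your cyclic construction, and obtains the marginally sharper $d_j\leq f(n)+1$; both choices yield the same asymptotic conclusion.
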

    \begin{proof}
        Let $f$ and $\Delta$ be as stated above. We start with the proof of (i). By assumption on $f$ there exists $n_0 \in \N$ with $n-1-\lceil \frac{n}{2} + f(n) \rceil \geq 1$ and $\lfloor \frac{n-1}{2} \rfloor \geq f(n) +2$ for all $n\geq n_0$. For $n < n_0$, let $\mathbf{P}^{n}$ be an arbitrary allowed instance. For arbitrary but fixed $n\geq n_0$, fix a set $L_n \subseteq \mathcal{A}$ of size $|L_n| = n -1 - \lceil \frac{n}{2} + f(n) \rceil $ and define $\mathbf{P}^{n} = ((P^{n})_{\succ}^{(x,y)})_{x,y}$ via 
        \begin{equation*}
            (P^{n})_{\succ}^{(1,y)} \coloneqq \begin{cases} 1/2 + \Delta, \quad &\text{if } y \in \mathcal{A} \setminus L_n,\\
            1/2 - \Delta, \quad &\text{if } y \in L_n\end{cases}
        \end{equation*}
        for $2\leq y\leq n$ and
        \begin{equation*}
            (P^{n})_{\succ}^{(x,y)} \coloneqq \begin{cases} 1/2 + \Delta, \quad &\text{if } x+y \text{ is even},\\
            1/2 - \Delta,\quad &\text{if } x+y \text{ is odd}\end{cases}
        \end{equation*}
        for $2\leq x<y\leq n$. Then, $\mathbf{P}^{n}$ has no indifferences. Moreover, $\text{CP}(\mathbf{P}^{n},1) = n-1-|L_n| = \lceil \frac{n}{2} + f(n)\rceil$ and 
         $\text{CP}(\mathbf{P}^{n},j) \geq \lfloor \frac{n-1}{2} \rfloor - 1 $ for $j\in \mathcal{A} \setminus \{1\}$ hold, which shows $\mathcal{C}(\mathbf{P}^{n}) = \{1\}$ and $d_j(\mathbf{P}^{n}) = \text{CP}(\mathbf{P}^{n},1)-\text{CP}(\mathbf{P}^{n},j) \leq  f(n) +1$. By construction, $\lfloor \frac{n-1}{2} \rfloor \leq |L(\mathbf{P}^{n},j)| \leq \lceil \frac{n-1}{2} \rceil +1$ is fulfilled, and thus by choice of $n_0$ also $|L(\mathbf{P}^{n},j)| \geq \left\lfloor \frac{n-1}{2} \right\rfloor \geq f(n)+2 = d_j(\mathbf{P}^{n}) + 1$ holds.
         Using that Lemma~\ref{lemma:lower_bound_1} and $\Delta < 1/6$ imply
         \begin{align*}
            \kappa_{x,y}(\mathbf{P}) = \mathrm{kl}(P_{\succ}^{(x,y)},P_{\prec}^{(x,y)}) \leq \frac{(P_{\succ}^{(x,y)} - P_{\prec}^{(x,y)})^2}{P_{\prec}^{(x,y)}(1-P_{\prec}^{(x,y)})} = \frac{4\Delta^2}{(1/2-\Delta)(1/2+\Delta)} \leq 16\Delta^2,
         \end{align*}
         Thm.~\ref{theorem:lower_bound_wo_ind_detailed} yields
         \begin{align*}
             \exptd_{\mathbf{P}^{n}}\left[ \tau^{\Algo{A}}\right] &\geq \ln \frac{1}{2.4\delta} \sum\nolimits_{j\not= i^\ast} \frac{|L(\mathbf{P}^{n},j)|}{d_{j}(\mathbf{P}^{n})} \min_{z\in L(j)} \frac{1}{\kappa_{j,z}(\mathbf{P}^{n})} \\
             &\geq \frac{1}{16\Delta^{2}} \left(\ln \frac{1}{2.4\delta}\right) \sum_{j\not=i^\ast} \frac{\lfloor (n-1)/2 \rfloor}{f(n) +1}\\
             &\geq \frac{1}{32\Delta^{2}} \left(\ln \frac{1}{2.4\delta}\right) \sum_{j\not=i^\ast} \frac{n-1}{f(n)+1}\\
             &\geq \frac{1}{32\Delta^{2}} \left(\ln \frac{1}{2.4\delta}\right) \frac{(n-1)^2}{f(n)+1},
         \end{align*}
        which concludes the proof of (i).\\[0.5em]
        To prove (ii), define $n_0$ as before and fix allowed arbitrary $\mathbf{P}^{n}$ for $n< n_0$. For arbitrary but fixed $n\geq n_0$, fix again $L_n \subseteq \mathcal{A}$ of size $n-1-\lceil \frac{n}{2} + f(n) \rceil$ and define the instances $\mathbf{P}^{n} = ((P^{n})_{\succ}^{(x,y)})_{x,y}$ via 
        \begin{align*}
            (P^{n})_{\succ}^{(1,y)} &\coloneqq \begin{cases} 
            1/3 + 2\Delta, \quad &\text{if } y\in \mathcal{A} \setminus L_n,\\
            1/3 - \Delta, \quad &\text{if } y\in L_{n},
            \end{cases}\\
            (P^{n})_{\succ}^{(y,1)} &\coloneqq \begin{cases} 
            1/3 - \Delta, \quad &\text{if } y\in \mathcal{A} \setminus L_n,\\
            1/3 + 2\Delta, \quad &\text{if } y\in L_{n},
            \end{cases}
        \end{align*}
        for $2\leq y\leq n$ and 
        \begin{equation*}
            (P^{n})_{\succ}^{(x,y)} \coloneqq \begin{cases} 
            1/3 + 2\Delta, \quad &\text{if ($x+y$ is even and $x<y$) or ($x+y$ is odd and $x>y$)}\\
            1/3 - \Delta, \quad &\text{if ($x+y$ is odd and $x<y$) or ($x+y$ is even and $x>y$)}\\
            \end{cases}
        \end{equation*}
        for distinct $x,y\in \{2,\dots,n\}$.
        Then, $\mathbf{P}^{n}$ has indifferences and fulfills  $(P^{n})_{\cong}^{(x,y)} = 1/3-\Delta$ for any distinct $x,y\in \mathcal{A}$, which directly implies $I(j) = \emptyset$ for any $j\in \mathcal{A}$. 
        By construction, we see similarly as above $\text{CP}(\mathbf{P}^{n},1) = \lceil \frac{n}{2} + f(n)\rceil$, $\mathcal{C}(\mathbf{P}^{n}) = \{1\}$ and $d_{j}(\mathbf{P}^{n}) \leq f(n)+1$ 
        as well as $\lfloor \frac{n-1}{2} \rfloor \leq |L(\mathbf{P}^{n},j)|\leq \lceil \frac{n-1}{2} \rceil +1$ and $|L(\mathbf{P}^{n},j)| \geq  d_j(\mathbf{P}^{n}) + 1$ for $j\in \mathcal{A} \setminus \{1\}$. Regarding the construction of $\mathbf{P}^{n}$, Lemma \ref{lemma:lower_bound_1} implies due to $0<\Delta < 1/6$ the estimate
        \begin{align*}
            D_{x,y}(\mathbf{P}^{n}) &= \KL\left( \left( \frac{1}{3} + 2\Delta, \frac{1}{3}-\Delta, \frac{1}{3}-\Delta\right),  \left( \frac{1}{3}-\Delta , \frac{1}{3} + 2\Delta, \frac{1}{3}-\Delta\right) \right) \\
            &\leq \frac{(3\Delta)^{2}}{1/3 - \Delta} + \frac{(3\Delta)^2}{1/3 + 2\Delta} \leq 3^2(6 + 3)\Delta^2 = 81\Delta^2.
        \end{align*}
        With the use of Cor.~\ref{corollary:lower_bound_wo_indifferences} instead of Thm.~\ref{eq:thm_lower_bound_wo_ind_detailed}, following the same argumentation as in the proof of (i) thus lets us conclude 
        \begin{equation*}
            \exptd_{\mathbf{P}^{n}}\left[ \tau^{\Algo{A}}\right] 
            \geq \frac{1}{162\Delta^{2}} \left(\ln \frac{1}{2.4\delta}\right) \frac{(n-1)^2}{f(n)+1}.
        \end{equation*}
    \end{proof}

\clearpage

\section{Proof of Theorem \ref{theorem_pocowista}} \label{sec:proof_theorem_pocowista}

\textbf{Theorem 3.1.} \emph{	%
	%
	Let $\Algo{A}:= \Algo{POCOWISTA}.$
	For any dueling bandits problem with indifferences characterized by $\mathbf{P} = ((P_{\succ}^{(i,j)},P_{\cong}^{(i,j)}, P_{\prec}^{(i,j)}))_{i<j},$    such that there exists no pair $i,j \in \mathcal{A}$ with $i\neq j$ and $P_{\succ}^{(i,j)} = P_{\succ}^{(j,i)}= 1/3,$  it holds that
	\begin{align*}
		\prob\big(\hat{i}_{\Algo{A}} \in \mathcal{C}(\mathbf{P}) \mbox{ and } \tau^{\Algo{A}}(\mathbf{P}) \leq t(\mathbf{P},\delta) \big) 
		\geq 1- \delta,		
	\end{align*}
	where 
	\begin{align*}
		t(\mathbf{P},\delta) 
		= \sum_{i<j} t_0\big(  (P_{\succ}^{(i,j)}, P_{\cong}^{(i,j)},P_{\prec}^{(i,j)}), \nicefrac{\delta}{\binom{n}{2}} \big), \\
		t_0\big(  (p_1,p_2,p_3), \delta \big) 
		= \frac{c_1 p_{(1)}	\ln\left(	\sqrt{\frac{2\, c_2}{\delta}   } \frac{p_{(1)}	}{(p_{(1)}- p_{(2)})} 		\right)}{(p_{(1)}- p_{(2)})^2}, 
	\end{align*}
	$p_{(1)}\geq p_{(2)} \geq p_{(3)}$ is the order statistic of $p_1,p_2,p_3 \in [0,1]$ with $\sum_{i=1}^3 p_i =1$ and $c_1 = 194.07$ and $c_2 = 79.86.$
	}
\begin{proof}
Let $P$ be a categorical distribution with categories $c_1,c_2$ and $c_3$ having probabilities $p_1,p_2$ and $p_3,$ i.e., $P(c_i) =p_i$ for $i=1,2,3,$ such that $p_{(1)}>p_{(2)}\geq p_{(3)}.$
Theorem 9 in \cite{jain2022pac} states that running \Algo{PPR-1v1} with $\tilde \delta \in[0,1]$ as the desired error probability for identifying the mode of $P,$ leads to a sample complexity of at most
  $$t_0\big(  (p_1,p_2,p_3), \tilde \delta \big) 
  = \frac{c_1 p_{(1)}	\ln\left(	\sqrt{\frac{2\, c_2}{\tilde \delta}   } \frac{p_{(1)}	}{(p_{(1)}- p_{(2)})} 		\right)}{(p_{(1)}- p_{(2)})^2},$$
for identifying the mode with probability at least $1-\tilde \delta.$

In the worst case, \Algo{POCOWISTA} has to use \Algo{PPR-1v1} with an error probability of $\tilde \delta = \nicefrac{\delta}{\binom{n}{2}}$ for each ternary distribution $\tern,$ where $i<j.$
Recall that $\tern$ is a categorical distribution with three categories $c_1:= ``i\succ j'', c_2:= ``i \cong j''$ and $c_3:= ``i \prec j''$ having probabilities $P_{\succ}^{(i,j)}, P_{\cong}^{(i,j)}$ and $P_{\prec}^{(i,j)}.$
Moreover, by assumption each  $\tern$ has a unique mode, so that we can use Theorem 9 in \cite{jain2022pac}.
As for each run the probability of making an incorrect decision or exceeding $t_0\big(  (P_{\succ}^{(i,j)}, P_{\cong}^{(i,j)},P_{\prec}^{(i,j)}), \nicefrac{\delta}{\binom{n}{2}} \big)$ many samples is bounded by $\nicefrac{\delta}{\binom{n}{2}},$ the probability that the overall sample complexity of \Algo{POCOWISTA} exceeds 
$$
\sum_{i<j} t_0\big(  (P_{\succ}^{(i,j)}, P_{\cong}^{(i,j)},P_{\prec}^{(i,j)}), \nicefrac{\delta}{\binom{n}{2}} \big)$$
is bounded by $\sum_{i<j} \nicefrac{\delta}{\binom{n}{2}} = \delta$ by means of the union bound.

Next, as the modes of the ternary distributions are all correctly identified (with probability  $\nicefrac{\delta}{\binom{n}{2}}$), the score updates are all correct in the sense that
$$		\widehat{CP}(i) \leq CP(i) \leq \overline{CP}(i) \quad \forall i \in \mathcal{A} $$
holds with probability at least $1-\delta.$
Thus, the termination criterion of \Algo{POCOWISTA} implies that
$$	CP(\hat{i}_{\Algo{A}}) \geq \widehat{CP}(\hat{i}_{\Algo{A}}) \geq \overline{CP}(j) \geq CP(j) \, \forall j \in \mathcal{A}\setminus\{\hat{i}_{\Algo{A}}\} 	$$
holds with probability at least $1-\delta,$ so that $\hat{i}_{\Algo{A}}$ is an element of the Copeland set $\mathcal{C}(\mathbf{P}).$
\end{proof}

\clearpage

\section{Proof of Theorem \ref{theorem_tra_pocowista}}\label{sec:proof_theorem_tra_pocowista}

\textbf{Theorem 4.2.}\emph{
Let $\Algo{A}:= \Algo{TRA-POCOWISTA}.$
For any dueling bandits problem with indifferences as in Theorem \ref{theorem_pocowista} which in addition is transitive according to Definition \ref{def:transitivity},  it holds that
\begin{align*}
	\prob\big(\hat{i}_{\Algo{A}} \in \mathcal{C}(\mathbf{P}) \mbox{ and } \tau^{\Algo{A}}(\mathbf{P}) \leq \tilde t(\mathbf{P},\delta) \big) 
	\geq 1- \delta,		
\end{align*}
where 
\begin{center}
	$	\tilde t(\mathbf{P},\delta) 
	= \sum\limits_{t=1}^T t_0\big(  (P_{\succ}^{(i_t,j_t)}, P_{\cong}^{(i_t,j_t)},P_{\prec}^{(i_t,j_t)}), \nicefrac{\delta}{n} \big),$
\end{center}
$t_0$ is as in Theorem \ref{theorem_pocowista} and $T \leq n.$
}
\begin{proof}
	Following the lines of the proof of Theorem \ref{theorem_pocowista}, we only need to verify that \Algo{TRA-POCOWISTA}  runs for at most $n$ many rounds. 
	For this purpose, we only need show that the termination criterion of \Algo{TRA-POCOWISTA} (see line 3 in Algorithm \ref{alg:trapocowista}) is fulfilled after at most $n$ many rounds.
	If the modes of the ternary distributions are all correctly identified (with probability  $\nicefrac{\delta}{n}$) and transitivity as in Definition \ref{def:transitivity} holds, then the score updates are all correct in the sense that
	$$		\widehat{CP}(i) \leq CP(i) \leq \overline{CP}(i) \quad \forall i \in \mathcal{A} $$
	holds with probability at least $1-\delta.$
%
	
	For sake of convenience, define $\widehat{CP}_t(i)$ as the estimated Copeland score for arm $i \in \mathcal{A}$ \emph{before} the update in round $t$ is made (i.e., the value before line 7 in Algorithm \ref{alg:trapocowista}) and likewise $\overline{CP}_t(i)$.
	The score updates (Algorithm \ref{alg:Trans-Score-Update}) as well as the choice of $j_t$ (line 5 in Algorithm \ref{alg:trapocowista}) imply that
	\begin{align} \label{help_ineq}
			\widehat{CP}_t(j_t) \geq \widehat{CP}_{t-1}(j_{t-1}) + \widehat{CP}_{t-1}(i_{t-1})	+ 1/2	
	\end{align}
	for any round $t.$ 
	Indeed, if one arm dominates the other in round $t-1,$ then it's estimated Copeland score is updated to $\widehat{CP}_{t-1}(j_{t-1}) + \widehat{CP}_{t-1}(i_{t-1})	+ 1,$ while in case of an indifference the updated value corresponds to the right-hand side of \eqref{help_ineq}.
	As $j_t$ is (one of) the arm(s) with largest estimated Copeland score, it has consequently an estimated Copeland score in round $t$ of at least the right-hand side of \eqref{help_ineq}.
	Further, it holds that $ \widehat{CP}_{2}(j_{2}) \geq 1/2.$
	If \Algo{TRA-POCOWISTA} has not terminated after round $n-1$ it must hold that there exists some round $s\in\{1,\ldots,n-1\}$ such that $ \widehat{CP}_{s+1}(i_{s})\geq 1/2,$ as otherwise the ``second arm'' $j_t$ has dominated in each round the ``first arm'' $i_t$ and has stayed the same for all rounds, in which case \Algo{TRA-POCOWISTA} terminates and returns $j_t.$
	Combining this with \eqref{help_ineq} it holds that $	\widehat{CP}_{n+1}(j_n) \geq n/2.$
	We distinguish now the three different cases for the outcome between the compared arms $i_n$ and $j_n$ in round $n$ (i.e., line 6 in Algorithm \ref{alg:trapocowista}) and show that in each case  \Algo{TRA-POCOWISTA} terminates.
	
	\emph{Case 1:} $k=1,$ i.e., $i_n$ dominated $j_n.$
	
	This implies that $\widehat{CP}_{n+1}(i_n) \geq \widehat{CP}_{n+1}(j_n)$ and in particular  $\widehat{CP}_{n+1}(i_n) \geq \widehat{CP}_{n}(i_n).$
	By choice of $i_n$ it holds that for any $i\in \mathcal{A}$
	\begin{align*}
		\overline{CP}_{n+1}(i) \leq \overline{CP}_{n}(i) \leq \overline{CP}_{n}(i_n) = n - 	\widehat{CP}_{n}(i_n) \leq n - 	\widehat{CP}_{n+1}(i_n) \leq n/2 \leq 	\widehat{CP}_{n+1}(i_n).
	\end{align*} 
	Thus, the termination criterion of \Algo{TRA-POCOWISTA} (see line 3 in Algorithm \ref{alg:trapocowista}) is fulfilled after round $n,$  as $i_n$ fulfills the criterion.

	\emph{Case 2:} $k=2,$ i.e., and indifference between $i_n$ and $j_n.$
	
	This implies that $\widehat{CP}_{n+1}(i_n) = \widehat{CP}_{n+1}(j_n)$ and in particular  $\widehat{CP}_{n+1}(i_n) \geq \widehat{CP}_{n}(i_n).$
	By choice of $i_n$ it holds that for any $i\in \mathcal{A}$
	\begin{align*}
		\overline{CP}_{n+1}(i) 
        \leq \overline{CP}_{n}(i) 
        \leq \overline{CP}_{n}(i_n) = n - 	\widehat{CP}_{n}(i_n) 
        &\leq n - 	\widehat{CP}_{n+1}(i_n) \\
        &= n - 	\widehat{CP}_{n+1}(j_n)  
        \leq n/2 \leq 	\widehat{CP}_{n+1}(j_n).
	\end{align*} 
	Thus, the termination criterion of \Algo{TRA-POCOWISTA} (see line 3 in Algorithm \ref{alg:trapocowista}) is fulfilled after round $n,$  as $j_n$ fulfills the criterion.

	\emph{Case 3:} $k=3,$ i.e., $j_n$ dominated $i_n.$
	
	In this case, the potential Copeland score of $i_n$ in round $n+1$ is at most $n - 	\widehat{CP}_{n+1}(j_n).$
	By choice of $i_n$ it holds that for any $i\in \mathcal{A}$
	\begin{align*}
		\overline{CP}_{n+1}(i) \leq \overline{CP}_{n}(i) \leq \overline{CP}_{n}(i_n) \leq n - 	\widehat{CP}_{n+1}(j_n)   \leq n/2 \leq 	\widehat{CP}_{n+1}(j_n).
	\end{align*} 
	Thus, the termination criterion of \Algo{TRA-POCOWISTA} (see line 3 in Algorithm \ref{alg:trapocowista}) is fulfilled after round $n,$  as $j_n$ fulfills the criterion.

\end{proof}
\end{document}